\newcommand{\norm}[1]{\left\|#1\right\|}
\newtheorem{lemma}{Lemma}
\newtheorem{theorem}[lemma]{Theorem}
\newtheorem{proposition}[lemma]{Proposition}
\newtheorem{corollary}[lemma]{Corollary}
\newtheorem{remark}[lemma]{Remark}
\newtheorem*{theorem*}{Theorem}   
\newtheorem*{proposition*}{Proposition}   
\newtheorem*{corollary*}{Corollary}   
\ifcvprfinal\pagestyle{empty}\fi
\begin{document}

\title{A Sufficient Condition for Convergences of Adam and RMSProp}
\author{ \vspace{-0.8cm}
Fangyu Zou{$^\dag$$^*$}, Li Shen$^\ddag$\thanks{The first two authors contribute equally.
$^\dag$This work was partially done when Fangyu Zou was a research intern at Tencent AI Lab, China.}~~, Zequn Jie$^\ddag$, Weizhong Zhang$^\ddag$, Wei Liu$^\ddag$

$^\ddag$Tencent AI Lab\qquad  $^\dag$Stony Brook University

{\tt\small fangyu.zou@stonybrook.edu,\  mathshenli@gmail.com,\  zequn.nus@gmail.com,
zhangweizhongzju@gmail.com,\  wl2223@columbia.edu
}
\vspace{-0.2cm}
}

\maketitle
\thispagestyle{empty}

\begin{abstract}
Adam and RMSProp are two of the most influential adaptive stochastic algorithms for training deep neural networks, which have been pointed out to be divergent even in the convex setting via a few simple counterexamples.
Many attempts, such as decreasing an adaptive learning rate, adopting a big batch size, incorporating a temporal decorrelation technique, seeking an analogous surrogate, \textit{etc.}, have been tried to promote Adam/RMSProp-type algorithms to converge.
In contrast with existing approaches, we introduce an alternative easy-to-check sufficient condition, which merely depends on the parameters of the base learning rate and combinations of historical second-order moments, to guarantee the global convergence of generic Adam/RMSProp for solving large-scale non-convex stochastic optimization.
Moreover, we show that the convergences of several variants of Adam, such as AdamNC, AdaEMA, \textit{etc.}, can be directly implied via the proposed sufficient condition in the non-convex setting.
In addition, we illustrate that Adam is essentially a specifically weighted AdaGrad with exponential moving average momentum, which provides a novel perspective for understanding Adam and RMSProp.
This observation coupled with this sufficient condition gives much deeper interpretations on their divergences.
At last, we validate the sufficient condition by applying Adam and RMSProp to tackle a certain counterexample and train deep neural networks.
Numerical results are exactly in accord with our theoretical analysis.
\vspace{-0.2cm}
\end{abstract}

\section{Introduction}\label{introduction}
Large-scale non-convex stochastic optimization \cite{bottou2018optimization}, covering a slew of applications in statistics and machine learning \cite{jain2017non,bottou2018optimization} such as learning a latent variable from massive data whose probability density distribution is unknown, takes the following generic formulation:
\begin{align}\label{minimization}
\min_{\bm{x} \in \mathbb{R}^{d}}\ f(\bm{x}) = \mathbb{E}_{\xi\sim \mathbb{P}}\,\big[\widetilde{f}(\bm{x},\xi)\big],
\end{align}
where $f(\bm{x})$ is a non-convex function and $\xi$ is a random variable satisfying an unknown distribution $\mathbb{P}$.

Due to the uncertainty of distribution $\mathbb{P}$, the batch gradient descent algorithm \cite{bertsekas1999nonlinear} is impractical to employ full gradient $\bm{\nabla}\!{f}(\bm{x})$ to solve problem \eqref{minimization}.
Alternatively, a compromised approach to handle this difficulty is to use an unbiased stochastic estimate of $\bm{\nabla}\!{f}(\bm{x})$, denoted as $g(\bm{x},\xi)$, which leads to the stochastic gradient descent (SGD) algorithm \cite{robbins1985stochastic}. Its coordinate-wise version is defined as:
\begin{align}\label{sgd}
\bm{x}_{t+1,k} = \bm{x}_{t,k} - \eta_{t,k}\bm{g}_{t,k}(\bm{x}_{t},\xi_{t}),
\end{align}
for $ k =1,2,\ldots,d$, where $\eta_{t,k} \ge 0$ is the learning rate of the $k$-th component of stochastic gradient  $\bm{g}(\bm{x}_{t},\xi_{t})$ at the $t$-th iteration. A sufficient condition \cite{robbins1985stochastic} to ensure the global convergence of vanilla SGD \eqref{sgd} is to require $\eta_{t}$ to meet
\begin{align}\label{surfficient-sgd}
\textstyle\sum\limits_{t=1}^{\infty}\|\eta_{t}\| = \infty\ {\rm\ and\ } \ \sum\limits_{t=1}^{\infty}\|\eta_{t}\|^2 < \infty.
\end{align}
Although the vanilla SGD algorithm with learning rate $\eta_{t}$ satisfying condition \eqref{surfficient-sgd} does converge, its empirical performance could be still stagnating,  since it is difficult to tune an effective learning rate $\eta_{t}$ via condition \eqref{surfficient-sgd}.

To further improve the empirical performance of SGD, a large variety of adaptive SGD algorithms, including AdaGrad \cite{duchi2011adaptive}, RMSProp \cite{hinton2012neural}, Adam \cite{kingma2014adam}, Nadam \cite{dozat2016incorporating}, \textit{etc.}, have been proposed to automatically tune the learning rate $\eta_{t}$ by using second-order moments of historical stochastic gradients.
Let $\bm{v}_{t,k}$ and $\bm{m}_{t,k}$ be the linear combinations of the historical second-order moments $(\bm{g}^2_{1,k},\bm{g}^2_{2,k},\cdots,\bm{g}^2_{t,k})$ and stochastic gradient estimates $(\bm{g}_{1,k},\bm{g}_{2,k},\cdots,\bm{g}_{t,k})$, respectively.
Then, the generic iteration scheme of these adaptive SGD algorithms \cite{Reddi2018on,chen2018convergence} is summarized as
\begin{equation}\label{generic-adaptive-sgd}
\!\! \bm{x}_{t+1,k}\!= \bm{x}_{t,k} - \eta_{t,k}\bm{m}_{t,k},\ {\rm\ with\ }\eta_{t,k}\!= {\alpha_{t}}/{\sqrt{\bm{v}_{t,k}}},\!\!
\end{equation}
for $k =1,2,\ldots,d$, where $\alpha_{t} >0 $ is called base learning rate and it is independent of stochastic gradient estimates $(\bm{g}_{1,k},\bm{g}_{2,k},\cdots,\bm{g}_{t,k})$ for all $t\ge 1$.
Although RMSProp, Adam, and Nadam work well for solving large-scale convex and non-convex optimization problems such as training deep neural networks, they have been pointed out to be divergent in some scenarios via convex counterexamples \cite{Reddi2018on}.
This finding thoroughly destroys the fluke of a direct use of these algorithms without any further assumptions or corrections.
Recently, developing sufficient conditions to guarantee global convergences of Adam and RMSProp-type algorithms has attracted much attention from both machine learning and optimization communities. The existing successful attempts can be divided into four categories:

\smallskip
\noindent
{\bf (C1) Decreasing a learning rate.}\ ~
Reddi \etal \cite{Reddi2018on} have declared that the core cause of divergences of Adam and RMSProp is largely controlled by the difference between the two adjacent learning rates, \textit{i.e.},
\begin{equation}\label{Gamma_t}
\Gamma_{t} = {1}/{\bm{\eta}_{t}}-{1}/{\bm{\eta}_{t-1}}= {\sqrt{\bm{v}_{t}}}/{\alpha_{t}}-{\sqrt{\bm{v}_{t-1}}}/{\alpha_{t-1}}.
\end{equation}
Once positive definiteness of $\Gamma_{t}$ is violated, Adam and RMSProp may suffer from divergence \cite{Reddi2018on}.
Based on this observation, two variants of Adam called AMSGrad and AdamNC have been proposed with convergence guarantees in both the convex \cite{Reddi2018on} and non-convex \cite{chen2018convergence} stochastic settings by requiring $\Gamma_{t}\succ 0$.
In addition, Padam \cite{zhou2018convergence} extended from AMSGrad has been proposed to contract the generalization gap in training deep neural networks, whose convergence has been ensured by requiring $\Gamma_{t}\succ 0$.
In the strongly convex stochastic setting, by using the long-term memory technique developed in \cite{Reddi2018on}, Huang \etal \cite{huang2018nostalgic} have proposed NosAdam by attaching more weights on historical second-order moments to ensure its convergence.
Prior to that, the convergence rate of RMSProp \cite{mukkamala2017variants} has already been established in the convex stochastic setting by employing similar parameters to those of AdamNC \cite{Reddi2018on}.

\smallskip
\noindent
{\bf(C2) Adopting a big batch size.}\ ~ Basu \etal  \cite{basu2018convergence}, for the first time, showed that deterministic Adam and RMSProp with original iteration schemes are actually convergent by using full-batch gradient.
On the other hand, both Adam and RMSProp can be reshaped as specific signSGD-type algorithms \cite{balles18aDissecting,bernstein2018signSGG} whose $\mathcal{O}(1/\sqrt{T})$ convergence rates have been provided in the non-convex stochastic setting by setting batch size as large as the number of maximum iterations \cite{bernstein2018signSGG}.
Recently, Zaheer \etal \cite{Zaheer2018Adaptive} have established $\mathcal{O}(1/\sqrt{T})$ convergence rate of original Adam directly in the non-convex stochastic setting by requiring the batch size to be the same order as the number of maximum iterations.
We comment that this type of requirement is impractical when Adam and RMSProp are applied to tackle large-scale problems like \eqref{minimization}, since these approaches cost a huge number of computations to estimate big-batch stochastic gradients in each iteration.

\smallskip
\noindent
{\bf(C3) Incorporating a temporal decorrelation.}\ ~ By exploring the structure of the convex counterexample in \cite{Reddi2018on}, Zhou \etal \cite{zhou2018adashift} have pointed out that the divergence of RMSProp is fundamentally caused by the unbalanced learning rate rather than the absence of $\Gamma_{t}\succ 0$.
Based on this viewpoint, Zhou \etal \cite{zhou2018adashift} have proposed AdaShift by incorporating a temporal decorrelation technique to eliminate the inappropriate correlation between $\bm{v}_{t,k}$ and the current second-order moment $\bm{g}_{t,k}^2$,
in which the adaptive learning rate $\eta_{t,k}$ is required to be independent of $\bm{g}^2_{t,k}$.
However, convergence of AdaShift in \cite{zhou2018adashift} was merely restricted to RMSProp for solving the convex counterexample in \cite{Reddi2018on}.

\smallskip
\noindent
{\bf (C4) Seeking an analogous surrogate.}\ ~Due to the divergences of Adam and RMSProp \cite{Reddi2018on}, Zou \etal \cite{zou2018convergence} recently proposed a class of new surrogates called AdaUSM to approximate Adam and RMSProp by integrating weighted AdaGrad with unified heavy ball and Nesterov accelerated gradient momentums.
Its $\mathcal{O}(\log{(T)}/\sqrt{T})$ convergence rate has also been provided in the non-convex stochastic setting by requiring a non-decreasing weighted sequence.
Besides, many other adaptive stochastic algorithms without combining momentums, such as AdaGrad \cite{ward2018adagrad,li2018convergence} and stagewise AdaGrad \cite{chen2018universal}, have been guaranteed to be convergent and work well in the non-convex stochastic setting.

In contrast with the above four types of modifications and restrictions, we introduce an alternative easy-to-check sufficient condition (abbreviated as {\bf (SC)}) to guarantee the global convergences of original Adam and RMSProp.
The proposed {\bf (SC)} merely depends on the parameters in estimating $\bm{v}_{t,k}$ and base learning rate $\alpha_{t}$. {\bf (SC)} neither requires the positive definiteness of $\Gamma_{t}$ like {\bf(C1)} nor needs the batch size as large as the same order as the number of maximum iterations like {\bf(C2)} in both the convex and non-convex stochastic settings.
Thus, it is easier to verify and more practical compared with {\bf (C1)}-{\bf(C3)}.
On the other hand, {\bf (SC)} is partially overlapped with {\bf(C1)} since the proposed {\bf (SC)} can cover AdamNC \cite{Reddi2018on}, AdaGrad with exponential moving average  (AdaEMA) momentum \cite{chen2018convergence}, and RMSProp \cite{mukkamala2017variants} as instances whose convergences are all originally motivated by requiring the positive definiteness of $\Gamma_{t}$.
While, based on {\bf (SC)}, we can directly derive their global convergences in the non-convex stochastic setting as byproducts without checking the positive definiteness of $\Gamma_{t}$ step by step.
Besides, {\bf (SC)} can serve as an alternative explanation on divergences of original Adam and RMSProp, which are possibly due to incorrect parameter settings for accumulating the historical second-order moments rather than the unbalanced learning rate caused by the inappropriate correlation between $\bm{v}_{t,k}$ and $\bm{g}^2_{t,k}$ like {\bf(C3)}.
In addition, AdamNC and AdaEMA are convergent under {\bf(SC)}, but violate {\bf(C3)} in each iteration.

Moreover, by carefully reshaping the iteration scheme of Adam, we obtain a specific weighted AdaGrad with exponential moving average momentum, which extends the weighted AdaGrad with heavy ball momentum and Nesterov accelerated gradient momentum \cite{zou2018convergence} in two aspects: the new momentum mechanism and the new base learning rate setting provide a new perspective for understanding Adam and RMSProp.
At last, we experimentally verify {\bf (SC)} by applying Adam and RMSProp with different parameter settings to solve the counterexample \cite{Reddi2018on} and train deep neural networks including LeNet \cite{lecun1998gradient} and ResNet \cite{he2016deep}.
In summary, the contributions of this work are five-fold:
\begin{itemize}
\item[(1)] We introduce an easy-to-check sufficient condition to ensure the global convergences of original Adam and RMSProp in the non-convex stochastic setting. Moreover, this sufficient condition is distinctive from the existing conditions {\bf(C1)}-{\bf(C4)} and is easier to verify.
\item[(2)] We reshape Adam as weighted AdaGrad with exponential moving average momentum, which provides a new perspective for understanding Adam and RMSProp and also complements AdaUSM in \cite{zou2018convergence}.
\item[(3)] We provide a new explanation on the divergences of original Adam and RMSProp, which are possibly due to an incorrect parameter setting of the combinations of historical second-order moments based on {\bf (SC)}.
\item[(4)] We find that the sufficient condition extends the restrictions of RMSProp \cite{mukkamala2017variants} and covers many convergent variants of Adam, \eg, AdamNC, AdaGrad with momentum, \textit{etc.} Thus, their convergences in the non-convex stochastic setting naturally hold.
\item[(5)] We conduct experiments to validate the sufficient condition for the convergences of Adam/RMSProp. The experimental results match our theoretical results.
\end{itemize}

\section{Generic Adam}

For readers' convenience, we first clarify a few necessary notations used in the forthcoming Generic Adam.
We denote $\bm{x}_{t,k}$ as the $k$-th component of $\bm{x}_{t}\in\mathbb{R}^{d}$, and $\bm{g}_{t,k}$ as the $k$-th component of the stochastic gradient at the $t$-th iteration,
and call $\alpha_{t} > 0$ base learning rate and $\beta_{t}$ momentum parameter, respectively. Let $\epsilon>0$ be a sufficiently small constant. Denote $\bm{0}=(0,\cdots,0)^{\top} \in \mathbb{R}^{d}$, and $\bm{\epsilon}=(\epsilon,\cdots,\epsilon)^{\top} \in \mathbb{R}^{d}$.
All operations, such as multiplying, dividing, and taking square root, are executed in the coordinate-wise sense.

\begin{algorithm}[H]
\caption{\ Generic Adam}
\label{Adam}
\begin{algorithmic}[1]
   \STATE {\bf Parameters:} Choose $\{\alpha_t\}$, $\{\beta_t\}$, and $\{\theta_t\}$. Choose $\bm{x}_1 \in \mathbb{R}^d$ and set initial values $\bm{m}_0\!=\!\bm{0}$ and $\bm{v}_0=\bm{\epsilon}$.
   \FOR{$t= 1,2,\ldots,T$}
    \STATE Sample a stochastic gradient $\bm{g}_t$;
        \FOR {$k=1,2,\ldots,d$}
        \STATE $\bm{v}_{t,k} = \theta_t \bm{v}_{t-1,k} + (1 - \theta_t) \bm{g}_{t,k}^2$;
        \STATE $\bm{m}_{t,k} = \beta_t \bm{m}_{t-1,k} + (1 - \beta_t) \bm{g}_{t,k}$;
        \STATE $\bm{x}_{t+1,k} = \bm{x}_{t,k} - {\alpha_t \bm{m}_{t,k}}/\sqrt{\bm{v}_{t,k}}$;
        \ENDFOR
   \ENDFOR
 \end{algorithmic}
 \end{algorithm}
\vspace{-0.2cm}
Generic Adam covers RMSProp by setting $\beta_{t} =0$. Moreover, it covers Adam with a bias correction \cite{kingma2014adam} as follows:

\begin{remark}
The original Adam with the bias correction \cite{kingma2014adam} takes constant parameters $\beta_{t}=\beta$ and $\theta_{t}=\theta$. The iteration scheme is written as
$\bm{x}_{t+1} \!=\! \bm{x}_{t} \!-\! \widehat{\alpha}_{t} \frac{\widehat{\bm{m}}_{t}}{\sqrt{\widehat{\bm{v}}_{t}}}$, with $\widehat{\bm{m}}_{t} \!=\! \frac{\bm{m}_{t}}{1-\beta^{t}}$ and $\widehat{\bm{v}}_{t} \!=\! \frac{\bm{v}_{t}}{1-\theta^{t}}$. Let $\alpha_t = \widehat{\alpha}_t\frac{\sqrt{1-\theta^t}}{1-\beta^t}$. Then, the above can be rewritten as $\bm{x}_{t+1} = \bm{x}_{t} - {\alpha_t \bm{m}_{t}}/\sqrt{\bm{v}_{t}}$. Thus, it is equivalent to taking constant $\beta_t$, constant $\theta_t$, and new base learning rate $\alpha_t$ in Generic Adam.
\end{remark}

\subsection{Weighted AdaGrad Perspective}

Now we show that Generic Adam can be reformulated as a new type of weighted AdaGrad algorithms with exponential moving average momentum (Weighted AdaEMA).
\begin{algorithm}[H]
\caption{\ Weighted AdaEMA}
\label{Weithed AdamEMA}
\begin{algorithmic}[1]
   \STATE {\bf Parameters:} Choose parameters $\{\alpha_t\}$, momentum factors $\{\beta_t\}$, and weights $\{w_t\}$.
   Set $W_0 = 1$, $\bm{m}_0\!=\!\bm{0}$, $\bm{V}_0=\bm{\epsilon}$, and $\bm{x}_1 \in \mathbb{R}^n$.
   \FOR{$t= 1,2,\ldots,T$}
    \STATE Sample a stochastic gradient $\bm{g}_t$;
        \STATE $W_t = W_{t-1} + w_t$; 
        \FOR {$k=1,2,\ldots,d$}
        \STATE $\bm{V}_{t,k} = \bm{V}_{t-1,k} + w_t\bm{g}_{t,k}^2$; 
        \STATE $\bm{m}_{t,k} = \beta_t \bm{m}_{t-1,k} + (1 - \beta_t) \bm{g}_{t,k}$;
        \STATE $\bm{x}_{t+1,k} = \bm{x}_{t,k} - \alpha_t \bm{m}_{t,k}/\sqrt{\bm{V}_{t,k}/{W_t}}$;
        \ENDFOR
   \ENDFOR
 \end{algorithmic}
 \end{algorithm}
\begin{remark}
The Weighted AdaEMA is a natural generalization of the AdaGrad algorithm. The classical AdaGrad is to take the weights $w_t = 1$, the momentum factors $\beta_t = 0$, and the parameters $\alpha_t = \eta/\sqrt{t+1}$ for constant $\eta$.
\end{remark}

The following proposition states the equivalence between Generic Adam and  Weighted AdaEMA.
\begin{proposition}\label{equivalence-theorem}
Algorithm \ref{Adam} and Algorithm \ref{Weithed AdamEMA} are equivalent.
\end{proposition}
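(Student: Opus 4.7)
The plan is to exhibit an explicit bijection between the parameter sequences $\{\theta_t\}$ of Generic Adam and the weight sequences $\{w_t\}$ of Weighted AdaEMA, and then verify by induction that the running quantities coincide so that the per-iteration updates match term by term. Since the momentum recursion for $\bm{m}_t$ and the iterate update $\bm{x}_{t+1,k} = \bm{x}_{t,k} - \alpha_t \bm{m}_{t,k}/\sqrt{\cdot}$ are formally identical in the two algorithms, the entire content of the proposition reduces to showing that the denominator $\sqrt{\bm{v}_{t,k}}$ appearing in Algorithm~\ref{Adam} equals $\sqrt{\bm{V}_{t,k}/W_t}$ appearing in Algorithm~\ref{Weithed AdamEMA} under the correspondence.

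First I would guess the correspondence by matching the two recursions. Rewriting the Weighted AdaEMA accumulator as
\begin{equation*}
\frac{\bm{V}_{t,k}}{W_t} \;=\; \frac{W_{t-1}}{W_t}\cdot\frac{\bm{V}_{t-1,k}}{W_{t-1}} \;+\; \frac{w_t}{W_t}\,\bm{g}_{t,k}^2,
\end{equation*}
and comparing with $\bm{v}_{t,k} = \theta_t \bm{v}_{t-1,k} + (1-\theta_t)\bm{g}_{t,k}^2$, the natural identification is $\theta_t = W_{t-1}/W_t$ and $1-\theta_t = w_t/W_t$, which is automatically consistent because $W_t = W_{t-1}+w_t$. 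Solving either way gives the explicit map: given $\{\theta_t\}\subset(0,1)$, set $w_t = W_{t-1}(1-\theta_t)/\theta_t$ with $W_0=1$, so $W_t = \prod_{s=1}^{t}\theta_s^{-1}$; conversely, given $\{w_t\}$ with $W_0=1$, set $\theta_t = W_{t-1}/W_t$.

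Next I would verify the claim $\bm{v}_{t,k} = \bm{V}_{t,k}/W_t$ by induction on $t$. The base case uses the initializations $\bm{v}_0 = \bm{\epsilon}$, $\bm{V}_0=\bm{\epsilon}$, $W_0=1$, which give $\bm{V}_0/W_0 = \bm{\epsilon} = \bm{v}_0$. For the inductive step, assuming $\bm{v}_{t-1,k} = \bm{V}_{t-1,k}/W_{t-1}$ and using the display above together with $\theta_t = W_{t-1}/W_t$, one obtains $\bm{V}_{t,k}/W_t = \theta_t\bm{v}_{t-1,k} + (1-\theta_t)\bm{g}_{t,k}^2 = \bm{v}_{t,k}$. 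Taking square roots and substituting into the iterate update then shows the two algorithms produce identical $\bm{x}_{t+1}$ at every step, provided the $\bm{m}_t$ recursions agree, which they do verbatim.

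I do not expect a genuine obstacle here; this is essentially a bookkeeping statement. The only point that deserves care is the initialization: the choice $W_0 = 1$ in Algorithm~\ref{Weithed AdamEMA} is exactly what makes the $\bm{\epsilon}$ initializations match, and the inductive identity would fail if $W_0$ were chosen otherwise. I would make this mild subtlety explicit so the reader sees that the equivalence is both directions (each $\{\theta_t\}$ determines a unique $\{w_t\}$ with $W_0=1$, and vice versa), and that the base learning rates $\alpha_t$ and momentum parameters $\beta_t$ need no rescaling at all.
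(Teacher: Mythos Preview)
Your proposal is correct and follows essentially the same approach as the paper: both arguments establish the bijection $\theta_t = W_{t-1}/W_t$ (equivalently $w_t = (1-\theta_t)\prod_{i=1}^t\theta_i^{-1}$) and then verify that the two update denominators coincide. The only cosmetic difference is that the paper unrolls the recursions into closed-form expressions for $\bm{v}_t$ and $\bm{V}_t/W_t$ and matches the coefficients directly, whereas you prove $\bm{v}_{t,k}=\bm{V}_{t,k}/W_t$ by a one-line induction; both are equally valid and the identification of parameters is identical.
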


\noindent
{\bf The divergence issue of Adam/RMSProp.}\ ~When $\theta_{t}$ is taken constant, \textit{i.e.}, $\theta_{t} = \theta$, Reddi \etal \cite{Reddi2018on} have  pointed out that Adam and RMSProp ($\beta_{t} =0$) can be divergent even in the convex setting. They conjectured that the divergence is possibly due to the uncertainty of positive definiteness of $\Gamma_{t}$ in Eq.~\eqref{Gamma_t}. This idea has motivated many new convergent variants of Adam by forcing $\Gamma_{t} \succ 0$. Recently, Zhou \etal \cite{zhou2018adashift} further argued that the nature of divergences of Adam and RMSProp is possibly due to the unbalanced learning rate $\eta_{t}$ caused by the inappropriate correlation between $v_{t,k}$ and $g^2_{t,k}$ by studying the counterexample in \cite{Reddi2018on}. However, this explanation can be  violated by many existing convergent Adam-type algorithms such as AdamNC, NosAdam \cite{huang2018nostalgic}, \textit{etc.} So far, there is no satisfactory explanation for the core reason of the divergence issue. We will provide more insights in Section \ref{insights-for-divergence} based on our theoretical analysis.

\section{Main Results}

In this section, we characterize the upper-bound of gradient residual of problem \eqref{minimization} as a function of parameters $(\theta_t,\alpha_t)$. Then the convergence rate of Generic Adam is derived directly by specifying appropriate parameters $(\theta_{t},\alpha_{t})$. Below, we state the necessary assumptions that are commonly used for analyzing the convergence of a stochastic algorithm for non-convex problems:
\begin{description}
\item[(A1)] The minimum value of problem \eqref{minimization} is lower-bounded, \textit{i.e.}, $f^* = \min_{\bm{x} \in \mathbb{R}^{d}}\ f(\bm{x}) > -\infty$;
\item[(A2)] The gradient of $f$ is $L$-Lipschitz continuous,
\textit{i.e.}, $\|\bm{\nabla}\!f(\bm{x})-\bm{\nabla}\!f(\bm{y})\| \le L\|\bm{x}-\bm{y}\|,\ \forall \bm{x},\bm{y}\in \mathbb{R}^{d}$;
\item[(A3)] The stochastic gradient $\bm{g}_{t}$ is an unbiased estimate, \textit{i.e.}, $\mathbb{E}\,[\bm{g}_{t}] = \bm{\nabla}\!{f}_{t}(\bm{x}_{t})$;
\item[(A4)] The second-order moment of stochastic gradient $\bm{g}_{t}$ is uniformly upper-bounded, \textit{i.e.}, $\mathbb{E}\,\|\bm{g}_{t}\|^2 \le G$.
\end{description}

To establish the upper-bound, we also suppose that the parameters $\{\beta_t\}$, $\{\theta_t\}$, and $\{\alpha_t\}$ satisfy the restrictions:
\begin{description}
\item[(R1)] The parameters $\{\beta_t\}$ satisfy $0\leq \beta_t \leq \beta < 1$ for all $t$ for some constant $\beta$;
\item[(R2)] The parameters $\{\theta_t\}$ satisfy $0 < \theta_t < 1$ and $\theta_t$ is non-decreasing in $t$ with $\theta := \lim_{t\to\infty}\theta_t > \beta^2$;
\item[(R3)] The parameters $\{\alpha_t\}$ satisfy that $\chi_t := \frac{\alpha_t}{\sqrt{1-\theta_t}}$ is ``almost" non-increasing in $t$, by which we mean that there exist a non-increasing sequence $\{a_t\}$ and a positive constant $C_0$ such that $a_t \leq \chi_t \leq C_0 a_t$.
\end{description}

The restriction (R3) indeed says that $\chi_t$ is the product between some non-increasing sequence $\{a_t\}$ and some bounded sequence. This is a slight generalization of $\chi_t$ itself being non-decreasing. If $\chi_t$ itself is non-increasing, we can then take $a_t \!=\! \chi_t$ and $C_0 \!=\! 1$. For most of the well-known Adam-type methods, $\chi_t$ is indeed non-decreasing, for instance, for AdaGrad with EMA momentum we have $\alpha_t = \eta/\sqrt{t}$ and $\theta_t \!=\! 1 - 1/t$, so $\chi_t = \eta$ is constant; for Adam with constant $\theta_t \!=\! \theta$ and non-increasing $\alpha_t$ (say $\alpha_t \!=\! \eta/\sqrt{t}$ or $\alpha_t \!=\! \eta$), $\chi_t \!=\! \alpha_t/\sqrt{1-\theta}$ is non-increasing.  The motivation, instead of $\chi_t$ being decreasing, is that it allows us to deal with the bias correction steps in Adam \cite{kingma2014adam}.

We fix a positive constant $\theta' >0$\footnote{In the special case that $\theta_t = \theta$ is constant, we can directly set $\theta' = \theta$.} such that $\beta^2 < \theta' < \theta$. Let $\gamma := {\beta^2}/{\theta'} < 1$ and
\begin{equation}\label{Constant-C1}
C_1 := \textstyle\prod_{j=1}^N \big(\frac{\theta_j}{\theta'}\big),
\end{equation}
where $N$ is the maximum of the indices $j$ with $\theta_j < \theta'$. The finiteness of $N$ is guaranteed by the fact that $\lim_{t\to\infty} \theta_t = \theta > \theta'$. When there are no such indices, \textit{i.e.}, $\theta_1 \geq \theta'$, we take $C_1 = 1$ by convention. In general, $C_1 \leq 1$.
Our main results on estimating gradient residual state as follows:
\begin{theorem}\label{convergence_in_expectation}
Let $\{\bm{x}_t\}$ be a sequence generated by Generic Adam for initial values $\bm{x}_1$, $\bm{m}_0 =\bm{0}$, and $\bm{v}_0 =\bm{\epsilon}$. Assume that $f$ and stochastic gradients $\bm{g}_t$ satisfy assumptions (A1)-(A4). Let $\tau$ be randomly chosen from $\{1, 2, \ldots, T\}$ with equal probabilities $p_\tau = 1/T$. We have
\begin{equation*}
\left(\mathbb{E}\left[\norm{\bm{\nabla} f(\bm{x}_\tau)}^{4/3} \right]\right)^{3/2}
\leq \frac{C + C'\sum_{t=1}^T\alpha_t\sqrt{1-\theta_t}}{T\alpha_T},
\end{equation*}
where $C'\!=\!{2C_0^2C_3d\sqrt{G^2\!+\!\epsilon d}}{\big/}{[(1\!-\!\beta)\theta_1]}$ and 
\begin{equation*}
\begin{split}
C=\frac{2C_0\sqrt{G^2\!+\!\epsilon d}}{1-\beta}\big[(C_4\!+\!C_3C_0d\chi_1\log\big(1\!+\! \frac{G^2}{\epsilon d}\big)\big],
\end{split}
\end{equation*}
where $C_4$ and $C_3$ are defined as $C_4 = f(x_1) - f^*$ and
$C_3 = \frac{C_0}{\sqrt{C_1}(1-\sqrt{\gamma})}\big[\frac{C_0^2\chi_1L}{C_1(1-\sqrt{\gamma})^2} + 2\big(\frac{\beta/(1-\beta)}{\sqrt{C_1(1-\gamma)\theta_1}}+1\big)^2G\big]$.
\end{theorem}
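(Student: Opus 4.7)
The plan is to adapt a non-convex descent analysis to the adaptive/momentum setting of Algorithm \ref{Adam}. First I would introduce an auxiliary sequence $z_t := \bm{x}_t + \tfrac{\beta_t}{1-\beta_t}(\bm{x}_t-\bm{x}_{t-1})$ (with a minor correction when $\beta_t$ varies in $t$) that absorbs the heavy-ball component of the EMA momentum, so that $z_{t+1}-z_t = -\alpha_t \bm{g}_t/\sqrt{\bm{v}_t} + R_t$ for a ``perturbation'' $R_t$ whose norm is controlled by $\|\bm{m}_{t-1}\|$ times differences $|\alpha_{t+1}/\sqrt{\bm{v}_{t+1}}-\alpha_t/\sqrt{\bm{v}_t}|$ and $|\beta_{t+1}-\beta_t|$. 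Applying $L$-smoothness of $f$ at $z_t$, replacing $\nabla f(z_t)$ by $\nabla f(\bm{x}_t)$ at the cost of an $L\|z_t-\bm{x}_t\|\le L\tfrac{\beta}{1-\beta}\|\bm{x}_t-\bm{x}_{t-1}\|$ error, and taking conditional expectations on $\mathcal{F}_{t-1}$ would yield a one-step inequality
\begin{equation*}
\mathbb{E}[f(z_{t+1})]\le \mathbb{E}[f(z_t)] - \mathbb{E}\big\langle \nabla f(\bm{x}_t),\,\alpha_t \bm{g}_t/\sqrt{\bm{v}_t}\big\rangle + \mathrm{err}_t,
\end{equation*}
where $\mathrm{err}_t$ collects the $L$-smoothness quadratic term, the $\nabla f(z_t)-\nabla f(\bm{x}_t)$ error, and inner products with $R_t$.

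The crux is then to decouple $\bm{g}_t$ from $\sqrt{\bm{v}_t}$ in the inner product, since this correlation is precisely what drives the Reddi et al.\ counterexamples. I would substitute a surrogate denominator $\sqrt{\widetilde{\bm{v}}_t}$ depending only on $\bm{g}_1,\ldots,\bm{g}_{t-1}$ (naturally $\widetilde{\bm{v}}_{t,k}:=\theta_t \bm{v}_{t-1,k}+(1-\theta_t)\epsilon$), so that the conditional expectation of $\alpha_t\langle \nabla f(\bm{x}_t),\bm{g}_t\rangle/\sqrt{\widetilde{\bm{v}}_t}$ becomes the weighted squared norm $\sum_k \alpha_t(\nabla f(\bm{x}_t))_k^2/\sqrt{\widetilde{\bm{v}}_{t,k}}$, and the residual $|1/\sqrt{\bm{v}_t}-1/\sqrt{\widetilde{\bm{v}}_t}|$ is bounded coordinate-wise by the elementary identity $|a^{-1/2}-b^{-1/2}|\le |a-b|/(2\min(a,b)^{3/2})$. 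The perturbation sums, momentum-induced cross terms, and $\|\bm{g}_s\|$-dependent errors all reduce to double sums of the form $\sum_t\sum_{s\le t}\gamma^{t-s}\|\bm{g}_s\|^2/\sqrt{\bullet}$ with $\gamma=\beta^2/\theta'<1$ — this is exactly why restriction (R2) demands $\theta>\beta^2$. The geometric decay lets the inner sum telescope into finite factors of $(1-\sqrt{\gamma})^{-1}$ and $\sqrt{C_1}^{-1}$, which is where the constant $C_3$ in the statement is born (using the lower bound $\bm{v}_{t,k}\ge C_1(\theta')^{t-s}(1-\theta_s)\bm{g}_{s,k}^2$ inside the momentum Cauchy--Schwarz step).

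The remaining second-order sum $\sum_t (1-\theta_t)\|\bm{g}_t\|^2/\sqrt{\bm{v}_t}$ is handled via the Weighted AdaEMA reformulation of Proposition \ref{equivalence-theorem}: writing $\bm{v}_t=\bm{V}_t/W_t$, the coordinate-wise AdaGrad-type telescoping $\sum_t w_t \bm{g}_{t,k}^2/\bm{V}_{t,k} \le \log(\bm{V}_{T,k}/\bm{V}_{0,k}) \le \log(1+G^2/(\epsilon d))$ accounts for both the $\log$ factor inside $C$ and the $\sum_t\alpha_t\sqrt{1-\theta_t}$ residual. Summing from $t=1$ to $T$, telescoping $f(z_{t+1})-f(z_t)$ against $f(\bm{x}_1)-f^*=C_4$, and rearranging produces a weighted bound of the form $\mathbb{E}\sum_t \alpha_t\sqrt{1-\theta_t}\,\|\nabla f(\bm{x}_t)\|^2/\sqrt{\bm{v}_t}\le C + C'\sum_t \alpha_t\sqrt{1-\theta_t}$. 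Finally, Hölder's inequality with exponents $3/2$ and $3$ — writing $\|\nabla f(\bm{x}_t)\|^{4/3} = (\|\nabla f(\bm{x}_t)\|^2/\sqrt{\bm{v}_t})^{2/3}(\sqrt{\bm{v}_t})^{2/3}$ and using the pointwise bound $\sqrt{\bm{v}_t}\le\sqrt{G^2+\epsilon d}$ from (A4) — converts the weighted second-moment bound into the claimed $(4/3)$-moment bound, while restriction (R3) (through $\chi_t\ge\chi_T/C_0$, hence $\alpha_t\sqrt{1-\theta_t}\ge \alpha_T/C_0$ up to constants) furnishes the $T\alpha_T$ denominator on the right-hand side. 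I expect the main technical obstacle to be the bookkeeping in the middle paragraph: verifying that every momentum-induced double sum collapses into the precise constants $C_1$, $(1-\sqrt{\gamma})^{-1}$, $(1-\gamma)^{-1}$ and the specific coefficients inside $C_3$, since that is where the delicate interplay between $\beta$, $\theta$, and the time-varying denominator must be quantified exactly.
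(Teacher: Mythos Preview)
Your outline is plausible and hits the same conceptual checkpoints as the paper --- a surrogate denominator to decouple $\bm{g}_t$ from $\sqrt{\bm{v}_t}$, geometric decay with ratio $\gamma=\beta^2/\theta'$ to collapse momentum double sums, the AdaGrad-style logarithmic telescoping via the Weighted AdaEMA reformulation, and H\"older with exponents $(3/2,3)$ for the final $4/3$-moment conversion --- but the actual decompositions you propose differ from the paper's in two ways worth noting.

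First, you handle the momentum via the Ghadimi--Lan auxiliary iterate $z_t=\bm{x}_t+\tfrac{\beta_t}{1-\beta_t}(\bm{x}_t-\bm{x}_{t-1})$, which forces the perturbation $R_t$ to contain the raw learning-rate difference $\alpha_{t-1}/\sqrt{\bm{v}_{t-1}}-\alpha_t/\sqrt{\bm{v}_t}$. Since the paper explicitly does \emph{not} assume $\Gamma_t\succ 0$, this difference has no sign, and bounding $|\bm{v}_t-\bm{v}_{t-1}|=(1-\theta_t)|\bm{g}_t^2-\bm{v}_{t-1}|$ produces an extra $\bm{v}_{t-1}$-term that the paper never has to face. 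The paper instead sets up a recursion on $M_t:=\mathbb{E}[\langle\nabla f(\bm{x}_t),\bm{\Delta}_t\rangle+L\|\bm{\Delta}_t\|^2]$ and expands $\bm{\Delta}_t-\tfrac{\beta_t\alpha_t}{\sqrt{\theta_t}\,\alpha_{t-1}}\bm{\Delta}_{t-1}$; the deliberate $\sqrt{\theta_t}$ in the recursion factor makes the residual $\bm{v}_t-\theta_t\bm{v}_{t-1}=(1-\theta_t)\bm{g}_t^2$, which is non-negative and admits the clean $A_t,B_t$ split of Lemma~\ref{lem1-002}. Unrolling this recursion (Lemma~\ref{lem1-006}) then manufactures exactly the constant $C_3$. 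Your $z_t$-route can likely be pushed through, but recovering the \emph{stated} constants would require extra work that the paper's tailored factor sidesteps.

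Second, your surrogate $\widetilde{\bm v}_t=\theta_t\bm v_{t-1}+(1-\theta_t)\epsilon$ is $\mathcal F_{t-1}$-measurable and serviceable, but the paper uses $\hat{\bm v}_t=\theta_t\bm v_{t-1}+(1-\theta_t)\bm\sigma_t^2$ with $\bm\sigma_t^2=\mathbb{E}_t[\bm g_t^2]$. This choice makes $\hat{\bm v}_t\ge(1-\theta_t)\bm\sigma_t^2$, which is precisely what yields the key estimate $\sqrt{\hat{\bm\eta}_t}\,\bm\sigma_t\le\sqrt{G\chi_t}$ and in turn the split into the (III),(IV) terms of Lemma~\ref{lem1-004}. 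A minor remark: your intermediate bound is phrased as $\sum_t\alpha_t\sqrt{1-\theta_t}\,\|\nabla f(\bm{x}_t)\|^2/\sqrt{\bm{v}_t}$, but the actual negative term one obtains is the coordinate-weighted norm $\|\nabla f(\bm{x}_t)\|^2_{\hat{\bm\eta}_t}$ with $\hat{\bm\eta}_t=\alpha_t/\sqrt{\hat{\bm v}_t}$; the factor $\alpha_t\sqrt{1-\theta_t}$ lives only on the error side, and the $T\alpha_T$ denominator comes simply from $\alpha_t^{-1}\le C_0\alpha_T^{-1}$ (Lemma~\ref{lem2-002}) inside the H\"older step (Lemma~\ref{lem1-008}), not from a lower bound on $\alpha_t\sqrt{1-\theta_t}$.
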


\begin{theorem}\label{thm1-001}
Suppose the same setting and hypothesis as Theorem \ref{convergence_in_expectation}. Let $\tau$ be randomly chosen from $\{1, 2, \ldots, T\}$ with equal probabilities $p_\tau = 1/T$. Then for any $\delta > 0$, the following bound holds with probability at least $1-\delta^{2/3}$:
\begin{equation*}
\begin{aligned}
\norm{\bm{\nabla}\!f(\bm{x}_\tau)}^2 \leq
\frac{C + C'\sum_{t=1}^T\alpha_t \sqrt{1-\theta_t}}{\delta T \alpha_T}
:= Bound(T),
\end{aligned}
\end{equation*}
where $C$ and $C'$ are defined as those in Theorem \ref{convergence_in_expectation}.
\end{theorem}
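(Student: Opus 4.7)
The plan is to derive Theorem \ref{thm1-001} directly from Theorem \ref{convergence_in_expectation} by a one-line Markov-inequality argument, so essentially no new analytic work is needed; the only care is in matching the exponents $4/3$, $3/2$, $2/3$ so that the tail probability comes out as $\delta^{2/3}$ rather than $\delta$.

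Concretely, I would set $Y := \norm{\bm{\nabla}\!f(\bm{x}_\tau)}^{4/3}$ and let $B := [C + C'\sum_{t=1}^{T}\alpha_t\sqrt{1-\theta_t}]/(T\alpha_T)$ denote the right-hand side in Theorem \ref{convergence_in_expectation}. That theorem says $(\mathbb{E}[Y])^{3/2} \leq B$, equivalently $\mathbb{E}[Y] \leq B^{2/3}$. Note also that $Y$ is a nonnegative random variable (over the joint randomness of the stochastic gradients $\bm{g}_1,\ldots,\bm{g}_T$ and the uniform index $\tau$), so Markov's inequality applies.

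Next, apply Markov to $Y$ at level $a = (B/\delta)^{2/3}$:
\begin{equation*}
\mathbb{P}\!\left(Y \geq (B/\delta)^{2/3}\right) \;\leq\; \frac{\mathbb{E}[Y]}{(B/\delta)^{2/3}} \;\leq\; \frac{B^{2/3}}{B^{2/3}/\delta^{2/3}} \;=\; \delta^{2/3}.
\end{equation*}
Since the event $\{Y < (B/\delta)^{2/3}\}$ is identical to $\{\norm{\bm{\nabla}\!f(\bm{x}_\tau)}^{2} < B/\delta\}$ (raise both sides to the $3/2$ power), taking complements yields the claimed high-probability bound $\norm{\bm{\nabla}\!f(\bm{x}_\tau)}^{2} \leq B/\delta = \mathrm{Bound}(T)$ with probability at least $1-\delta^{2/3}$.

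There is essentially no obstacle here; the only subtle choice is \emph{why} one passes through the $4/3$-moment rather than, say, directly bounding $\mathbb{E}\norm{\bm{\nabla}\!f}^2$ via Markov, and this choice is forced on us by the exponent in Theorem \ref{convergence_in_expectation}. The $(4/3, 3/2)$ pair is conjugate in the sense $\tfrac{4}{3}\cdot\tfrac{3}{2} = 2$, which is precisely what lets one convert a bound on $(\mathbb{E}[\norm{\cdot}^{4/3}])^{3/2}$ into a tail bound on $\norm{\cdot}^{2}$, with the $\delta^{2/3}$ confidence level being an unavoidable consequence of that same exponent. No additional assumptions on $f$ or $\bm{g}_t$ beyond (A1)--(A4) and restrictions (R1)--(R3) are invoked, since all the heavy lifting is inherited from Theorem \ref{convergence_in_expectation}.
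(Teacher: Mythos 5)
Your proof is correct and is essentially identical to the paper's: both set up the nonnegative random variable $\norm{\bm{\nabla}\!f(\bm{x}_\tau)}^{4/3}$ (the paper writes it as $|\zeta|^{2/3}$ with $\zeta=\norm{\bm{\nabla}\!f(\bm{x}_\tau)}^2$), bound its mean by $B^{2/3}$ using Theorem~\ref{convergence_in_expectation}, and apply the first-moment tail bound at level $(B/\delta)^{2/3}$ before raising to the power $3/2$. The only cosmetic difference is that the paper labels the step ``Chebyshev's inequality'' while you (more precisely) call it Markov's inequality; the inequality invoked is the same.
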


\begin{remark}
{\bf(i)}\ The constants $C$ and $C'$ depend on apriori known constants $C_0, C_1, \beta, \theta', G, L, \epsilon, d, f^*$ and $\theta_1, \alpha_1, \bm{x}_1$. \\
{\bf(ii)}\  Convergence in expectation in Theorem \ref{convergence_in_expectation} is slightly stronger than convergence in probability in Theorem \ref{thm1-001}.
Convergence in expectation is on the term
$(\mathbb{E}[\norm{\nabla\! f(\bm{x}_\tau)}^\frac{3}{2}])^\frac{4}{3}$, which
is slightly weaker than $\mathbb{E}[\norm{\nabla\! f(\bm{x}_\tau)}^2]$. The latter is adopted for most SGD variants with global learning rates, namely, the learning rate for each coordinate is the same. This is due to that $\frac{1}{\sum_{t=1}^T\!\alpha_t}\mathbb{E}\sum_{t=1}^T\!\alpha_t \norm{\nabla\! f(\bm{x}_t)}^2$ is exactly $\mathbb{E}[\norm{\nabla\! f(\bm{x}_\tau)}^2]$ if $\tau$ is randomly selected via distribution $\mathbb{P}(\tau\!=\!k)\!=\!\frac{\alpha_k}{\sum_{t=1}^T\!\alpha_t}$.
This does not apply to coordinate-wise adaptive methods because the learning rate for each coordinate is different, and hence unable to randomly select an index according to some distribution uniform for each coordinate.
On the other hand, the proofs of AMSGrad and AdaEMA \cite{chen2018convergence} are able to achieve the bound for $\mathbb{E}[\norm{\nabla\! f(\bm{x}_\tau)}^2]$.
This is due to the strong assumption $\norm{g_t}\!\leq\!G$ which results in a uniform lower bound for each coordinate of the adaptive learning rate $\eta_{t,k}\!\geq\!\alpha_t/G$. Thus, the proof of AMSGrad \cite{chen2018convergence} can be dealt with in a way similar to the case of global learning rate.
In our paper we use a coordinate-wise adaptive learning rate and assume a weaker assumption $\mathbb{E}[\norm{g_t}^2]\!\leq\!G$ instead of $\norm{g_t}^2\!\leq\!G$.
To separate the term $\norm{\nabla\! f(\bm{x}_t)}$ from $\norm{\nabla\! f(\bm{x}_t)}^2_{\bm{\hat{\eta}}_t}$, we can only apply the H\"{o}lder theorem to obtain a bound for $(\mathbb{E}[\norm{\nabla\! f(\bm{x}_\tau)}^\frac{3}{2}])^\frac{4}{3}$.
\end{remark}

\begin{corollary}\label{constant-theta}
Take $\alpha_t = \eta/t^s$ with $0\leq s < 1$. Suppose $\lim_{t\to\infty} \theta_t = \theta < 1$. Then the $Bound(T)$ in Theorem \ref{thm1-001} is bounded from below by constants
\begin{equation}
Bound(T) \geq \frac{C'\sqrt{1-\theta}}{\delta}.
\end{equation}
In particular, when $\theta_t = \theta < 1$, we have the following more subtle estimate on lower and upper-bounds for $Bound(T)$
\begin{equation*}
\frac{C}{\delta \eta T^{1-s}} +\frac{C'\sqrt{1-\theta}}{\delta}\leq Bound(T) \!\leq\! \frac{C}{\delta \eta T^{1-s}} \!+\! \frac{C'\sqrt{1\!-\!\theta}}{\delta(1-s)}.
\end{equation*}
\end{corollary}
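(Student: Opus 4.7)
The plan is to substitute $\alpha_t = \eta/t^s$ directly into the expression
\[
Bound(T) = \frac{C + C'\sum_{t=1}^T \alpha_t\sqrt{1-\theta_t}}{\delta T\alpha_T}
\]
given in Theorem \ref{thm1-001}, noting that $T\alpha_T = \eta T^{1-s}$, so the bound splits as
\[
Bound(T) = \frac{C}{\delta\eta T^{1-s}} + \frac{C'\sum_{t=1}^T \alpha_t\sqrt{1-\theta_t}}{\delta\eta T^{1-s}}.
\]
The whole corollary then reduces to sharp two-sided estimates of the sum $S_T := \sum_{t=1}^T \alpha_t\sqrt{1-\theta_t} = \eta \sum_{t=1}^T t^{-s}\sqrt{1-\theta_t}$.

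For the general lower bound, I would first invoke restriction (R2): since $\{\theta_t\}$ is non-decreasing with limit $\theta$, we have $\theta_t \leq \theta$ and hence $\sqrt{1-\theta_t} \geq \sqrt{1-\theta}$ for every $t$. Combined with the elementary inequality $t^{-s} \geq T^{-s}$ for $1\leq t\leq T$ (since $s\geq 0$), this gives $S_T \geq \eta\sqrt{1-\theta}\cdot T\cdot T^{-s} = \eta\sqrt{1-\theta}\,T^{1-s}$. Dividing by $\delta\eta T^{1-s}$ and dropping the non-negative term $C/(\delta\eta T^{1-s})$ yields exactly $Bound(T) \geq C'\sqrt{1-\theta}/\delta$.

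For the constant case $\theta_t\equiv\theta$, the same lower estimate applies and gives the left-hand inequality, keeping the $C/(\delta\eta T^{1-s})$ term this time. For the matching upper bound, I would compare the sum $\sum_{t=1}^T t^{-s}$ to an integral: since $t\mapsto t^{-s}$ is decreasing and positive on $(0,\infty)$, the standard estimate $\sum_{t=1}^T t^{-s} \leq \int_0^T t^{-s}\,dt = T^{1-s}/(1-s)$ holds (this is where $s<1$ is used to ensure integrability at $0$). Therefore $S_T \leq \eta\sqrt{1-\theta}\,T^{1-s}/(1-s)$, and substitution gives
\[
Bound(T) \leq \frac{C}{\delta\eta T^{1-s}} + \frac{C'\sqrt{1-\theta}}{\delta(1-s)},
\]
completing the proof.

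There is no real obstacle here; the statement is an immediate calculation once Theorem \ref{thm1-001} is in hand. The only point requiring mild care is the integral comparison for the upper bound, where one must integrate from $0$ (not $1$) to absorb the $t=1$ term cleanly, together with the restriction $s<1$ that makes $\int_0^T t^{-s}\,dt$ finite; this is precisely the reason the upper bound in the corollary is stated only under the constant-$\theta$ assumption with $0\leq s<1$.
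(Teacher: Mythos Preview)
Your proposal is correct and follows essentially the same route as the paper: both arguments substitute $\alpha_t=\eta/t^s$ into $Bound(T)$, use the monotonicity $\theta_t\leq\theta$ from (R2) to get $\sqrt{1-\theta_t}\geq\sqrt{1-\theta}$, and then sandwich $\sum_{t=1}^T t^{-s}$ between $T^{1-s}$ and $T^{1-s}/(1-s)$ via the elementary estimate $t^{-s}\geq T^{-s}$ and an integral comparison. The only cosmetic difference is that the paper writes the upper estimate as a Riemann sum $\sum_{t=1}^T (t/T)^{-s}\cdot(1/T)\leq\int_0^1 x^{-s}\,dx$, whereas you integrate directly over $[0,T]$; the two are equivalent by a change of variable.
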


\begin{remark}
{\bf(i)}\ Corollary \ref{constant-theta} shows that if $\lim_{t\to\infty}\theta_t = \theta < 1$, the bound in Theorem \ref{thm1-001} is only $\mathcal{O}(1)$, hence not guaranteeing convergence.
This result is not surprising as Adam with constant $\theta_t$ has already shown to be divergent \cite{Reddi2018on}. Hence, $\mathcal{O}(1)$ is its best convergence rate we can expect.
We will discuss this case in more details in Section \ref{insights-for-divergence}.\\
{\bf(ii)}\ Corollary \ref{constant-theta} also indicates that in order to guarantee convergence, the parameter has to satisfy $\lim_{t\to\infty} \theta_t = 1$.
Although we do not assume this in our restrictions (R1)-(R3), it turns out to be the consequence from our analysis. Note that if $\beta < 1$ in (R1) and $\lim_{t\to\infty}\theta_t =1$, then the restriction $\lim_{t\to\infty}\theta_t > \beta^2$ is automatically satisfied in (R2).
\end{remark}

We are now ready to give the Sufficient Condition (\textbf{SC}) for  convergence of Generic Adam based on Theorem \ref{thm1-001}.
\begin{corollary}[Sufficient Condition(\textbf{SC})]\label{convergence}
Generic Adam is convergent if the parameters $\{\alpha_t\}$, $\{\beta_t\}$, and $\{\theta_t\}$ satisfy
\begin{itemize}\setlength{\itemsep}{0pt}
\item[1.] $\beta_t \leq \beta < 1$;
\item[2.] $0 < \theta_t < 1$ and $\theta_t$ is non-decreasing in $t$;
\item[3.] $\chi_t := \alpha_t/\sqrt{1-\theta_t}$ is ``almost" non-increasing;
\item[4.] $\big({\sum_{t=1}^T\alpha_t\sqrt{1-\theta_t}}\big){\big /}\big({T\alpha_T}\big) = o(1)$.
\end{itemize}
\end{corollary}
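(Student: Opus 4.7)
My plan is to derive this corollary as a direct consequence of Theorem~\ref{thm1-001}, by verifying that its bound $Bound(T)$ tends to zero under the four listed conditions. Observe first that conditions 1--3 line up with restrictions (R1), (R3), and the positivity/monotonicity part of (R2) essentially verbatim; the only piece of (R2) that is not handed to us is the inequality $\theta := \lim_{t\to\infty}\theta_t > \beta^2$. I will argue that condition~4 by itself forces the stronger statement $\theta_t \to 1$, from which (R2) follows automatically since $\beta^2 < 1$.

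For this claim, I would proceed by contradiction. Suppose instead that $\theta := \lim_{t\to\infty}\theta_t < 1$. Since $\theta_t$ is non-decreasing, $1-\theta_t \ge 1-\theta > 0$ for every $t$. Rewriting $\alpha_t = \chi_t\sqrt{1-\theta_t}$ and using the ``almost non-increasing'' sandwich $a_t \le \chi_t \le C_0 a_t$ with $\{a_t\}$ non-increasing, the numerator in condition~4 obeys
\[ \sum_{t=1}^T \alpha_t\sqrt{1-\theta_t} = \sum_{t=1}^T \chi_t (1-\theta_t) \ge a_T \sum_{t=1}^T (1-\theta_t) \ge (1-\theta)\, T\, a_T, \]
while the denominator satisfies $T\alpha_T = T\chi_T\sqrt{1-\theta_T} \le C_0\, T\, a_T\sqrt{1-\theta_T} \le C_0\, T\, a_T$. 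Dividing, the ratio in condition~4 is bounded below by $(1-\theta)/C_0 > 0$, contradicting the assumption that it is $o(1)$. Hence $\theta_t \to 1$, and in particular (R2) holds, so Theorem~\ref{thm1-001} applies.

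It remains to check $Bound(T) \to 0$. Split
\[ Bound(T) = \frac{C}{\delta\, T\alpha_T} + \frac{C'}{\delta}\cdot\frac{\sum_{t=1}^T \alpha_t\sqrt{1-\theta_t}}{T\alpha_T}. \]
The second term is exactly the quantity condition~4 declares to be $o(1)$. For the first term, because every summand in the numerator of condition~4 is strictly positive we have $\sum_{t=1}^T \alpha_t\sqrt{1-\theta_t} \ge \alpha_1\sqrt{1-\theta_1} > 0$ for all $T$, so condition~4 forces $T\alpha_T \to \infty$, and $C/(\delta T\alpha_T)$ vanishes too. Combining, $Bound(T) = o(1)$, which by Theorem~\ref{thm1-001} yields $\|\bm{\nabla}\!f(\bm{x}_\tau)\|^2 \to 0$ in probability, i.e.\ convergence of Generic Adam.

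The one non-routine step is the contradiction argument in the second paragraph: the game there is to thread the sandwich $a_t \le \chi_t \le C_0 a_t$ against the monotonicity of $1-\theta_t$ and of $a_t$ so that the factor $a_T$ cancels cleanly between numerator and denominator while leaving a strictly positive residual. Everything else is bookkeeping on top of Theorem~\ref{thm1-001}, and the argument also explains the earlier remark that $\lim_{t\to\infty}\theta_t = 1$ is not a hypothesis but a consequence of condition~4.
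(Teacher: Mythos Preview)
Your proof is correct and matches the paper's intent: the paper does not give a separate proof of this corollary, treating it as an immediate consequence of Theorem~\ref{thm1-001}, which is precisely the route you take. The one detail you make explicit that the paper leaves implicit is that condition~4, combined with conditions~2 and~3, forces $\lim_{t\to\infty}\theta_t = 1$, so the remaining hypothesis $\theta > \beta^2$ of (R2) is automatically satisfied. The paper makes the same point in the remark following Corollary~\ref{constant-theta}, but the argument there (via the lower bound in Corollary~\ref{constant-theta}) is carried out only for $\alpha_t = \eta/t^s$; your contradiction argument handles arbitrary $\{\alpha_t\}$ satisfying condition~3, which is a small but genuine generalization. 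The remainder of your argument---splitting $Bound(T)$ and using the positivity of the first summand to force $T\alpha_T \to \infty$---is straightforward and correct.
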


\medskip
\subsection{Convergence Rate of Generic Adam}

We now provide the convergence rate of Generic Adam with a specific class of parameters $\{(\theta_{t},\alpha_{t})\}$, \textit{i.e.},
\begin{align}\label{convergence-rate-parameter}
\alpha_t = \eta/t^s \text{~~and~~}  \theta_t = \left\{
\begin{aligned}
& 1 - \alpha/K^r, \quad & t < K,\\
& 1 - \alpha/t^r, \quad & t \geq K,
\end{aligned}
\right.
\end{align}
for positive constants $\alpha,\eta,K$, where $K$ is taken such that $\alpha/K^r < 1$. Note that $\alpha$ can be taken bigger than 1. When $\alpha < 1$, we can take $K = 1$ and then $\theta_t = 1 - \alpha/t^r, t\geq 1$. To guarantee (R3), we require $r \leq 2s$. For such a family of parameters we have the following corollary.

\begin{corollary}\label{poly-setting}
Generic Adam with the above family of parameters converges as long as $0 < r \leq 2s < 2$, and its non-asymptotic convergence rate is given by
\begin{equation*}
\norm{\bm{\nabla} f(\bm{x}_\tau)}^2 \leq \left\{\begin{aligned}
& \mathcal{O}(T^{-r/2}), \quad &   r/2 + s < 1 \\
& \mathcal{O}(\log(T)/T^{1-s}), \quad &  r/2 + s = 1 \\
& \mathcal{O}(1/T^{1 - s}), \quad &  r/2 + s > 1
\end{aligned}\right..
\end{equation*}
\end{corollary}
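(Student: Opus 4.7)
The plan is to plug the prescribed parameter family into the bound of Theorem \ref{thm1-001} and estimate the resulting series directly. The work splits into verifying the hypotheses of the theorem and then performing a standard $p$-series tail calculation in three regimes.

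First I would check that the family $\alpha_t = \eta/t^s$, $\theta_t = 1-\alpha/t^r$ (for $t \ge K$) meets restrictions (R1)--(R3). (R1) is inherited from the running assumption on $\beta_t$. For (R2), $\theta_t$ is manifestly non-decreasing in $t$, lies in $(0,1)$ by the choice of $K$, and tends to $1$, so the constraint $\lim_t \theta_t > \beta^2$ holds automatically. For (R3), a direct computation gives
\begin{equation*}
\chi_t \;=\; \frac{\alpha_t}{\sqrt{1-\theta_t}} \;=\; \frac{\eta}{\sqrt{\alpha}}\, t^{r/2-s} \qquad (t\ge K),
\end{equation*}
which is non-increasing exactly when $r\le 2s$, while for $t<K$ the sequence $\chi_t$ takes only finitely many values and is therefore ``almost'' non-increasing in the sense of (R3) (choose $a_t = \chi_t$ for $t\ge K$, extended by $a_K$ for $t<K$, and a constant $C_0$ absorbing the finite initial ratios).

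Next I would evaluate the two quantities appearing in Theorem \ref{thm1-001}. The denominator is immediate:
\begin{equation*}
T\alpha_T \;=\; \eta\, T^{1-s}.
\end{equation*}
For the numerator I write $\alpha_t\sqrt{1-\theta_t} = \eta\sqrt{\alpha}\, t^{-(s+r/2)}$ for $t\ge K$, and estimate the tail $\sum_{t=K}^{T} t^{-(s+r/2)}$ by the usual integral comparison. This yields three regimes: (i) if $s+r/2<1$, the sum is $\Theta(T^{1-s-r/2})$; (ii) if $s+r/2=1$, it is $\Theta(\log T)$; (iii) if $s+r/2>1$, it is $\Theta(1)$. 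The contribution from the finitely many indices $t<K$ is $\mathcal{O}(1)$ and is absorbed into the constant $C$.

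Finally, dividing the three numerator estimates by $\eta T^{1-s}$ produces exactly the three rates in the statement: $\mathcal{O}(T^{-r/2})$, $\mathcal{O}(\log(T)/T^{1-s})$, and $\mathcal{O}(1/T^{1-s})$, respectively. The additional constant term $C/(\delta\eta T^{1-s})$ always decays at rate $T^{-(1-s)}$ and is dominated in cases (i) and (ii) by the series term (since $r/2<1-s$ and $\log T \cdot T^{s-1}$ dominates $T^{s-1}$), and is of the same order as the series term in case (iii). I do not anticipate any substantive obstacle: the computation is entirely a $p$-series estimate, and the only subtle point is handling the piecewise definition of $\theta_t$ on $t<K$, which only affects constants. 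The side constraint $r\le 2s$ is exactly what makes $\chi_t$ monotone and is also what forces case (iii) to be consistent with $s<1$ (so the rate $T^{-(1-s)}$ is genuinely vanishing).
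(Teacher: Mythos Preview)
Your proposal is correct and follows essentially the same route as the paper's proof: compute $\alpha_t\sqrt{1-\theta_t}=\eta\sqrt{\alpha}\,t^{-(s+r/2)}$, split the resulting $p$-series into the three regimes, and divide by $T\alpha_T=\eta T^{1-s}$. You are in fact more careful than the paper on two points---you explicitly verify (R1)--(R3) (in particular that $r\le 2s$ is exactly what makes $\chi_t$ non-increasing) and you track the fate of the additive constant $C/(\delta\eta T^{1-s})$---both of which the paper leaves implicit.
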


\begin{remark}
Corollary \ref{poly-setting} recovers and extends the results of some well-known algorithms below:
\begin{itemize}
\item{\textbf{AdaGrad with exponential moving average (EMA)}.}
When $\theta_t = 1 - 1/t$, $\alpha_t = \eta/\sqrt{t}$, and $\beta_t = \beta < 1$, Generic Adam is exactly AdaGrad with EMA momentum (AdaEMA) \cite{chen2018convergence}.
In particular, if $\beta = 0$, this is the vanilla coordinate-wise AdaGrad.
It corresponds to taking $r = 1$ and $s = 1/2$ in Corollary \ref{poly-setting}. Hence, AdaEMA has convergence rate $\log(T)/\sqrt{T}$.

\item{\textbf{AdamNC}.}
Taking $\theta_t = 1-1/t$, $\alpha_t = \eta/\sqrt{t}$, and $\beta_t = \beta\lambda^t$ in Generic Adam, where $\lambda < 1$ is the decay factor for the momentum factors $\beta_t$, we recover AdamNC \cite{Reddi2018on}.
Its $\mathcal{O}(\log{(T)}/\sqrt{T})$ convergence rate can be directly derived via Corollary \ref{poly-setting}.

\item{\textbf{RMSProp}.}
Mukkamala and Hein \cite{mukkamala2017variants} have reached the same $\mathcal{O}(\log{(T)}/\sqrt{T})$ convergence rate for RMSprop with $\theta_t = 1 - \alpha/t$, when $0< \alpha \leq 1$ and $\alpha_t = \eta/\sqrt{t}$ under the convex assumption. Since RMSProp is essentially Generic Adam with all momentum factors $\beta_t = 0$, we recover Mukkamala and Hein's results by taking $r = 1$ and $s = 1/2$ in Corollary \ref{poly-setting}. Moreover, our result generalizes to the non-convex stochastic setting, and it holds for all $\alpha\!>\!0$ rather than only $0\! <\! \alpha\! \leq\! 1$.
\end{itemize}
\end{remark}

As Weighted AdaEMA is equivalent to Generic Adam, we present its convergence rate with specific polynomial growth weights in the following corollary.
\begin{corollary}\label{polyweights}
Suppose in Weighted AdaEMA the weights $w_t = t^r$ for $r\!\geq\! 0$, and $\alpha_t \!=\!\eta/\sqrt{t}$. Then Weighted AdaEMA has the $\mathcal{O}(\log(T)/\sqrt{T})$ non-asymptotic convergence rate.
\end{corollary}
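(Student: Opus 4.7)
The plan is to reduce Weighted AdaEMA to Generic Adam via Proposition \ref{equivalence-theorem}, translate the polynomial weights $w_t = t^r$ into an effective $\theta_t$-schedule, and then read off the rate from Theorem \ref{thm1-001}.

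First, dividing the $\bm{V}_t$ recursion through by $W_t$ recasts Weighted AdaEMA as Generic Adam with $\theta_t = W_{t-1}/W_t$ and $1-\theta_t = w_t/W_t$. For $w_t = t^r$ with $r \ge 0$, standard Riemann-sum bounds produce constants $c_r, C_r > 0$ with $c_r\, t^{r+1} \le W_t \le C_r\, t^{r+1}$ for all $t \ge 1$, whence
\[
1 - \theta_t = \frac{t^r}{W_t} = \Theta(1/t), \qquad
\chi_t = \frac{\alpha_t}{\sqrt{1-\theta_t}} = \eta\sqrt{W_t/t^{r+1}} \in \bigl[\,\eta\sqrt{c_r},\ \eta\sqrt{C_r}\,\bigr].
\]
Consequently (R1) holds under the implicit assumption $\beta_t \le \beta < 1$, (R3) holds with $a_t \equiv \eta\sqrt{c_r}$ and $C_0 = \sqrt{C_r/c_r}$, and $\lim_{t\to\infty}\theta_t = 1 > \beta^2$.

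The remaining piece of (R2) is the monotonicity of $\theta_t$, equivalently the log-concavity $W_t^2 \ge W_{t-1}W_{t+1}$ of the partial sums. Unwinding, this reduces to the inequality $t^r(t+1)^r \ge W_t\bigl((t+1)^r - t^r\bigr)$, which I would establish by combining the mean-value bound $(t+1)^r - t^r \le r\,\max(t, t+1)^{r-1}$ with the sharper Riemann-sum bound $W_t \le t^{r+1}/(r+1) + O(t^r)$. The argument is clean for $r \le 1$; for $r > 1$ the inequality holds for all sufficiently large $t$, and the finitely many small-$t$ exceptions can be absorbed into the constant $C$ of Theorem \ref{convergence_in_expectation}. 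I expect this monotonicity verification to be the main technical obstacle; everything else is bookkeeping.

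Finally, plugging the estimates into Theorem \ref{thm1-001} gives $\alpha_t\sqrt{1-\theta_t} = \eta\sqrt{t^{r-1}/W_t} = \Theta(1/t)$, hence $\sum_{t=1}^T \alpha_t\sqrt{1-\theta_t} = \Theta(\log T)$, while $T\alpha_T = \eta\sqrt{T}$. Therefore
\[
\norm{\bm{\nabla}\! f(\bm{x}_\tau)}^2 \le \mathrm{Bound}(T) = \mathcal{O}\!\left(\frac{\log T}{\sqrt{T}}\right)
\]
with probability at least $1 - \delta^{2/3}$, which is precisely the claimed rate. Equivalently one may observe that the translated parameters sit in the regime $r_{\mathrm{Gen}} = 1$, $s = 1/2$ of Corollary \ref{poly-setting} (the critical case $r/2 + s = 1$) and invoke that corollary directly.
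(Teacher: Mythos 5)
Your proof follows essentially the same route as the paper: reduce Weighted AdaEMA to Generic Adam via Proposition~\ref{equivalence-theorem}, compute $1-\theta_t = w_t/W_t = \Theta(1/t)$ and observe $\chi_t = \alpha_t/\sqrt{1-\theta_t}$ is bounded (so (R3) holds with $a_t$ constant), and then invoke Theorem~\ref{thm1-001} (equivalently, the boundary case $r_{\mathrm{Gen}}=1$, $s=1/2$ of Corollary~\ref{poly-setting}). The one place where you go further than the paper is the monotonicity of $\theta_t$ in (R2): the paper simply asserts ``$\theta_t$ is increasing'' and moves on, whereas you correctly reduce this to the log-concavity $W_t^2 \geq W_{t-1}W_{t+1}$, i.e.\ $t^r(t+1)^r \geq W_t\bigl((t+1)^r - t^r\bigr)$, and observe that it is delicate. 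In fact your caution is warranted: for $r=2$ one has $W_0=1$, $W_1=2$, $W_2=6$, hence $\theta_1 = 1/2 > 1/3 = \theta_2$, so the paper's claim that $\theta_t$ is non-decreasing is literally false at small $t$ once $r>1$. Your proposed fix---that the finitely many small-$t$ exceptions ``can be absorbed into the constant $C$''---is in the right spirit but not actually licensed by Theorem~\ref{thm1-001} as stated, since (R2) requires global monotonicity and the lemmas (e.g.\ Lemma~\ref{lem2-002} and Lemma~\ref{lem1-001}) invoke $\theta_i \leq \theta_t$ for all $i\le t$; a rigorous patch would either restrict to $r\le 1$, modify $w_t$ on a finite prefix to restore monotonicity, or re-prove the theorem under an eventual-monotonicity hypothesis. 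In short, you and the paper share the same gap for $r>1$, but you at least notice it.
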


\begin{remark}
Zou \etal \cite{zou2018convergence} proposed weighted AdaGrad with a unified momentum form which incorporates Heavy Ball (HB) momentum and Nesterov Accelerated Gradients (NAG) momentum. The same convergence rate was established for weights with polynomial growth. Our result complements \cite{zou2018convergence} by showing that the same convergence rate also holds for exponential moving average momentum.
\end{remark}
\begin{remark}
{\bf(i)}\ Huang \etal \cite{huang2018nostalgic} proposed Nostalgic Adam (NosAdam) which corresponds to taking the learning rate $\alpha_t = \eta/\sqrt{t}$ and $\theta_t = B_{t-1}/B_{t}$ with $B_t = \sum_{i=1}^t b_i$ for $b_i > 0,\ i \geq 0$, and $B_0 > 0$\footnote{We directly use $B_{t}$ and $b_{i}$ along with the notations of NosAdam \cite{huang2018nostalgic}.} in Generic Adam. The idea of NosAdam is to guarantee $\Gamma_{t}\succ 0$ by laying more weights on the historical second-order moments. A special case of NosAdam is NosAdam-HH which takes $B_t = \sum_{i=1}^t i^{-r}$ for $r \geq 0$  as the hyper-harmonic series. Its $\mathcal{O}(1/\sqrt{T})$ convergence rate is established in the strongly convex stochastic setting. NosAdam-HH can be viewed as the Weighted AdaEMA taking $\alpha_t = \eta/\sqrt{t}$ and $w_t = t^{-r}$ for $r \geq 0$.

\noindent
{\bf(ii)}\
Corollary \ref{polyweights} differs from the motivation of NosAdam as the weights we consider are $w_t = t^r$ for $r \geq 0$. Note that in both cases when $r = 0$, this is the AdaGrad algorithm, which corresponds to assigning equal weights to the past squares of gradients. Hence, we are actually in the opposite direction of NosAdam. 
We are more interested in the case of assigning more weights to the recent stochastic gradients. This can actually be viewed as a situation between AdaGrad and the original Adam with constant $\theta_t$'s.
\end{remark}

\noindent
{\bf Comparison between (SC) and (C1).}\ ~Most of the convergent modifications of original Adam, such as AMSGrad, AdamNC, and NosAdam, all require $\Gamma_t\succ 0$ in Eq.~\eqref{Gamma_t}, which is equivalent to decreasing the adaptive learning rate $\eta_t$ step by step. Since the term $\Gamma_t$ (or adaptive learning rate $\eta_t$) involves the past stochastic gradients (hence not deterministic), the modification to guarantee $\Gamma_t\succ 0$ either needs to change the iteration scheme of Adam (like AMSGrad) or needs to impose some strong restrictions on the base learning rate $\alpha_t$ and $\theta_t$ (like AdamNC). Our sufficient condition  provides an easy-to-check criterion for the convergence of Generic Adam in Corollary \ref{convergence}. It is not necessary to require $\Gamma_t\succ 0$. Moreover, we use exactly the same iteration scheme as  original Adam without any modifications. Our work shows that the positive definiteness of $\Gamma_t$ may not be the essential issue for divergence of original Adam. It is probably due to that the parameters are not set correctly.

\section{Constant $\theta_t$ case: insights for divergence}\label{insights-for-divergence}

The currently most popular RMSProp and Adam's parameter setting takes constant  $\theta_t$, \textit{i.e.}, $\theta_t = \theta < 1$. The motivation behind is to use the exponential moving average of squares of past stochastic gradients. In practice, parameter $\theta$ is recommended to be set very close to 1. For instance, a commonly adopted $\theta$ is taken as 0.999.

Although great performance in practice has been observed, such a constant parameter setting has the serious flaw that there is no convergence guarantee even for convex optimization, as proved by the counterexamples in \cite{Reddi2018on}.
Ever since much work has been done to analyze the divergence issue of Adam and to propose modifications with convergence guarantees, as summarized in the introduction section.
However, there is still not a satisfactory explanation that touches the fundamental reason of the divergence. Below, we try to provide more insights for the divergence issue of Adam/RMSProp with constant parameter $\theta_t$, based on our analysis of the sufficient condition for convergence.

\begin{figure*}[htpb]
\centering
\subfigure[]{\includegraphics[width=0.32\linewidth]{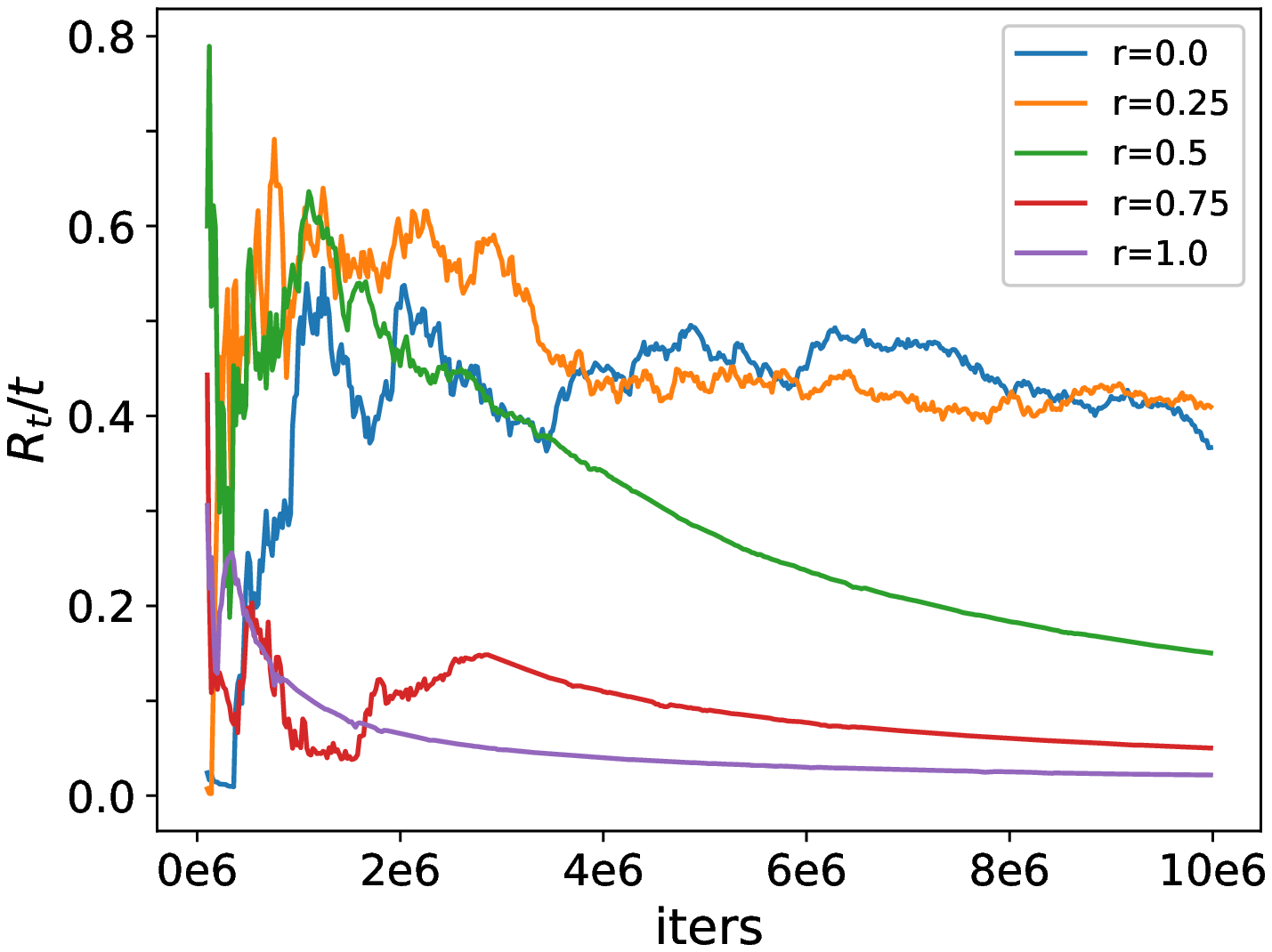}}\label{fig:synthesis_a}
\subfigure[]{\includegraphics[width=0.32\linewidth]{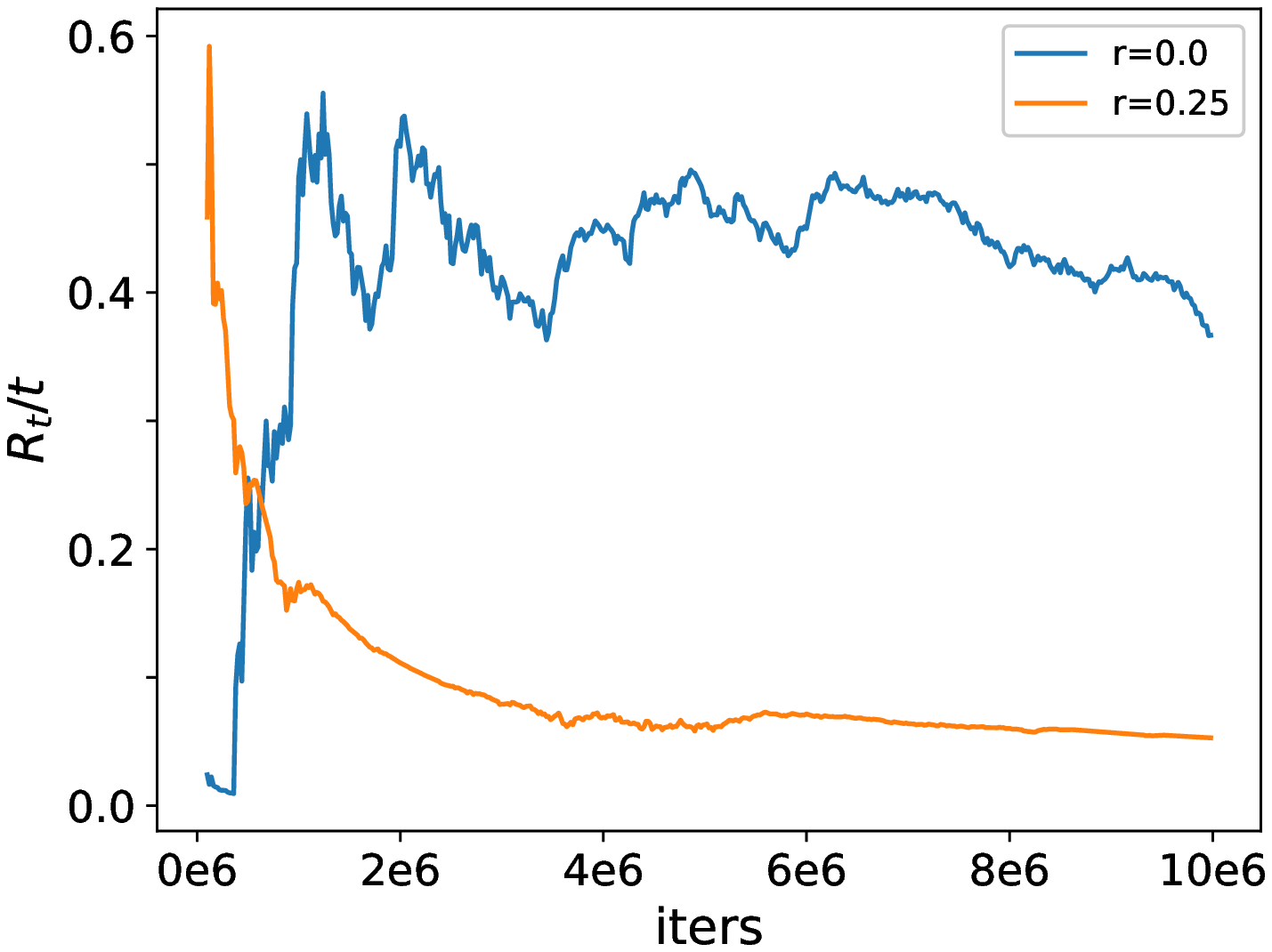}}\label{fig:synthesis_b}
\subfigure[]{\includegraphics[width=0.32\linewidth]{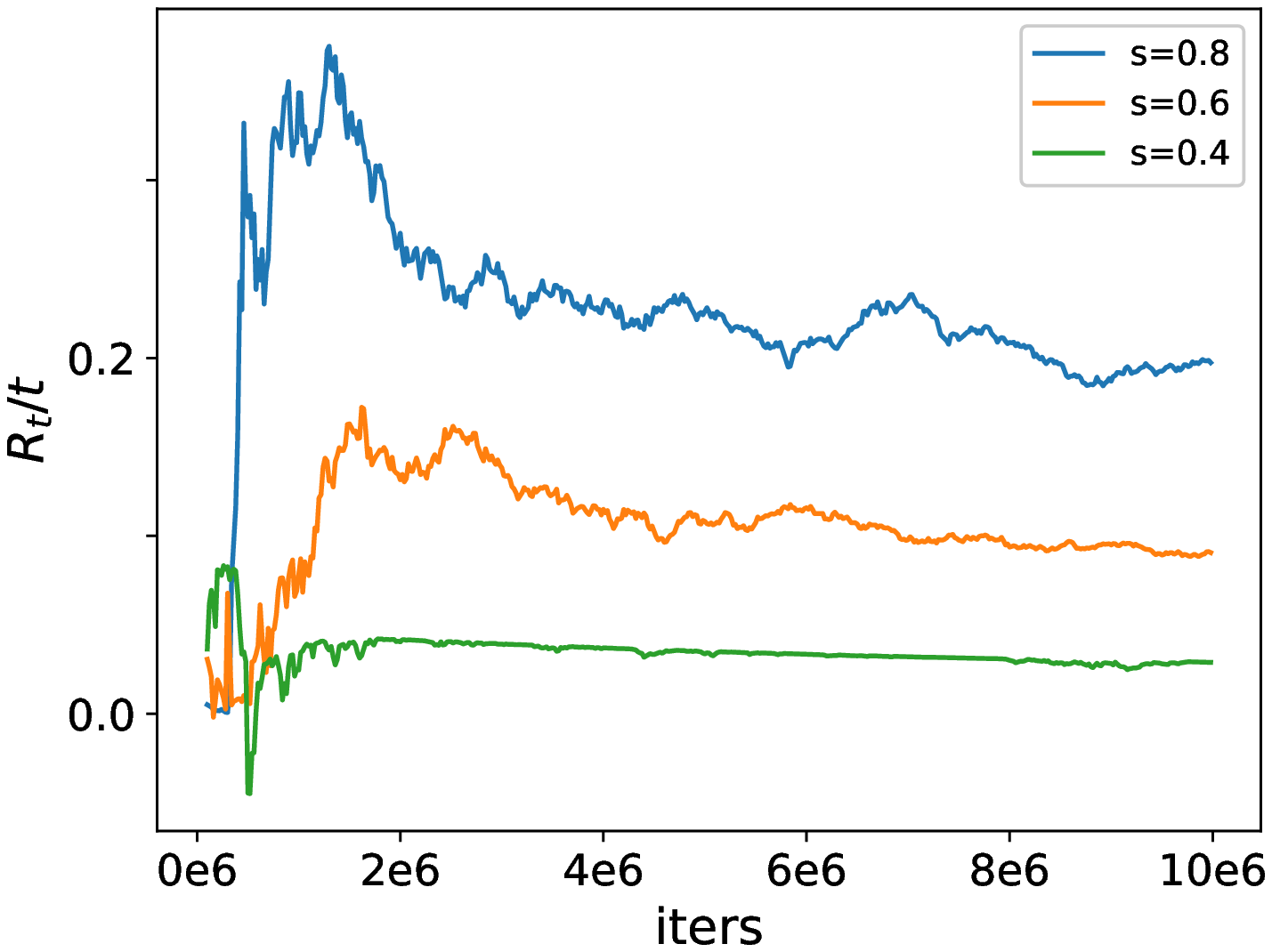}}\label{fig:synthesis_v}
\vspace{-0.3cm}
\\
\centering
\subfigure[]{\includegraphics[width=0.32\linewidth]{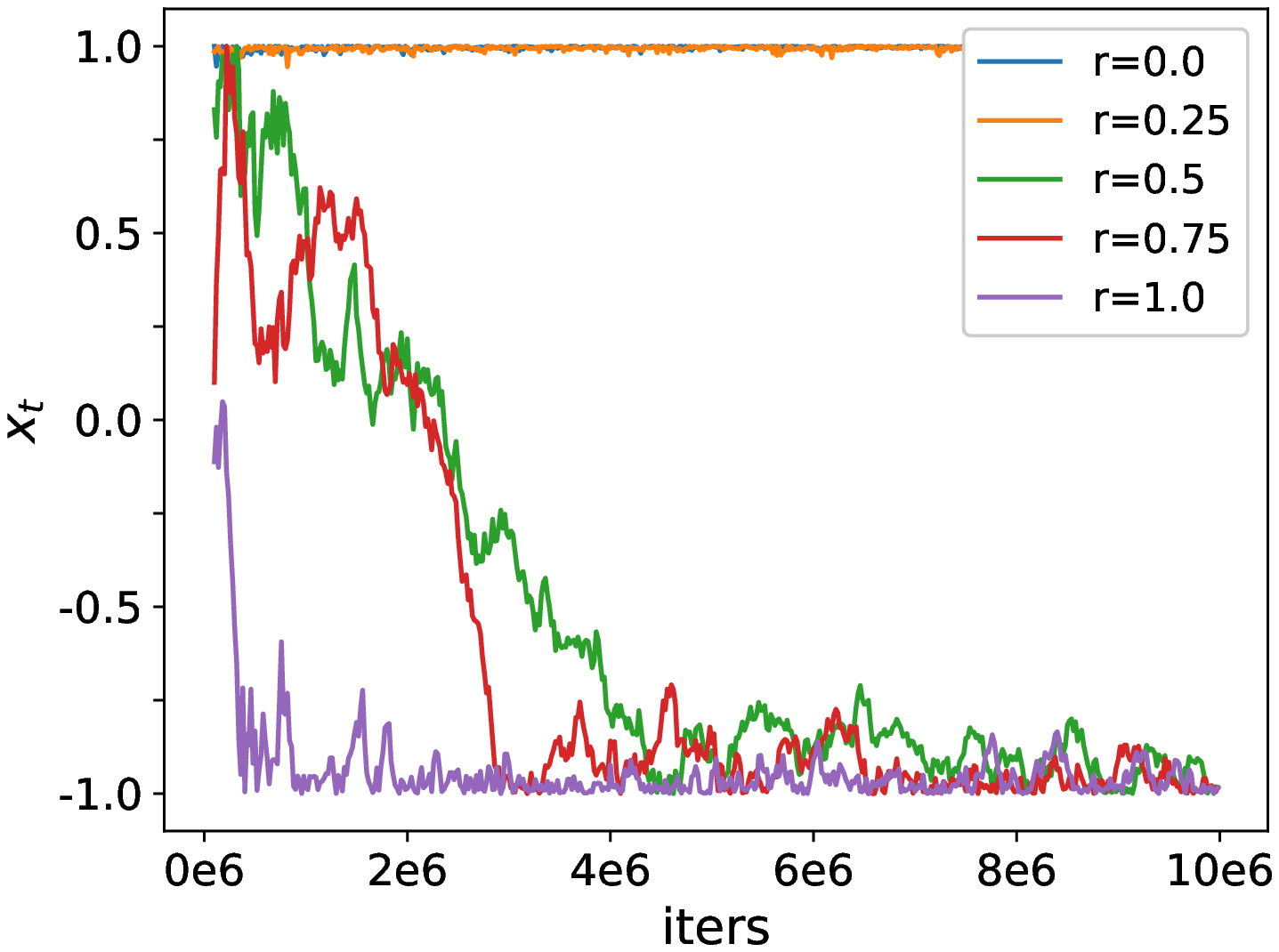}}\label{fig:synthesis_d}
\subfigure[]{\includegraphics[width=0.32\linewidth]{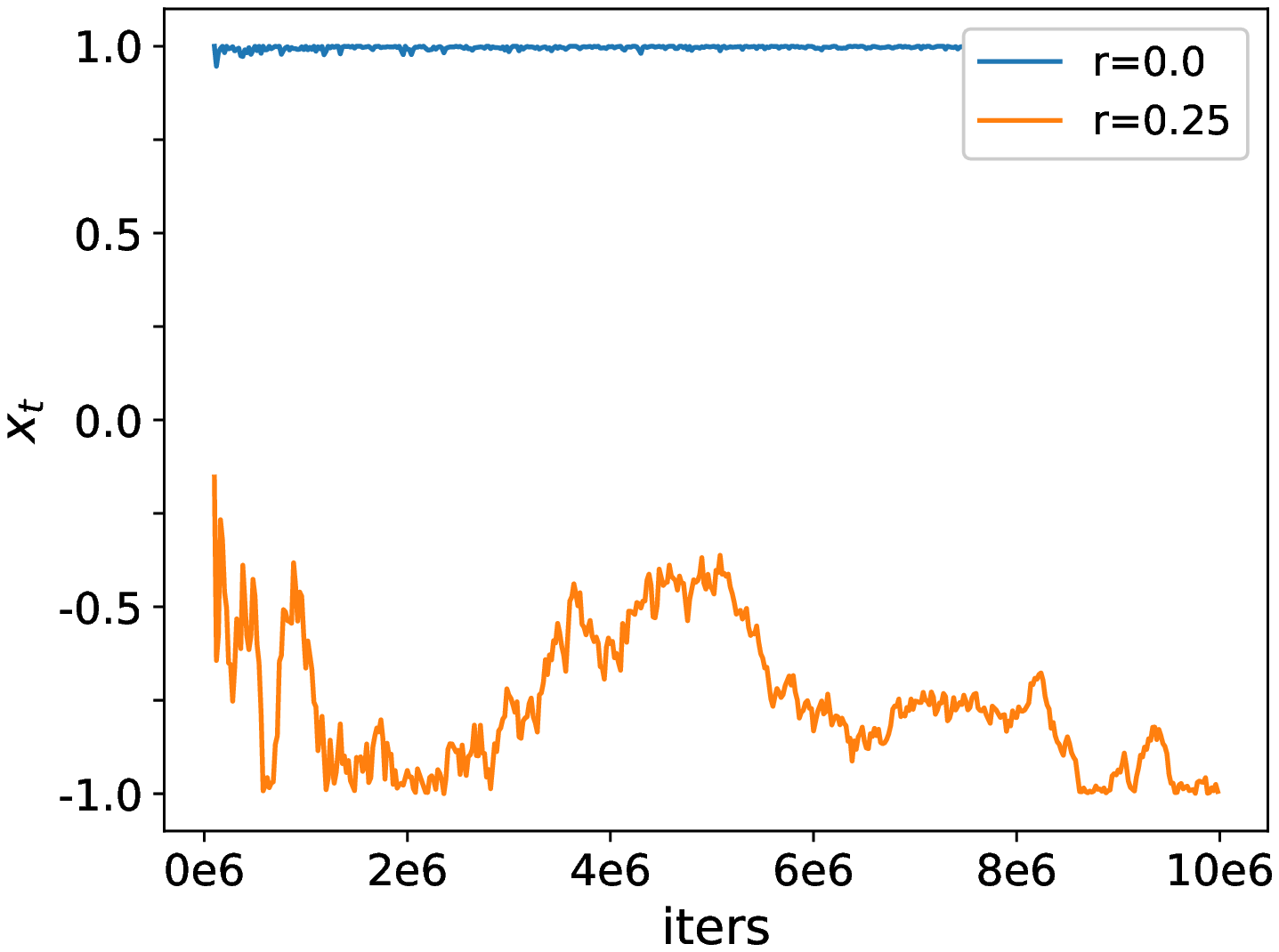}}\label{fig:synthesis_e}
\subfigure[]{\includegraphics[width=0.32\linewidth]{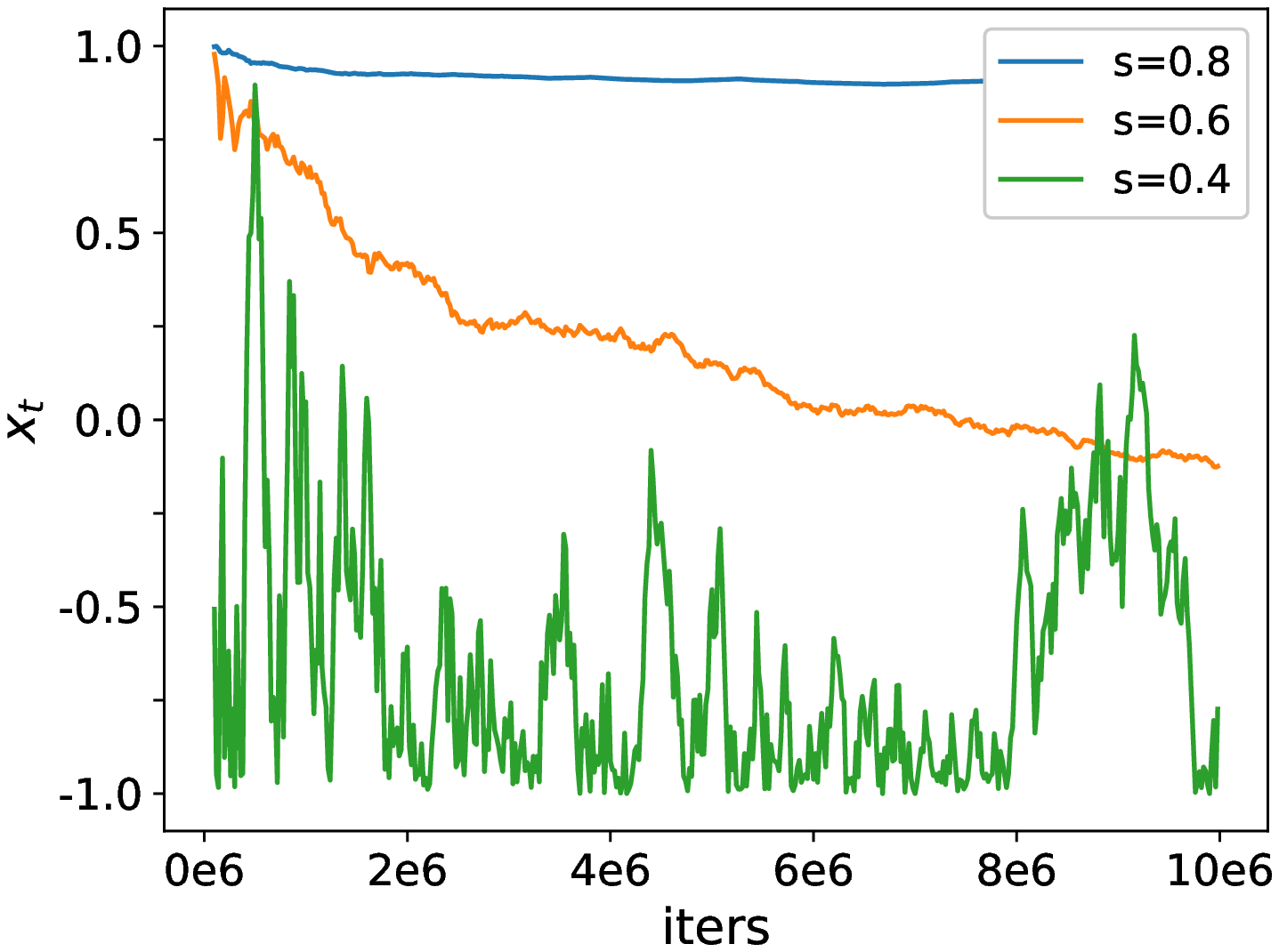}}\label{fig:synthesis_f}
\caption{The above figures are for average regret and $x$ values with different $r$ and $s$ values, respectively.
Figures (a) and (d) plot the performance profiles of Generic Adam with different $r$ values.
Figures (b) and (e) plot the performance profiles of Generic Adam with $\theta_t^{(r)} = 1 - \frac{0.01}{t^r}$ and $r = 0$ and $0.25$.
Figures (c) and (f) plot the performance profiles of Generic Adam with different $s$ values.}
\label{fig:sensivitive}
\vspace{-0.3cm}
\end{figure*}

\smallskip
\noindent
{\bf From the sufficient condition perspective.}\ ~
Let $\alpha_t \!=\! \eta/t^s$ for $0\leq s \!<\! 1$ and $\theta_t \!=\! \theta \!<\! 1$. According to Corollary \ref{constant-theta}, $Bound(T)$ in Theorem \ref{thm1-001} has the following estimate:
\begin{align*}
\frac{C}{\delta \eta T^{1\!-s}}\!+\! \frac{C'\sqrt{1\!-\!\theta}}{\delta}\!\leq\! Bound(T)\! \leq\! \frac{C}{\delta \eta T^{1-s}} \!+\! \frac{C'\sqrt{1\!-\!\theta}}{\delta(1\!-s)}.
\end{align*}
The bounds tell us some points on Adam with constant $\theta_t$:
\begin{itemize}
    \item[1.] $Bound(T)\!=\!\mathcal{O}(1)$, so the convergence is not guaranteed. This result coincides with the divergence issue demonstrated in \cite{Reddi2018on}. Indeed, since in this case Adam is not convergent, this is the best bound we can have.

    \item[2.] Consider the dependence on parameter $s$. The bound is decreasing in $s$. The best bound in this case is when $s=0$, \textit{i.e.}, the base learning rate is taken constant. This explains why in practice taking a more aggressive constant base learning rate often leads to even better performance, comparing with taking a decaying one.

    \item[3.] Consider the dependence on parameter $\theta$. Note that the constants $C$ and $C'$ depend on $\theta_1$ instead of the whole sequence $\theta_t$. We can always set $\theta_t = \theta$ for $t \geq 2$ while fix $\theta_1 < \theta$, by which we can take $C$ and $C'$ independent of constant $\theta$. Then the principal term of $Bound(T)$ is linear in $\sqrt{1-\theta}$, so decreases to zero as $\theta \to 1$. This explains why setting $\theta$ close to 1 in practice often results in better performance in practice.\end{itemize}

Moreover, Corollary \ref{poly-setting} shows us how the convergence rate continuously changes when we continuously verify parameters $\theta_t$. Let us fix $\alpha_t \!=\! 1/\sqrt{t}$ and consider the following continuous family of parameters $\{\theta_t^{(r)}\}$ with $r\in [0, 1]$:
\begin{equation*}
\theta_t^{(r)} = 1 - \alpha^{(r)}/t^r, \text{~~where~~} \alpha^{(r)} = r\bar{\theta} + (1-\bar{\theta}),\ 0 < \bar{\theta} < 1.
\end{equation*}
Note that when $r=1$, then $\theta_t = 1 - 1/t$, this is the AdaEMA, which has the convergence rate $\mathcal{O}(\log T/\sqrt{T})$; when $r = 0$, then $\theta_t = \bar{\theta} < 1$, this is the original Adam with constant $\theta_t$, which only has the $\mathcal{O}(1)$ bound; when $0 < r < 1$, by Corollary \ref{poly-setting}, the algorithm has the $\mathcal{O}(T^{-r/2})$ convergence rate.
Along this continuous family of parameters, we observe that the theoretical convergence rate continuously deteriorates as the real parameter $r$ decreases from 1 to 0, namely, as we gradually shift from AdaEMA to Adam with constant $\theta_t$.
In the limiting case, the latter is not guaranteed with convergence any more.
This phenomenon is empirically verified by the Synthetic Counterexample in Section \ref{experiments}.

\smallskip
\noindent
{\bf From the Weighted AdaEMA perspective.}\ ~Since Generic Adam is equivalent to Weighted AdaEMA, we can examine Adam with $\theta_t = \theta < 1$ in terms of Weighted AdaEMA.
In this case, we find that the associated sequence of weights $w_t = (1 - \theta)\theta^{-t}$ is growing in an exponential order.
Corollary \ref{polyweights} shows that as long as the weights are in polynomial growth, Weighted AdaEMA is convergent and its convergence rate is $\mathcal{O}(\log T/\sqrt{T})$.
This indicates that the exponential-moving-average technique in the estimate of second-order moments may assign a too aggressive weight to the current gradient, which leads to the divergence.

\section{Experiments}\label{experiments}

In this section, we experimentally validate the proposed sufficient condition by applying Generic Adam and RMSProp to solve the counterexample \cite{Reddi2018on} and to train LeNet \cite{lecun1998gradient} on the MNIST dataset \cite{lecun2010mnist} and ResNet \cite{he2016deep} on the CIFAR-100 dataset \cite{krizhevsky2009learning}, respectively.

\subsection{Synthetic Counterexample}

In this experiment, we verify the phenomenon described in Section \ref{insights-for-divergence} that how the convergence rate of Generic Adam gradually changes along a continuous path of families of parameters on the one-dimensional counterexample in \cite{Reddi2018on}:
\begin{equation}\label{counter-example}
\mathcal{R}(T) = \textstyle\sum\limits_{t=1}^{T}f_{t}(x_t) - \min\limits_{x \in [-1, 1]}~ \sum\limits_{t=1}^{T} f_t(x), 
\end{equation}
where $T$ is the number of maximum iterations, $f_{t}(x)\!=\!1010x$ with probability 0.01, and $f_{t}(x)\!=\!10x$ with probability 0.99.

\smallskip
\noindent
{\bf Sensitivity of parameter $r$.}\ ~
We set  $T = 10^7$, $\alpha_t = 0.5 / \sqrt{t}$,
$\beta = 0.9$, and $\theta_t$ as $\theta_t^{(r)} = 1 - (0.01 + 0.99r)/{t^r}$ with $r \in \{0,\ 0.25,\ 0.5,\ 0.75,\ 1.0\}$, respectively.
Note that when $r=0$, Generic Adam reduces to the originally divergent Adam \cite{kingma2014adam} with $(\beta, \bar{\theta}) = (0.9, 0.99)$. When $r=1$, Generic Adam reduces to the AdaEMA \cite{chen2018convergence} with $\beta = 0.9$.

The experimental results are shown in Figures \ref{fig:sensivitive}(a) and \ref{fig:sensivitive}(d). We can see that for $r = 1.0, 0.75$, and $0.5$, Generic Adam is convergent.
Moreover, the convergence becomes slower when $r$ decreases, which exactly matches Corollary \ref{poly-setting}.
On the other hand, for $r = 0$ and $0.25$, Figure \ref{fig:sensivitive}(d) shows that they do not converge. It seems that the divergence for $r = 0.25$ contradicts our theory.
However, this is because when $r$ is very small, the $\mathcal{O}(T^{-r/2})$ convergence rate is so slow that we may not see a convergent trend in even $10^7$ iterations.
Indeed, for $r = 0.25$, we actually have
\[ \theta_t^{(0.25)} \leq 1 - (0.01 + 0.25 * 0.99)/10^{7 * 0.25} \approx 0.9954, \]
which is not sufficiently close to 1.
As a complementary experiment, we fix the numerator and only change $r$ when $r$ is small. We take $\alpha_t$ and $\beta_t$ as the same, while $\theta_t^{(r)} = 1 - \frac{0.01}{t^r}$ for $r = 0$ and $0.25$, respectively.
The result is shown in Figures \ref{fig:sensivitive}(b) and \ref{fig:sensivitive}(e). We can see that Generic Adam with $r= 0.25$ is indeed convergent in this situation.

\begin{figure*}[htpb]
\centering
\subfigure[]{\includegraphics[width=0.32\linewidth]{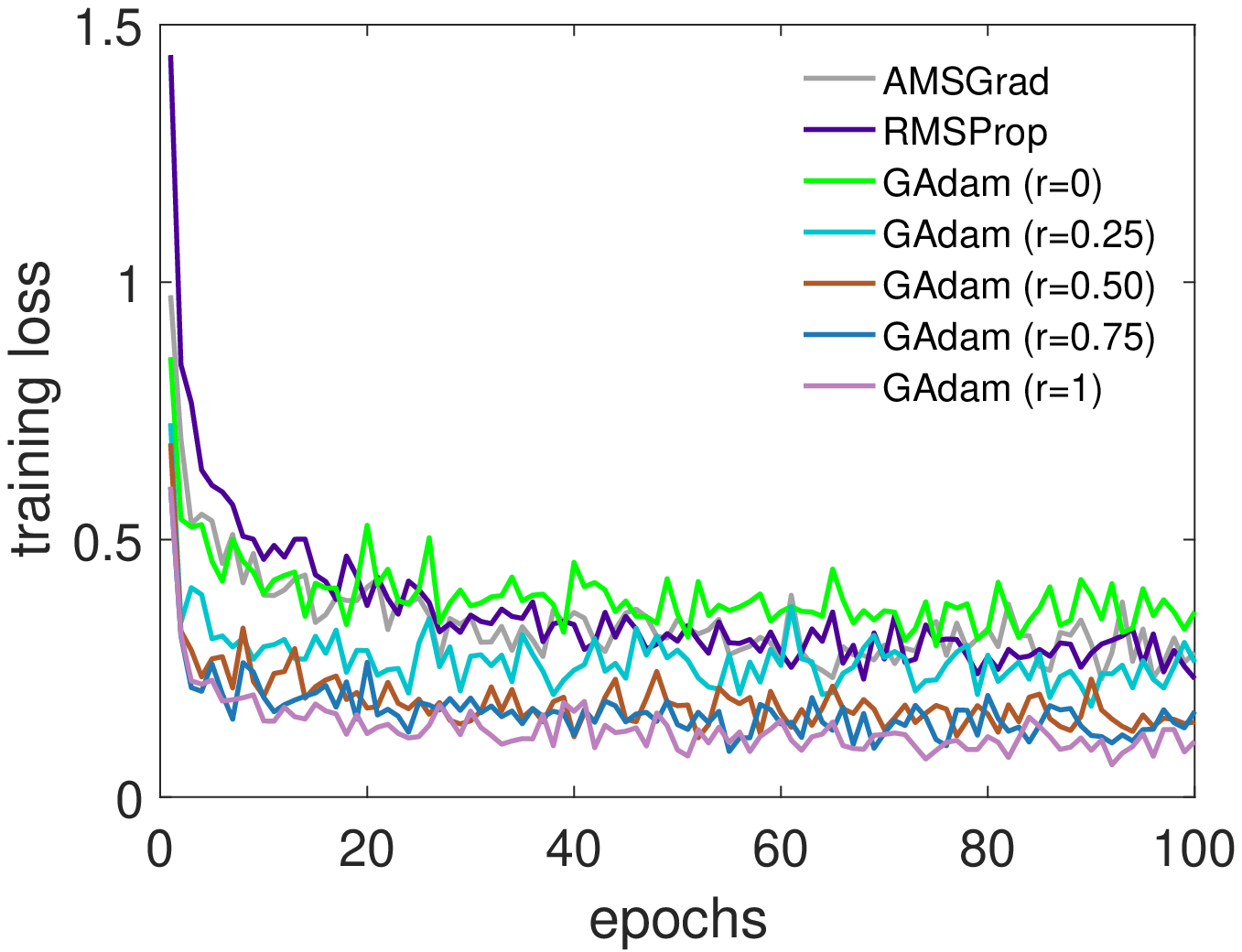}}\label{fig:Lenet_a}
\subfigure[]{\includegraphics[width=0.32\linewidth]{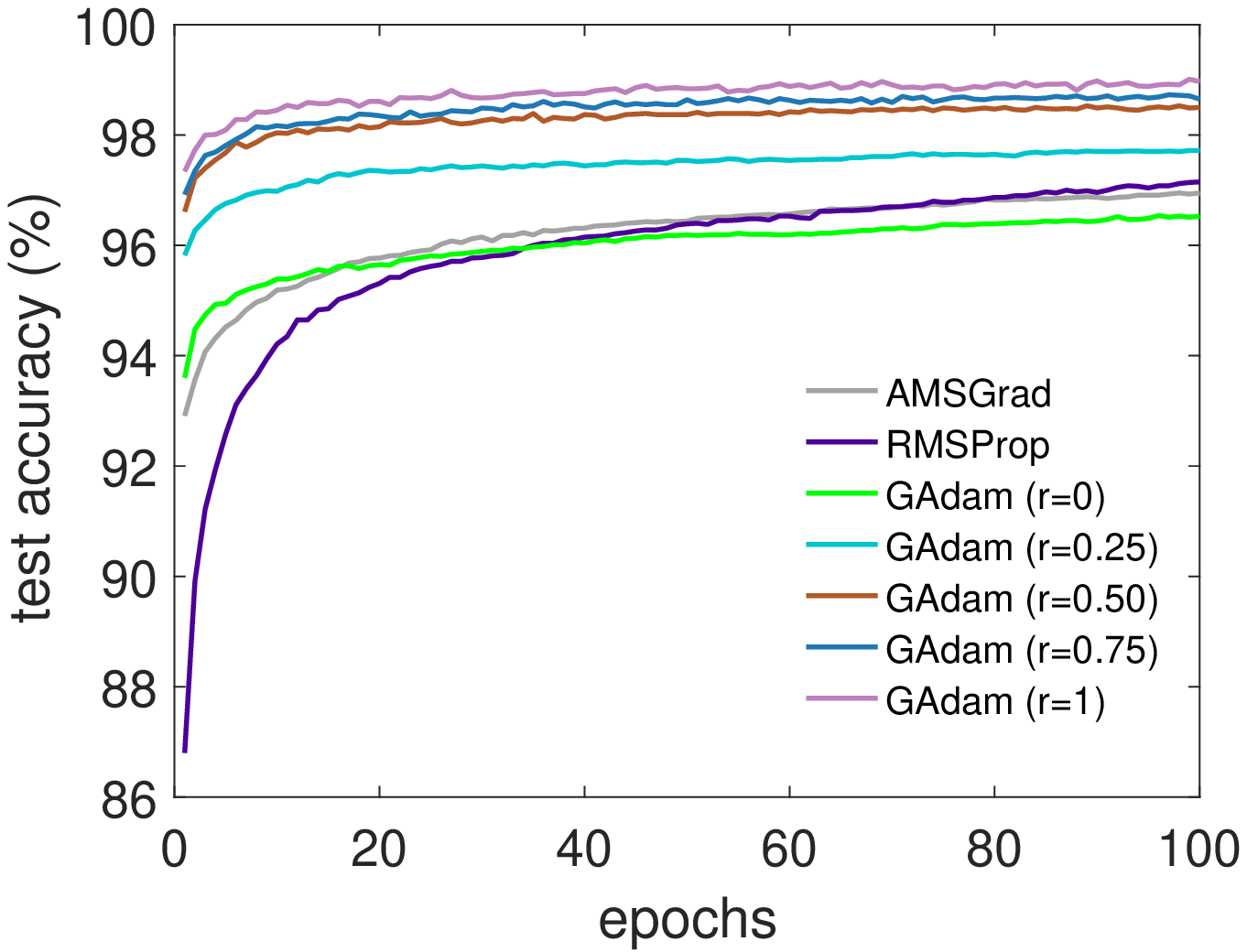}}\label{fig:Lenet_b}
\subfigure[]{\includegraphics[width=0.32\linewidth]{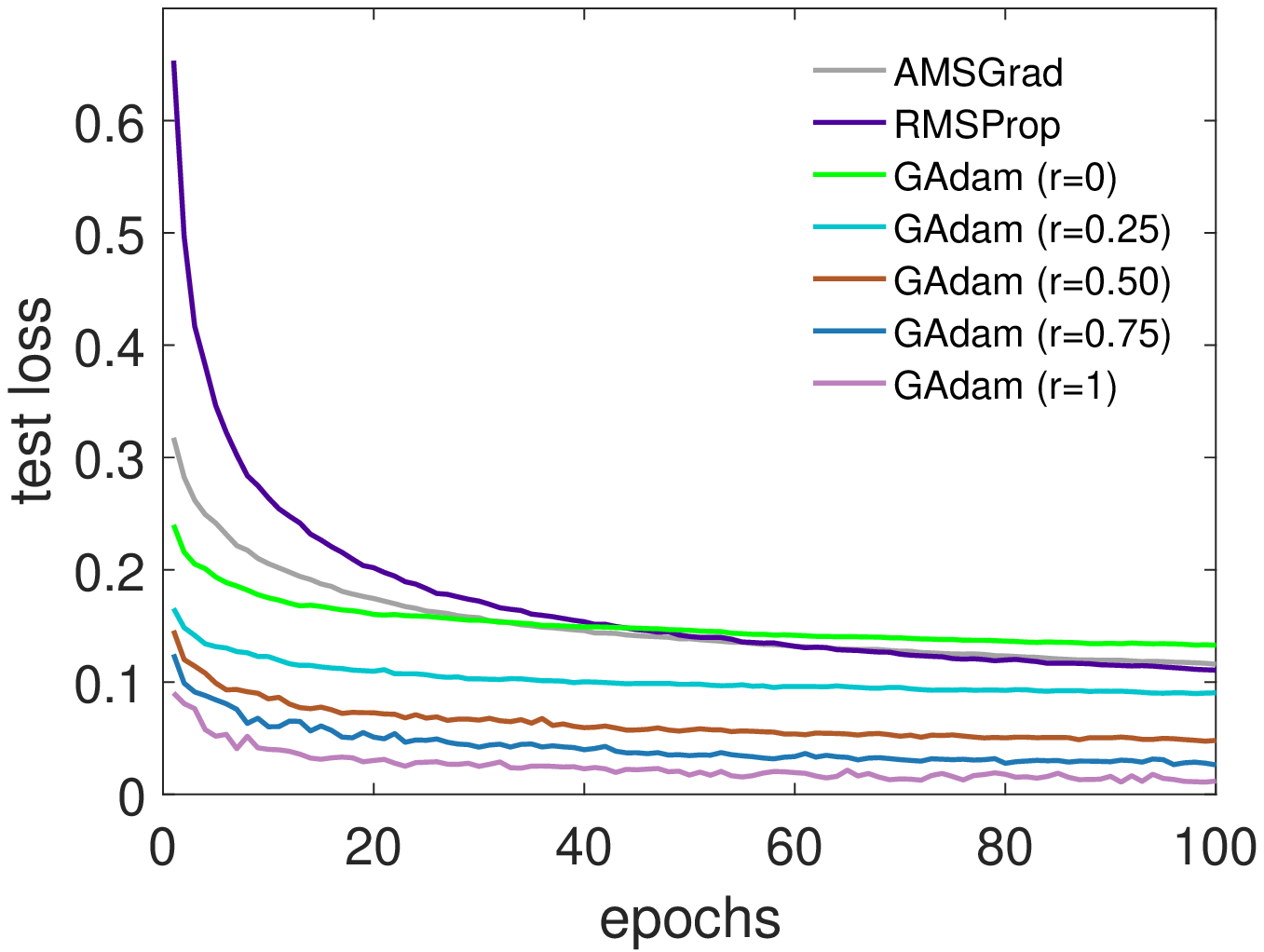}}\label{fig:Lenet_c}
\caption{Performance profiles of Generic Adam with $r=\{0, 0.25, 0.5, 0.75, 1\}$, RMSProp, and AMSGrad for training LeNet on the MNIST dataset. Figures (a), (b), and (c) illustrate training loss vs. epochs, test accuracy vs. epochs, and test loss vs. epochs, respectively.}
\label{fig:LeNet}
\vspace{-0.1cm}
\centering
\subfigure[]{\includegraphics[width=0.32\linewidth]{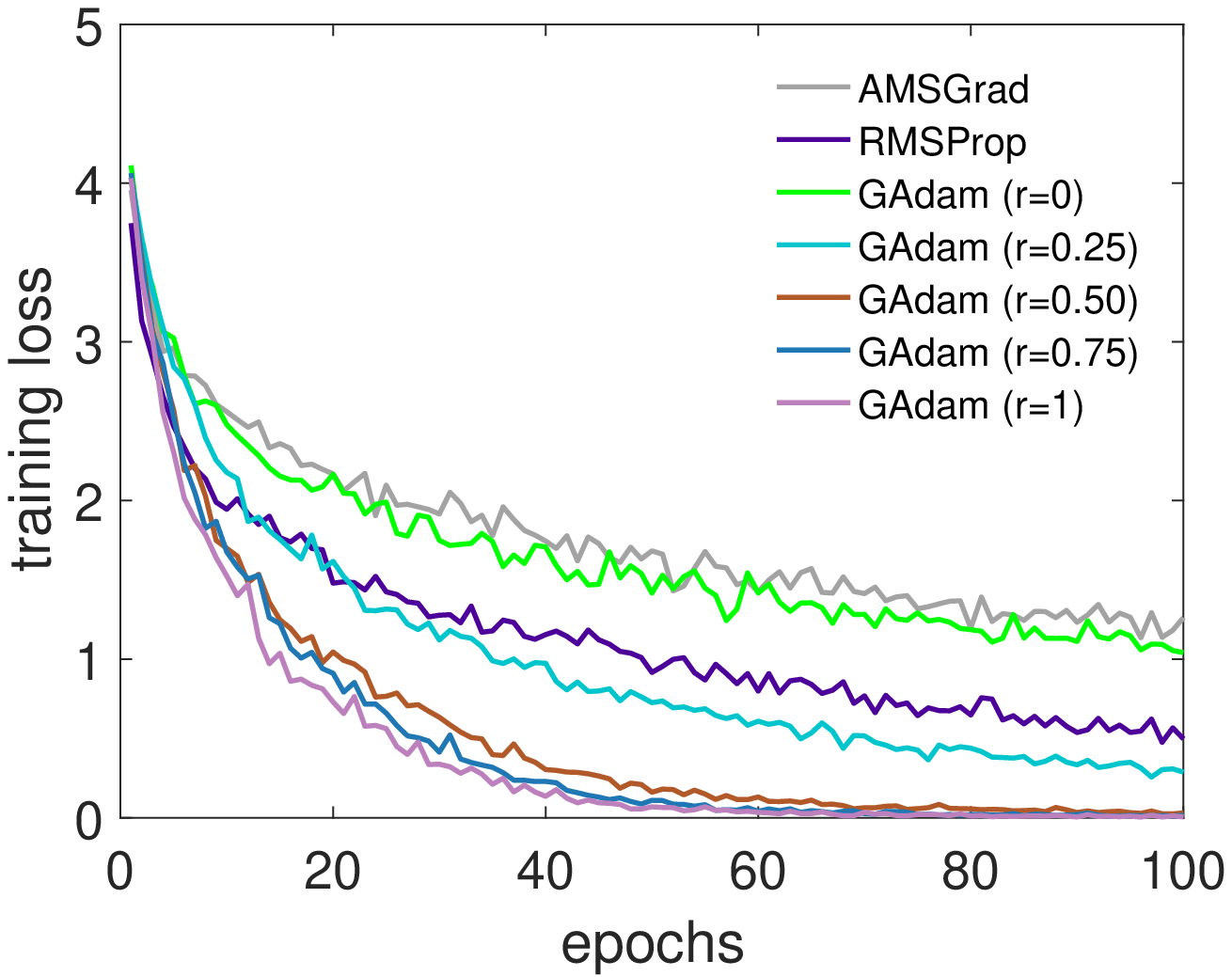}}\label{fig:resnet_a}
\subfigure[]{\includegraphics[width=0.32\linewidth]{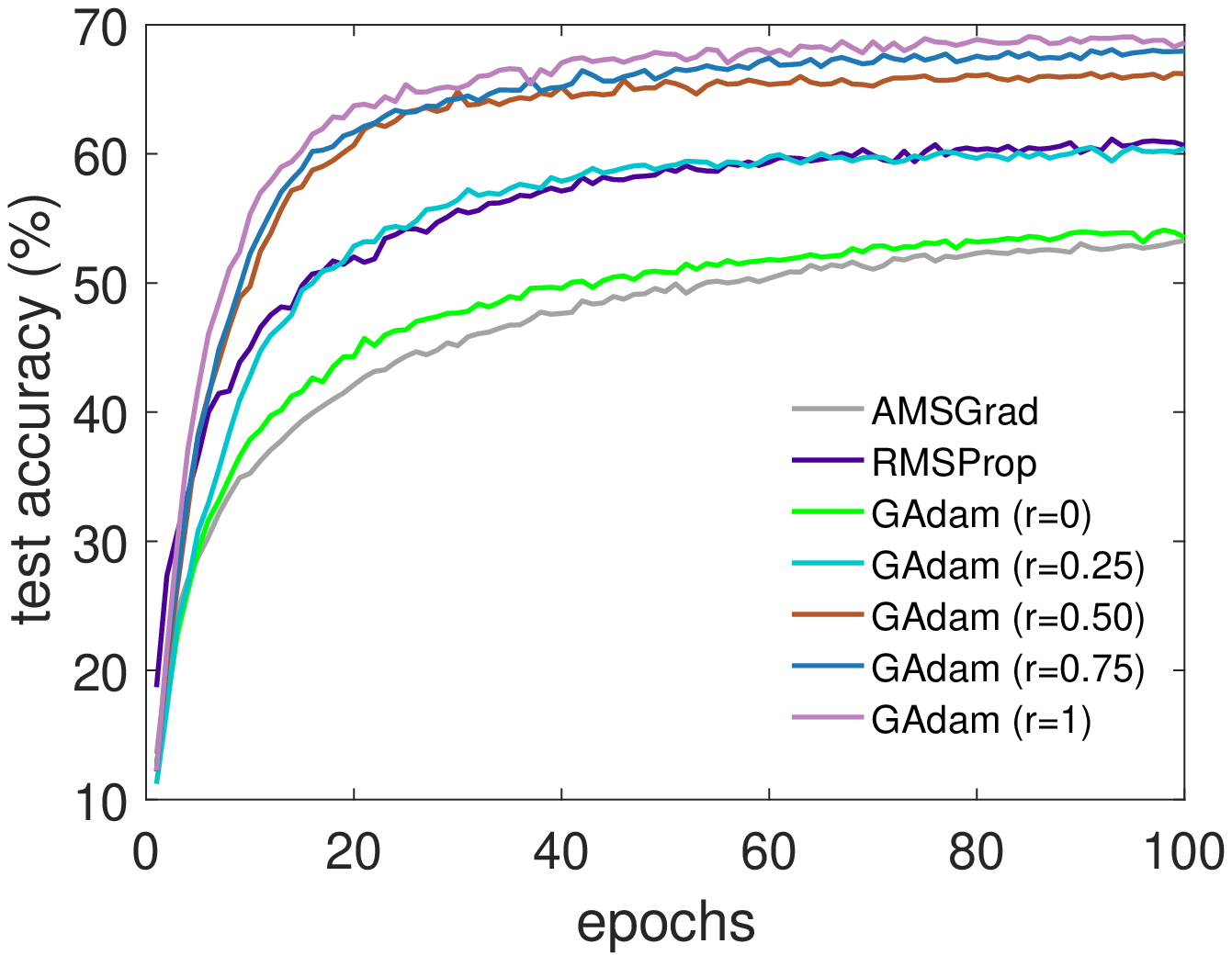}}\label{fig:resnet_b}
\subfigure[]{\includegraphics[width=0.32\linewidth]{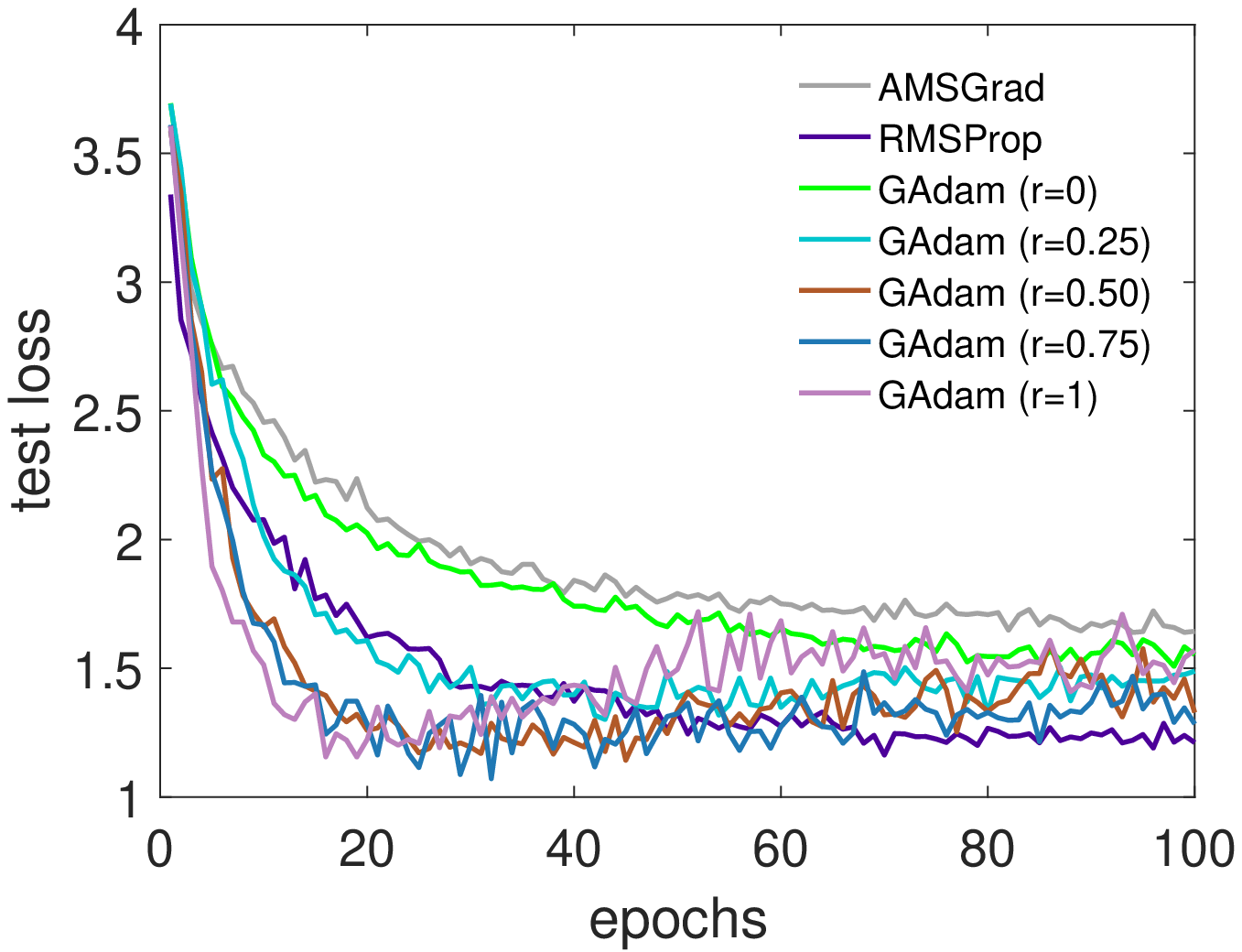}}\label{fig:resnet_c}
\caption{Performance profiles of Generic Adam with $r=\{0, 0.25, 0.5, 0.75, 1\}$, RMSProp, and AMSGrad for training ResNet on the CIFAR-100 dataset. Figures (a)-(c) illustrate training loss vs. epochs, test accuracy vs. epochs, and test loss vs. epochs, respectively.}
\label{fig:ResNet}
\vspace{-0.3cm}
\end{figure*}

\smallskip
\noindent
{\bf Sensitivity of parameter $s$.}\ ~
Now, we show the sensitivity of $s$ of the sufficient condition {(\bf SC)} by fixing $r\!=\!0.8$ and selecting $s$ from the collection $s= \{0.4, 0.6, 0.8\}$.
Figures \ref{fig:sensivitive}(c) and \ref{fig:sensivitive}(e) illustrate the sensitivity of parameter $s$ when Generic Adam is applied to solve the counterexample \eqref{counter-example}. The performance shows that when $s$ is fixed, smaller $r$ can lead to a faster and better convergence speed, which also coincides with the convergence results in Corollary \ref{poly-setting}.

\subsection{LeNet on MNIST and ResNet-18 on CIFAR-100}

In this subsection, we apply Generic Adam 
to train LeNet on the MNIST dataset and ResNet-18 on the CIFAR-100 dataset, respectively, in order to validate the convergence rates in Corollary \ref{poly-setting}.
Meanwhile, the comparisons between Generic Adam and AMSGrad \cite{Reddi2018on,chen2018convergence} are also provided to distinguish their differences in training deep neural networks.
We illustrate the performance profiles in three aspects: training loss vs. epochs, test loss vs. epochs, and test accuracy vs. epochs, respectively.
Besides, the architectures of LeNet and ResNet-18, and the statistics of the MNIST and CIFAR-100 datasets are described in the {\bf supplementary material}.

In the experiments, for Generic Adam, we set $\theta_{t}^{(r)} = 1 - (0.001 + 0.999r)/t^r$ with $r \in \{0, 0.25, 0.5, 0.75,1\}$ and $\beta_t =0.9$, respectively;
for RMSProp, we set $\beta_{t}=0$ and $\theta_{t}=1-\frac{1}{t}$ along with the parameter settings in \cite{mukkamala2017variants}.
For fairness, the base learning rates $\alpha_t$ in Generic Adam, RMSProp, and AMSGrad are all set as $0.001/\sqrt{t}$.
Figures \ref{fig:LeNet} and \ref{fig:ResNet} illustrate the results of Generic Adam with different $r$, RMSProp, and AMSGrad for training  LeNet on MNIST and training ResNet-18 on CIFAR-100, respectively.
We can see that AMSGrad and Adam (Generic Adam with $r=0$) decrease the training loss slowest and show the worst test accuracy among the compared optimizers.
One possible reason is due to the use of constant $\theta$ in AMSGrad and original Adam.
By Figures \ref{fig:LeNet} and \ref{fig:ResNet}, we can observe that the convergences of Generic Adam are extremely sensitive to the choice of parameter $\theta_{t}$.
Larger $r$ can contribute to a faster convergence rate of Generic Adam, which corroborates the theoretical result in Corollary \ref{poly-setting}.
Additionally, the test accuracies in Figures \ref{fig:LeNet}(b) and \ref{fig:ResNet}(b) indicate that a smaller training loss can contribute to a higher test accuracy for Generic Adam.

\section{Conclusions}\label{conclusion}
In this work, we delved into the convergences of Adam and RMSProp, and presented an easy-to-check sufficient condition to guarantee their convergences in the non-convex stochastic setting.
This sufficient condition merely depends on the base learning rate $\alpha_{t}$ and the linear combination parameter $\theta_{t}$ of second-order moments.
Relying on this sufficient condition, we found that the divergences of Adam and RMSProp are possibly due to the incorrect parameter settings of $\alpha_{t}$ and $\theta_{t}$.
In addition, we reformulated Adam as weighted AdaGrad with exponential moving average momentum, which provides a novel perspective for understanding Adam and RMSProp.
At last, the correctness of theoretical results was also verified via the counterexample in \cite{Reddi2018on} and training deep neural networks on real-world datasets.
\newpage
{
\bibliographystyle{ieee}
\bibliography{egbib}
}

\newpage
\newpage
\onecolumn

\vspace{0.5in}
\begin{center}
 \rule{6.875in}{0.7pt}\\ 
 {\Large\bf Supplementary Material for\\ `` A Sufficient Condition for Convergences of Adam and RMSProp ''}
 \rule{6.875in}{0.7pt}
\end{center}
\appendix

In this supplementary we give the complete proofs of our main theoretical results. Section \ref{key-lemmas} introduces the necessary lemmas for the proofs, and Section \ref{main-proof-all} proves the main propositions, theorems, and corollaries. Section \ref{LeNet-and-ResNet} describes the architectures of LeNet and ResNet-18, and the statistics of the training datasets and validation datasets of MNIST and CIFAR-100.

\paragraph{Notations} We use bold letters to represent vectors. The $k$-th component of a vector $\bm{v}_t$ is denoted as ${v}_{t, k}$. The inner product between two vectors $\bm{v}_t$ and $\bm{w}_t$ is denoted as $\langle \bm{v}_t, \bm{w}_t \rangle$. Other than that, all computations that involve vectors shall be understood in the component-wise way. We say a vector $\bm{v}_t \geq 0$ if every component of $\bm{v}_t$ is non-negative, and $\bm{v}_t \geq \bm{w}_t$ if $v_{t,k} \geq w_{t,k}$ for all $k=1, 2, \ldots, d$. The $\ell_1$ norm of a vector $\bm{v}_t$ is defined as $\norm{\bm{v}_t}_1 = \sum_{k=1}^d |{v}_{t, k}|$. The $\ell_2$ norm is defined as $\norm{\bm{v}_t}^2 =\langle \bm{v}_t, \bm{v}_t \rangle = \sum_{k=1}^d |{v}_{t,k}|^2$. Given a positive vector $\bm{\hat{\eta}}_t$, it will be helpful to define the following weighted norm: $\norm{\bm{v}_t}^2_{\bm{\eta}_t} = \langle \bm{v}_t, \bm{\hat{\eta}}_t \bm{v}_t \rangle = \sum_{k=1}^d \hat{\eta}_{t, k}|{v}_{t, k}|^2$.

\medskip

\section{Key Lemmas}\label{key-lemmas}
In this section we provide the necessary lemmas for the proofs of  Theorems \ref{convergence_in_expectation} and \ref{thm1-001}.
\begin{lemma}\label{lem5}
Given $S_0 > 0$ and a non-negative sequence $\{s_t\}$, let $S_t = S_0 + \sum_{i=1}^t s_i$ for $t \geq 1$. Then the following estimate holds
\begin{equation}
\sum_{t=1}^T \frac{s_t}{S_t} \leq \log(S_T) - \log(S_0).
\end{equation}
\end{lemma}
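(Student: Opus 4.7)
The plan is to prove this via a telescoping argument based on a one-step logarithmic inequality. Since the sequence $\{S_t\}$ is non-decreasing (because each $s_t \geq 0$) and strictly positive (since $S_0 > 0$), the ratios $S_t/S_{t-1} \geq 1$ are well-defined, which puts us in the regime where a clean comparison between the increments $s_t/S_t$ and $\log(S_t) - \log(S_{t-1})$ is available.

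First I would establish the pointwise inequality $\log(S_t) - \log(S_{t-1}) \geq s_t/S_t$ for each $t \geq 1$. The quickest route is to write $\log(S_t) - \log(S_{t-1}) = \log(S_t/S_{t-1})$ and apply the elementary bound $\log(u) \geq 1 - 1/u$ for $u \geq 1$. Setting $u = S_t/S_{t-1}$ gives
\begin{equation*}
\log(S_t) - \log(S_{t-1}) \;\geq\; 1 - \frac{S_{t-1}}{S_t} \;=\; \frac{S_t - S_{t-1}}{S_t} \;=\; \frac{s_t}{S_t}.
\end{equation*}
The elementary bound $\log(u) \geq 1 - 1/u$ itself follows either from concavity of $\log$ (its tangent at $u=1$ lies above it after reflection) or by noting that both sides agree at $u=1$ and the derivative of the left side dominates that of the right for $u \geq 1$.

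Then I would sum the pointwise inequality from $t=1$ to $t=T$; the left-hand side telescopes to $\log(S_T) - \log(S_0)$, yielding the claim. There is no genuine obstacle here: the only small care needed is to ensure $S_{t-1} > 0$ so that the ratio and logarithm are defined, which is immediate from $S_0 > 0$ and non-negativity of $\{s_t\}$. Degenerate cases such as $s_t = 0$ are harmless since both sides of the one-step inequality then vanish.
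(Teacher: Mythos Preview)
Your proof is correct and is essentially the same telescoping argument as the paper's: the paper phrases the pointwise step as a Riemann-sum comparison $\frac{S_t - S_{t-1}}{S_t} \leq \int_{S_{t-1}}^{S_t}\frac{dx}{x}$, which is exactly your inequality $\frac{s_t}{S_t} \leq \log(S_t)-\log(S_{t-1})$ obtained via $\log(u)\geq 1-1/u$. Summing and telescoping then gives the result in both cases.
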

\begin{proof}
The finite sum $\sum_{t=1}^T {s_t}/{S_t}$ can be interpreted as a Riemann sum  $\sum_{t=1}^T (S_t - S_{t-1})/S_t.$
Since $1/x$ is decreasing on the interval $(0, \infty)$, we have
$$\sum_{t=1}^T \frac{S_t-S_{t-1}}{S_t} \leq \int_{S_0}^{S_T} \frac{1}{x} d x = \log(S_T) - \log(S_0).$$
The proof is completed.
\end{proof}

\begin{lemma}[Abel's Lemma - Summation by parts]\label{lem2-003}
Let $\{u_t\}$ and $\{s_t\}$ be two non-negative sequences. Let $S_t = \sum_{i=1}^t s_i$ for $t \geq 1$. Then
\begin{equation}
\sum_{t=1}^T u_t s_t = \sum_{t=1}^{T-1} (u_t - u_{t+1})S_t + u_T S_T.
\end{equation}
\end{lemma}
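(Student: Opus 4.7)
The plan is to establish the identity by the classical summation-by-parts trick: express each $s_t$ as a telescoping difference $s_t = S_t - S_{t-1}$ (with the convention $S_0 = 0$), substitute into the left-hand side, split the resulting expression into two sums indexed by $S_t$ and $S_{t-1}$, reindex one of them, and then collect like terms so that the coefficient of each $S_t$ is exposed.

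Concretely, after substituting $s_t = S_t - S_{t-1}$, the left-hand side becomes $\sum_{t=1}^T u_t(S_t - S_{t-1})$, which splits as $\sum_{t=1}^T u_t S_t \;-\; \sum_{t=1}^T u_t S_{t-1}$. I would then shift the index of the second sum by one and use the convention $S_0 = 0$ to rewrite it as $\sum_{t=1}^{T-1} u_{t+1} S_t$. Pairing the two sums across the common index range $t = 1, \ldots, T-1$ produces the contributions $(u_t - u_{t+1}) S_t$, while the leftover $t = T$ term from the first sum contributes the boundary term $u_T S_T$. Combining these two pieces yields exactly the claimed identity.

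There is essentially no obstacle here; the argument is pure bookkeeping, and the only real care needed is to avoid an off-by-one error in the reindexing, which is handled cleanly by the convention $S_0 = 0$. Note that the non-negativity hypothesis on $\{u_t\}$ and $\{s_t\}$ plays no role in the algebraic identity itself, and is presumably imposed because the lemma will be invoked in that regime when estimating quantities such as $\sum_{t} \alpha_t \sqrt{1 - \theta_t}$ later in the convergence analysis.
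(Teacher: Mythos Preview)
Your proposal is correct and matches the paper's proof essentially line for line: the paper also sets $S_0 = 0$, writes $s_t = S_t - S_{t-1}$, splits and reindexes to obtain $\sum_{t=1}^{T-1} u_t S_t - \sum_{t=1}^{T-1} u_{t+1} S_t + u_T S_T$, and then collects terms. Your remark that the non-negativity hypothesis is not used in the identity itself is also accurate.
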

\begin{proof}
Let $S_0 = 0$. Then
\begin{equation}
\sum_{t=1}^T u_t s_t = \sum_{t=1}^T u_t (S_t - S_{t-1}) = \sum_{t=1}^{T-1} u_tS_t - \sum_{t=1}^{T-1} u_{t+1}S_t + u_TS_T = \sum_{t=1}^{T-1}(u_t - u_{t+1})S_t + u_TS_T.
\end{equation}
The proof is completed.
\end{proof}

\begin{lemma}\label{lem2-002}
Let $\{\theta_t\}$ and $\{\alpha_t\}$ satisfy the restrictions (R2) and (R3). For any $i \leq t$, we have
\begin{equation}
\chi_t \leq C_0 \chi_i \text{~~and~~} \alpha_t \leq C_0 \alpha_i.
\end{equation}

\end{lemma}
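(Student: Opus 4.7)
The plan is to unwind the definitions in (R2) and (R3) and obtain both inequalities essentially by direct chaining. The key observation is that although $\{\chi_t\}$ itself may not be monotone, (R3) sandwiches it between a non-increasing sequence $\{a_t\}$ and $C_0\{a_t\}$, and this is exactly what is needed to transfer monotonicity up to the multiplicative constant $C_0$.

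For the first inequality $\chi_t \leq C_0\chi_i$ with $i \leq t$, I would invoke the upper sandwich bound from (R3), namely $\chi_t \leq C_0 a_t$, then use that $\{a_t\}$ is non-increasing to write $a_t \leq a_i$, and finally apply the lower sandwich bound $a_i \leq \chi_i$. Concatenating gives $\chi_t \leq C_0 a_t \leq C_0 a_i \leq C_0 \chi_i$.

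For the second inequality $\alpha_t \leq C_0 \alpha_i$, I would start from the definition $\alpha_t = \chi_t\sqrt{1-\theta_t}$. Since (R2) says $\theta_t$ is non-decreasing, $1-\theta_t$ is non-increasing, hence $\sqrt{1-\theta_t} \leq \sqrt{1-\theta_i}$ for $i \leq t$. Combining with the already-established $\chi_t \leq C_0 \chi_i$, I obtain $\alpha_t = \chi_t\sqrt{1-\theta_t} \leq C_0 \chi_i \sqrt{1-\theta_i} = C_0 \alpha_i$.

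Since both steps are just two-line manipulations of the stated restrictions, there is no real obstacle; the only thing to be careful about is to not confuse the inequality direction for $a_t$ (non-increasing means $a_t \leq a_i$ when $i \leq t$, not the reverse). The entire argument is purely algebraic and does not involve the stochastic ingredients $\{\beta_t\}$, $\{\bm{g}_t\}$, or any of the constants $C_1, C_3, C_4$ appearing later; it is a structural consistency lemma that will be used downstream to control sums like $\sum_t \alpha_t\sqrt{1-\theta_t}$ and to replace $\chi_t$ by $C_0 \chi_1$ when a uniform upper bound is needed.
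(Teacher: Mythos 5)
Your argument is correct and coincides with the paper's own proof: both establish $\chi_t \leq C_0\chi_i$ via the sandwich bound $\chi_t \leq C_0 a_t \leq C_0 a_i \leq C_0 \chi_i$ from (R3), and both then deduce $\alpha_t \leq C_0\alpha_i$ by multiplying through by $\sqrt{1-\theta_t} \leq \sqrt{1-\theta_i}$ using the monotonicity of $\theta_t$ from (R2). The only difference is presentational, not substantive.
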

\begin{proof}
For any $i \leq t$, since the sequence $\{a_t\}$ is non-increasing, we have $a_t \leq a_i$. Hence,
\begin{equation*}
\chi_t = \frac{\alpha_t}{\sqrt{1-\theta_t}} \leq C_0 a_t \leq C_0 a_i \leq C_0 \frac{\alpha_i}{\sqrt{1-\theta_i}} = C_0 \chi_i,
\end{equation*}
which proves the first inequality. On the other hand, since $\{\theta_t\}$ is non-decreasing, it holds
\begin{equation*}
\alpha_t \leq C_0 \frac{\sqrt{1-\theta_t}}{\sqrt{1-\theta_i}} \alpha_i \leq C_0 \alpha_i = C_0 \alpha_i.
\end{equation*}
The proof is completed.
\end{proof}

\medskip

Let $\Theta_{(t,i)} = \prod_{j=i+1}^t \theta_j$ for $i < t$ and $\Theta_{(t,t)}=1$ by convention. 

\begin{lemma}\label{lem2-001}
Fix a constant $\theta'$ with $\beta^2 < \theta' < \theta$. Let $C_1$ be as given as Eq.~\eqref{Constant-C1} in the main paper. For any $i \leq t$, we have
\begin{equation}
\Theta_{(t,i)} \geq C_1 (\theta')^{t-i}.
\end{equation}
\end{lemma}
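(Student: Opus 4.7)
The plan is to factor out $\theta'$ from each term and reduce everything to showing a product inequality in the ratios $r_j := \theta_j/\theta'$. Writing
\[
\Theta_{(t,i)} \;=\; \prod_{j=i+1}^t \theta_j \;=\; (\theta')^{t-i}\,\prod_{j=i+1}^t r_j,
\]
the claim is equivalent to $\prod_{j=i+1}^t r_j \geq C_1 = \prod_{j=1}^N r_j$. The sign of each $r_j - 1$ is completely determined by the definition of $N$: by choice of $N$, we have $r_j < 1$ for $j \leq N$ and $r_j \geq 1$ for $j > N$ (using that $\{\theta_j\}$ is non-decreasing by (R2), together with $\lim_{t\to\infty}\theta_t = \theta > \theta'$, to ensure such an $N$ exists and is finite).

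First I would dispose of the factors with index $j > N$ appearing in $\prod_{j=i+1}^t r_j$: they are all $\geq 1$, so dropping them only decreases the product. This gives
\[
\prod_{j=i+1}^t r_j \;\geq\; \prod_{j\,=\,\max(i+1,\,1)}^{\min(t,\,N)} r_j,
\]
with the convention that an empty product equals $1$ (which covers the easy case $i \geq N$, where the right side is $1 \geq C_1$ since $C_1 \leq 1$ as a product of numbers in $(0,1)$).

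Next I would enlarge the index range on the right to $\{1,\dots,N\}$. Since every $r_j$ with $j \leq N$ lies in $(0,1)$, inserting the missing factors $r_j$ for $j \in \{1,\dots,i\}\cup\{\min(t,N)+1,\dots,N\}$ only makes the product smaller, so
\[
\prod_{j=\max(i+1,1)}^{\min(t,N)} r_j \;\geq\; \prod_{j=1}^N r_j \;=\; C_1.
\]
Chaining the two inequalities and multiplying by $(\theta')^{t-i}$ gives the claim.

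The only subtlety, and the place where I would be careful, is the bookkeeping of the index ranges in the degenerate cases (namely $i \geq N$, $t \leq N$, and $i < N < t$); handling an empty product as $1$ makes all three cases uniform. Beyond that, the argument is purely combinatorial and uses (R2) only through the existence of a finite $N$ and the monotonicity determining the sign of $r_j - 1$; no analytic estimate is required.
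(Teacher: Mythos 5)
Your proof is correct and uses essentially the same argument as the paper: bound the factors $\theta_j$ with $j>N$ below by $\theta'$, then pad the remaining product with the missing ratios $\theta_j/\theta' < 1$ for $j\leq N$ to reach $C_1$. The only difference is cosmetic: you factor out $(\theta')^{t-i}$ at the outset and work directly with the ratios $r_j$, and you track the index ranges $\max(i+1,1)$ and $\min(t,N)$ explicitly so the degenerate cases $i\geq N$ and $t\leq N$ are handled uniformly via the empty-product convention, whereas the paper writes the chain for the generic case $i<N<t$ and leaves the edge cases implicit.
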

\begin{proof}
For any $i \leq t$, since $\theta_j \geq \theta'$ for $j \geq N$, and $\theta_j < \theta'$ for $j < N$, we have
\[\Theta_{(t,i)} = \prod_{j=i+1}^t\theta_j \geq \left(\prod_{j=i+1}^N\theta_j\right)(\theta')^{t-N} = \left(\prod_{j=i+1}^N({\theta_j}/{\theta'})\right)(\theta')^{t-i}
\geq \left(\prod_{j=1}^N ({\theta_j}/{\theta'})\right)(\theta')^{t-i}.\]
We take the constant $C_1 = \prod_{j=1}^N (\theta_j/\theta')$, where $N$ is the maximum of the indices for which $\theta_j < \theta'$. The proof is completed.
\end{proof}
\begin{remark}
If $\theta_t = \theta$ is a constant, we have $\Theta_{(t,i)} = \theta^{t-i}$. In this case we can take $\theta' = \theta$ and $C_1 = 1$.
\end{remark}

\begin{lemma}\label{lem1-001}
Let $\gamma := \beta^2/{\theta'}$. We have the following estimate
\begin{equation}
\bm{m_t}^2 \leq \frac{1}{C_1(1-\gamma)(1-\theta_t)}\bm{v}_t,~\forall t.
\end{equation}
\end{lemma}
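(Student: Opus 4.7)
The plan is to unroll both recursions, apply Cauchy--Schwarz with a carefully chosen weighting, and then invoke the previous lemma bounding $\Theta_{(t,i)}$ from below.

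First, I would unroll the momentum recursion $\bm{m}_s = \beta_s \bm{m}_{s-1} + (1-\beta_s)\bm{g}_s$ (with $\bm{m}_0=\bm{0}$) to write
\[
\bm{m}_t \;=\; \sum_{i=1}^{t} B_{(t,i)}(1-\beta_i)\,\bm{g}_i, \qquad B_{(t,i)} := \prod_{j=i+1}^{t}\beta_j,
\]
and similarly the second-moment recursion to obtain
\[
\bm{v}_t \;=\; \Theta_{(t,0)}\bm{v}_0 + \sum_{i=1}^{t} \Theta_{(t,i)}(1-\theta_i)\,\bm{g}_i^2 \;\geq\; \sum_{i=1}^{t} \Theta_{(t,i)}(1-\theta_i)\,\bm{g}_i^2,
\]
which holds coordinate-wise since $\bm{v}_0 = \bm{\epsilon} \ge \bm{0}$.

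Next, I would apply the coordinate-wise Cauchy--Schwarz inequality with weights $w_i = \Theta_{(t,i)}(1-\theta_i)$:
\[
\bm{m}_t^2 \;=\; \Bigl(\sum_{i=1}^{t} B_{(t,i)}(1-\beta_i)\,\bm{g}_i\Bigr)^{2} \;\leq\; \Bigl(\sum_{i=1}^{t} \frac{B_{(t,i)}^{2}(1-\beta_i)^{2}}{\Theta_{(t,i)}(1-\theta_i)}\Bigr)\Bigl(\sum_{i=1}^{t} \Theta_{(t,i)}(1-\theta_i)\,\bm{g}_i^{2}\Bigr).
\]
The second factor is bounded above by $\bm{v}_t$, so everything reduces to bounding the first factor by $\tfrac{1}{C_1(1-\gamma)(1-\theta_t)}$.

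For the scalar coefficient, I would use four simple bounds in sequence: $B_{(t,i)} \le \beta^{\,t-i}$ because $\beta_j \le \beta$; $(1-\beta_i)^2 \le 1$; the non-decreasing property of $\{\theta_t\}$ giving $1-\theta_i \ge 1-\theta_t$; and finally Lemma~\ref{lem2-001} in the form $\Theta_{(t,i)} \ge C_1 (\theta')^{t-i}$. Combining these yields
\[
\frac{B_{(t,i)}^{2}(1-\beta_i)^{2}}{\Theta_{(t,i)}(1-\theta_i)} \;\leq\; \frac{1}{C_1(1-\theta_t)}\Bigl(\frac{\beta^{2}}{\theta'}\Bigr)^{t-i} \;=\; \frac{\gamma^{\,t-i}}{C_1(1-\theta_t)},
\]
and summing the geometric series (which converges because $\gamma = \beta^2/\theta' < 1$ by the choice $\beta^2 < \theta'$) gives the claimed bound $\tfrac{1}{C_1(1-\gamma)(1-\theta_t)}$.

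There is no real obstacle here; the only delicate point is making the right choice of weight $w_i$ in the Cauchy--Schwarz step so that (i) the ``energy'' factor collapses to at most $\bm{v}_t$ and (ii) the ``coefficient'' factor is a geometric series with ratio exactly $\gamma = \beta^2/\theta'$, which is precisely where the restriction $\theta > \beta^2$ is used.
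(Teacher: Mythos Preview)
Your proposal is correct and follows essentially the same approach as the paper's proof: unroll both recursions, apply coordinate-wise Cauchy--Schwarz with weights $\Theta_{(t,i)}(1-\theta_i)$ so that the energy factor is bounded by $\bm{v}_t$, and then bound the coefficient sum via $B_{(t,i)}\le\beta^{t-i}$, $(1-\beta_i)^2\le 1$, $1-\theta_i\ge 1-\theta_t$, and Lemma~\ref{lem2-001} to obtain a geometric series in $\gamma$. No differences worth noting.
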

\begin{proof}
Let $B_{(t, i)} = \prod_{j = i+1}^t \beta_j$ for $i < t$ and $B_{(t,t)} = 1$ by convention. By the iteration formula $\bm{m}_t = \beta_t \bm{m}_{t-1} + (1-\beta_t)\bm{g}_t$ and $\bm{m}_0 = \bm{0}$, we have
\begin{equation*}
\bm{m}_t = \sum_{i=1}^t \left(\prod_{j=i+1}^t\beta_j\right) (1-\beta_i)\bm{g}_i = \sum_{i=1}^t B_{(t,i)}(1-\beta_i)\bm{g}_i.
\end{equation*}
Similarly, by $\bm{v}_t = \theta_t \bm{v}_{t-1} + (1-\theta_t)\bm{g}_t^2$ and $\bm{v}_0 = \bm{\epsilon}$, we have
\begin{equation*}
\bm{v}_t = \left(\prod_{j=1}^t\theta_j\right)\bm{\epsilon} + \sum_{i=1}^t \left(\prod_{j=i+1}^t \theta_j\right)\left({1-\theta_i}\right) \bm{g}_i^2
\geq \sum_{i=1}^t \Theta_{(t,i)}(1-\theta_i) \bm{g}_i^2.
\end{equation*}
It follows by arithmetic inequality that
\begin{equation*}
\begin{split}
\bm{m}_t^2 &= \left( \sum_{i=1}^t \frac{(1-\beta_i)B_{(t,i)}}{\sqrt{(1-\theta_i)\Theta_{(t,i)}}} \sqrt{(1-\theta_i)\Theta_{(t,i)}} \bm{g}_i\right)^2 \\
&\leq \left(\sum_{i=1}^t \frac{(1 - \beta_i)^2B_{(t,i)}^2}{(1-\theta_i)\Theta_{(t,i)}}\right)
\left(\sum_{i=1}^t \Theta_{(t,i)}(1-\theta_i)\bm{g}_i^2\right)
\leq \left(\sum_{i=1}^t \frac{(1-\beta_i)^2B_{(t,i)}^2}{(1-\theta_i)\Theta_{(t,i)}}\right) \bm{v}_t.
\end{split}
\end{equation*}
Note that $\{\theta_t\}$ is non-decreasing by (\textbf{R}2), and $B_{(t,i)} \leq \beta^{t-i}$ by (\textbf{R}1). By Lemma \ref{lem2-001}, we have
\begin{equation*}\label{1-005}
\sum_{i=1}^t \frac{(1-\beta_i)^2B_{(t,i)}^2}{(1-\theta_i)\Theta_{(t,i)}}
\leq \frac{1}{C_1(1-\theta_t)}\sum_{i=1}^t \left(\frac{\beta^2}{\theta'}\right)^{t-i} \leq \frac{1}{C_1(1-\theta_t)}\sum_{k=0}^{t-1} \gamma^k
\leq \frac{1}{C_1(1-\gamma)(1-\theta_t)}.
\end{equation*}
The proof is completed.
\end{proof}


\medskip

Let $\bm{\Delta}_t := \bm{x}_{t+1} - \bm{x}_t = -\alpha_t\bm{m}_t/\sqrt{\bm{v}_t}$. Let $\bm{\hat{v}}_t = \theta_t \bm{v}_{t-1} + (1-\theta_t) \bm{\sigma_t}^2$, where $\bm{\sigma_t}^2 = \mathbb{E}_t \left[\bm{g}_t^2\right]$ and let $\bm{\hat{\eta}_t} = \alpha_t/\sqrt{\bm{\hat{v}_t}}$. 

\begin{lemma}\label{lem1-002}
The following equality holds
\begin{equation}\label{1-008}
\begin{split}
\bm{\Delta}_t - \frac{\beta_t\alpha_t}{\sqrt{\theta_t}\alpha_{t-1}} \bm{\Delta}_{t-1}
= -(1 - \beta_t)\bm{\hat{\eta}}_t\bm{g}_t + \bm{\hat{\eta}}_t\bm{g}_t\frac{(1-\theta_t)\bm{g}_t}{\sqrt{\bm{v}}_t }\bm{A}_t
+ \bm{\hat{\eta}}_t\bm{\sigma}_t\frac{(1-\theta_t)\bm{g}_t}{\sqrt{\bm{v}_t}}\bm{B}_t,
\end{split}
\end{equation}
where
\begin{equation*}
\begin{split}
\bm{A}_t &= \frac{\beta_t\bm{m}_{t-1}}{\sqrt{\bm{v}_t}+\sqrt{\theta_t\bm{v}_{t-1}}}+\frac{(1-\beta_t)\bm{g}_t}{\sqrt{\bm{v}}_t+\sqrt{\bm{\hat{v}}_t}},\\
\bm{B}_t &= \left(\frac{\beta_t\bm{m}_{t-1}}{\sqrt{\theta_t\bm{v}_{t-1}}}\frac{\sqrt{1-\theta_t}\bm{g}_t}{\sqrt{\bm{v}}_t + \sqrt{\theta_t\bm{v}_{t-1}}}\frac{\sqrt{1-\theta_t}\bm{\sigma}_t}{\sqrt{\bm{\hat{v}}_t}+\sqrt{\theta_t\bm{v}_{t-1}}}\right)
- \frac{(1-\beta_t)\bm{\sigma}_t}{\sqrt{\bm{v}_t}+\sqrt{\bm{\hat{v}}_t}}.
\end{split}
\end{equation*}
\end{lemma}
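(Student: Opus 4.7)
The plan is to verify this identity by direct algebraic manipulation; no stochastic or limiting arguments are needed. First I would substitute $\bm{\Delta}_t = -\alpha_t \bm{m}_t/\sqrt{\bm{v}_t}$ together with $\bm{m}_t = \beta_t\bm{m}_{t-1} + (1-\beta_t)\bm{g}_t$, and $\bm{\Delta}_{t-1} = -\alpha_{t-1}\bm{m}_{t-1}/\sqrt{\bm{v}_{t-1}}$, into the left-hand side. After regrouping the two $\beta_t\bm{m}_{t-1}$ contributions, the LHS reduces to
\begin{equation*}
-\frac{\alpha_t(1-\beta_t)\bm{g}_t}{\sqrt{\bm{v}_t}} + \alpha_t\beta_t\bm{m}_{t-1}\Bigl(\frac{1}{\sqrt{\theta_t\bm{v}_{t-1}}} - \frac{1}{\sqrt{\bm{v}_t}}\Bigr).
\end{equation*}
The task then becomes decomposing these two terms so as to extract $-(1-\beta_t)\bm{\hat{\eta}}_t\bm{g}_t$ together with the precise factored shapes of $\bm{A}_t$ and $\bm{B}_t$.

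The key algebraic tool is the rationalization $\frac{1}{\sqrt{a}}-\frac{1}{\sqrt{b}} = \frac{b-a}{\sqrt{a}\sqrt{b}(\sqrt{a}+\sqrt{b})}$, which I would apply with three pairs: $(a,b)=(\bm{\hat{v}}_t,\bm{v}_t)$ giving $b-a = (1-\theta_t)(\bm{g}_t^2-\bm{\sigma}_t^2)$; $(a,b)=(\theta_t\bm{v}_{t-1},\bm{v}_t)$ giving $b-a = (1-\theta_t)\bm{g}_t^2$; and $(a,b)=(\theta_t\bm{v}_{t-1},\bm{\hat{v}}_t)$ giving $b-a = (1-\theta_t)\bm{\sigma}_t^2$. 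All three numerators follow directly from $\bm{v}_t = \theta_t\bm{v}_{t-1}+(1-\theta_t)\bm{g}_t^2$ and the definition $\bm{\hat{v}}_t = \theta_t\bm{v}_{t-1}+(1-\theta_t)\bm{\sigma}_t^2$.

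For the $(1-\beta_t)$ term, I would write $-1/\sqrt{\bm{v}_t} = -1/\sqrt{\bm{\hat{v}}_t} + (1/\sqrt{\bm{\hat{v}}_t}-1/\sqrt{\bm{v}_t})$ and apply the first rationalization. The leading piece $-\alpha_t(1-\beta_t)\bm{g}_t/\sqrt{\bm{\hat{v}}_t}$ is exactly $-(1-\beta_t)\bm{\hat{\eta}}_t\bm{g}_t$. The correction carries a factor $\bm{g}_t(\bm{g}_t^2-\bm{\sigma}_t^2) = \bm{g}_t\cdot\bm{g}_t^2 - \bm{\sigma}_t\cdot\bm{g}_t\bm{\sigma}_t$, which splits cleanly into the $(1-\beta_t)$ summand of $\bm{\hat{\eta}}_t\bm{g}_t\frac{(1-\theta_t)\bm{g}_t}{\sqrt{\bm{v}_t}}\bm{A}_t$ and the $-(1-\beta_t)$ summand of $\bm{\hat{\eta}}_t\bm{\sigma}_t\frac{(1-\theta_t)\bm{g}_t}{\sqrt{\bm{v}_t}}\bm{B}_t$.

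For the $\beta_t$ term I would first apply the second rationalization, producing a $(1-\theta_t)\bm{g}_t^2$ in the numerator and $\sqrt{\bm{v}_t}\sqrt{\theta_t\bm{v}_{t-1}}(\sqrt{\bm{v}_t}+\sqrt{\theta_t\bm{v}_{t-1}})$ in the denominator. I would then replace the remaining $1/\sqrt{\theta_t\bm{v}_{t-1}}$ by $1/\sqrt{\bm{\hat{v}}_t} + (1/\sqrt{\theta_t\bm{v}_{t-1}}-1/\sqrt{\bm{\hat{v}}_t})$ and apply the third rationalization to the correction. The $1/\sqrt{\bm{\hat{v}}_t}$ piece is exactly the $\beta_t$ summand of $\bm{\hat{\eta}}_t\bm{g}_t\frac{(1-\theta_t)\bm{g}_t}{\sqrt{\bm{v}_t}}\bm{A}_t$, and the correction reassembles into the triple-product summand of $\bm{\hat{\eta}}_t\bm{\sigma}_t\frac{(1-\theta_t)\bm{g}_t}{\sqrt{\bm{v}_t}}\bm{B}_t$ after the two factors of $\sqrt{1-\theta_t}$ are distributed symmetrically between the denominators $\sqrt{\bm{v}_t}+\sqrt{\theta_t\bm{v}_{t-1}}$ and $\sqrt{\bm{\hat{v}}_t}+\sqrt{\theta_t\bm{v}_{t-1}}$. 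The only real obstacle is bookkeeping: ensuring each rationalization lands in the exact factored form prescribed by $\bm{A}_t$ and $\bm{B}_t$, including that symmetric split of the $\sqrt{1-\theta_t}$ factors inside $\bm{B}_t$'s triple product.
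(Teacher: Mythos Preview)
Your proposal is correct and follows essentially the same approach as the paper's proof: substitute the definitions to obtain the two-term expression $-\frac{\alpha_t(1-\beta_t)\bm{g}_t}{\sqrt{\bm{v}_t}} + \alpha_t\beta_t\bm{m}_{t-1}\bigl(\frac{1}{\sqrt{\theta_t\bm{v}_{t-1}}}-\frac{1}{\sqrt{\bm{v}_t}}\bigr)$, then handle each term via the same rationalization identities you list, including the second split of $1/\sqrt{\theta_t\bm{v}_{t-1}}$ through $1/\sqrt{\bm{\hat{v}}_t}$ in the $\beta_t$ term. The paper carries out exactly these steps with the same three rationalizations and the same distribution of the $\sqrt{1-\theta_t}$ factors into the triple product of $\bm{B}_t$.
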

\begin{proof}
We have
\begin{equation}
\begin{split}
\bm{\Delta}_t - \frac{\beta_t\alpha_t}{\sqrt{\theta_t}\alpha_{t-1}} \bm{\Delta}_{t-1} =~& -\frac{\alpha_t\bm{m_t}}{\sqrt{\bm{v}_t}} + \frac{\beta_t\alpha_{t}\bm{m}_{t-1}}{\sqrt{\theta_t\bm{v}_{t-1}}}
= -\alpha_t \left(\frac{\bm{m}_t}{\sqrt{\bm{v}_t}} - \frac{\beta_t \bm{m}_{t-1}}{\sqrt{\theta_t \bm{v}_{t-1}}}\right) \\
=~& -\underbrace{\frac{(1-\beta_t)\alpha_t \bm{g}_t}{\sqrt{\bm{v}_t}}}_{\text{(I)}} + \underbrace{\beta_t \alpha_t \bm{m}_{t-1} \left( \frac{1}{\sqrt{\theta_t \bm{v}_{t-1}}} - \frac{1}{\sqrt{\bm{v}_t}}\right)}_{\text{(II)}}. 
\end{split}
\end{equation}
For (I) we have
\begin{equation}\label{1-010}
\begin{split}
\text{(I)} =~& \frac{(1-\beta_t)\alpha_t\bm{g}_t}{\sqrt{\bm{\hat{v}}}_t} + (1-\beta_t)\alpha_t\bm{g}_t\left(\frac{1}{\sqrt{\bm{v}}_t}- \frac{1}{\sqrt{\bm{\hat{v}}_t}}\right)\\
=~& (1-\beta_t)\bm{\hat{\eta}}_t\bm{g}_t + (1-\beta_t)\alpha_t\bm{g}_t\frac{(1-\theta_t)(\bm{\sigma}_t^2-\bm{g}_t^2)}{\sqrt{\bm{v}_t}\sqrt{\bm{\hat{v}}_t}(\sqrt{\bm{v}_t}+\sqrt{\bm{\hat{v}}_t})}\\
=~& (1-\beta_t)\bm{\hat{\eta}}_t\bm{g}_t + \bm{\hat{\eta}}_t\bm{\sigma}_t\frac{(1-\theta_t)\bm{g}_t}{\sqrt{\bm{v}_t}}\frac{(1-\beta_t)\bm{\sigma}_t}{\sqrt{\bm{v}_t}+\sqrt{\bm{\hat{v}}_t}} -
\bm{\hat{\eta}}_t\bm{g}_t\frac{(1-\theta_t)\bm{g}_t}{\sqrt{\bm{v}_t}}\frac{(1-\beta_t)\bm{g}_t}{\sqrt{\bm{v}_t}+\sqrt{\bm{\hat{v}}}_t}.
\end{split}
\end{equation}
For (II) we have
\begin{equation}\label{1-011}
\begin{split}
\text{(II)} &= \beta_t \alpha_t \bm{m}_{t-1} \frac{(1-\theta_t)\bm{g}_t^2}{\sqrt{\bm{v}_t}\sqrt{\theta_t \bm{v}_{t-1}}(\sqrt{\bm{v}_t} + \sqrt{\theta_t \bm{v}_{t-1}})} \\
&= \beta_t \alpha_t \bm{m}_{t-1} \frac{(1-\theta_t)\bm{g}_t^2}{\sqrt{\bm{v}_t}\sqrt{\bm{\hat{v}}_t} (\sqrt{\bm{v}_t} + \sqrt{\theta_t \bm{v}_{t-1}})}
+ \beta_t \alpha_t \bm{m}_{t-1} \frac{(1-\theta_t)\bm{g}_t^2}{\sqrt{\bm{v}_t}(\sqrt{\bm{v}_t} + \sqrt{\theta_t \bm{v}_{t-1}})}\left(\frac{1}{\sqrt{\theta_t\bm{v}_{t-1}}}-\frac{1}{\sqrt{\bm{\hat{v}}_t}}\right)\\
&= \bm{\hat{\eta}}_t \bm{g}_t \frac{(1-\theta_t)\bm{g}_t}{\sqrt{\bm{v}_t}}\left(\frac{\beta_t\bm{m}_{t-1}}{\sqrt{\bm{v}_{t}}+\sqrt{\theta_t \bm{v}_{t-1}}}\right) + \frac{\beta_t \alpha_t \bm{m}_{t-1}(1-\theta_t)^2\bm{g}_t^2\bm{\sigma}_t^2}{\sqrt{\bm{v}_t}\sqrt{\bm{\hat{v}}_t}\sqrt{\theta_t \bm{v}_{t-1}}(\sqrt{\bm{v}_t} + \sqrt{\theta_t \bm{v}_{t-1}})(\sqrt{\bm{\hat{v}}_t}+\sqrt{\theta_t \bm{v}_{t-1}})} \\
&=  \bm{\hat{\eta}}_t \bm{g}_t \frac{(1-\theta_t)\bm{g}_t}{\sqrt{\bm{v}_t}}\left(\frac{\beta_t\bm{m}_{t-1}}{\sqrt{\bm{v}_{t}}+\sqrt{\theta_t \bm{v}_{t-1}}}\right)  +  \bm{\hat{\eta}}_t \bm{\sigma}_t \frac{(1-\theta_t)\bm{g}_t}{\sqrt{\bm{v}_t}}\left(\frac{\beta_t\bm{m}_{t-1}}{\sqrt{\theta_t\bm{v}_{t-1}}}\frac{\sqrt{1-\theta_t}\bm{g}_t}{\sqrt{\bm{v}}_t + \sqrt{\theta_t\bm{v}_{t-1}}}\frac{\sqrt{1-\theta_t}\bm{\sigma}_t}{\sqrt{\bm{\hat{v}}_t}+\sqrt{\theta_t\bm{v}_{t-1}}}\right).
\end{split}
\end{equation}
Combining Eq.~\eqref{1-010} and Eq.~\eqref{1-011}, we obtain the desired Eq.~\eqref{1-008}. The proof is completed.
\end{proof}



\medskip

\begin{lemma} \label{lem1-004}
Let $M_t = \mathbb{E} \left[\left\langle \bm{\nabla} f(\bm{x}_{t}), \bm{\Delta}_{t}\right\rangle + L\norm{\bm{\Delta}_{t}}^2\right]$ and $\chi_t = {\alpha_t}/{\sqrt{1-\theta_t}}$. Then for any $t\geq 2$, we have
\begin{equation}\label{2-012}
\begin{split}
M_t \leq & \frac{\beta_t\alpha_t}{\sqrt{\theta_t}\alpha_{t-1}} M_{t-1} + L\ \mathbb{E}\left[\norm{\bm{\Delta}_t}^2\right]
+ C_2G\chi_t \mathbb{E}\left[\norm{\frac{\sqrt{1-\theta_t}\bm{g}_t}{\sqrt{\bm{v}_t}}}^2\right]
- \frac{1-\beta}{2}\mathbb{E}\left[\norm{\bm{\nabla} f(\bm{x}_t)}^2_{\bm{\hat{\eta}}_t}\right]
\end{split}
\end{equation}
and
\begin{equation}\label{2-013}
M_1 \leq L\ \mathbb{E}\left[\norm{\bm{\Delta}_1}^2\right] + C_2G\chi_1 \mathbb{E}\left[\norm{\frac{\sqrt{1-\theta_t}\bm{g}_1}{\sqrt{v}_1}}^2\right],
\end{equation}
where $C_2 = 2\left(\frac{\beta/(1-\beta)}{\sqrt{C_1(1-\gamma)\theta_1}}+1\right)^2$.
\end{lemma}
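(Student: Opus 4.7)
The plan is to derive \eqref{2-012} by substituting the decomposition of $\bm{\Delta}_t$ from Lemma \ref{lem1-002} into the definition of $M_t$, using the $L$-smoothness of $f$ to convert the emergent $\langle \bm{\nabla} f(\bm{x}_t),\bm{\Delta}_{t-1}\rangle$ back into an $M_{t-1}$ bound, and then handling the ``principal plus two correctors'' structure of the remainder by conditional expectation and componentwise Young's inequality.

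First I would plug $\bm{\Delta}_t = \tfrac{\beta_t\alpha_t}{\sqrt{\theta_t}\alpha_{t-1}}\bm{\Delta}_{t-1} + \bm{R}_t$ (with $\bm{R}_t$ the three-term remainder in Eq.~\eqref{1-008}) into the definition of $M_t$. By assumption (A2), $\langle \bm{\nabla} f(\bm{x}_t),\bm{\Delta}_{t-1}\rangle \leq \langle \bm{\nabla} f(\bm{x}_{t-1}),\bm{\Delta}_{t-1}\rangle + L\norm{\bm{\Delta}_{t-1}}^2$, and since $\tfrac{\beta_t\alpha_t}{\sqrt{\theta_t}\alpha_{t-1}}\geq 0$ the inequality is preserved under multiplication, producing the $\tfrac{\beta_t\alpha_t}{\sqrt{\theta_t}\alpha_{t-1}}M_{t-1}$ piece of \eqref{2-012}. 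The leftover $L\mathbb{E}\norm{\bm{\Delta}_t}^2$ from the definition of $M_t$ passes through intact.

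To handle $\mathbb{E}\langle \bm{\nabla} f(\bm{x}_t),\bm{R}_t\rangle$, I would use that $\bm{\hat{v}}_t = \theta_t\bm{v}_{t-1} + (1-\theta_t)\bm{\sigma}_t^2$---and therefore $\bm{\hat{\eta}}_t$---is $\mathcal{F}_{t-1}$-measurable while $\mathbb{E}_t[\bm{g}_t] = \bm{\nabla} f(\bm{x}_t)$. Conditioning on $\mathcal{F}_{t-1}$, the principal piece $-(1-\beta_t)\bm{\hat{\eta}}_t\bm{g}_t$ contributes exactly $-(1-\beta_t)\mathbb{E}\norm{\bm{\nabla} f(\bm{x}_t)}^2_{\bm{\hat{\eta}}_t}$. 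On each of the two corrector terms in $\bm{R}_t$ I would apply componentwise Young's inequality with weight $(1-\beta_t)/4$ on the $\bm{\nabla} f(\bm{x}_t)$ side, so that the two correctors jointly absorb half of the principal negative term; the surviving $-\tfrac{1-\beta_t}{2}\mathbb{E}\norm{\bm{\nabla} f(\bm{x}_t)}^2_{\bm{\hat{\eta}}_t}$ is bounded above by $-\tfrac{1-\beta}{2}\mathbb{E}\norm{\bm{\nabla} f(\bm{x}_t)}^2_{\bm{\hat{\eta}}_t}$ since $\beta_t \leq \beta$.

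The main technical obstacle is the uniform estimate $|\bm{A}_{t,k}|, |\bm{B}_{t,k}| \leq \tilde C/\sqrt{1-\theta_t}$ with the correct constant. For this I would bound each denominator in $\bm{A}_t,\bm{B}_t$ from below by $\sqrt{\theta_t\bm{v}_{t-1}}$, $\sqrt{\bm{v}_t}$, or $\sqrt{\bm{\hat{v}}_t}$ as appropriate, apply Lemma \ref{lem1-001} in the form $|\bm{m}_{t-1,k}|/\sqrt{\bm{v}_{t-1,k}} \leq 1/\sqrt{C_1(1-\gamma)(1-\theta_{t-1})}$, use the elementary bounds $\sqrt{\bm{v}_{t,k}} \geq \sqrt{1-\theta_t}\,|\bm{g}_{t,k}|$ and $\sqrt{\bm{\hat{v}}_{t,k}} \geq \sqrt{1-\theta_t}\,\bm{\sigma}_{t,k}$, and replace $1-\theta_{t-1}$ by $1-\theta_t$ via the monotonicity of $\{\theta_t\}$. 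Substituting this into the Young step, writing $\bm{\hat{\eta}}_t = \chi_t\sqrt{1-\theta_t}/\sqrt{\bm{\hat{v}}_t}$, and invoking $\mathbb{E}\norm{\bm{g}_t}^2 \leq G$ from (A4), the two residuals coalesce into $C_2 G \chi_t \mathbb{E}\norm{\sqrt{1-\theta_t}\,\bm{g}_t/\sqrt{\bm{v}_t}}^2$; the $\beta/(1-\beta)$ factor inside $C_2$ is exactly the product of the $\beta$ carried inside the $\bm{A}_t,\bm{B}_t$ bounds and the $1/(1-\beta_t)\leq 1/(1-\beta)$ on the Young side. The base case \eqref{2-013} follows by the same analysis, with $\bm{m}_0 = \bm{0}$ eliminating the $\bm{\Delta}_0$ contribution and $\bm{v}_0 = \bm{\epsilon}$ playing no role in the corrector estimate.
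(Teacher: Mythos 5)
Your proposal is correct and follows essentially the same route as the paper's proof: substitute the three-term remainder decomposition from Lemma~\ref{lem1-002}, recover $M_{t-1}$ via $L$-smoothness, treat the principal piece by $\mathcal{F}_{t-1}$-measurability of $\bm{\hat{\eta}}_t$, absorb the two correctors into $\tfrac{1-\beta_t}{2}\norm{\bm{\nabla} f(\bm{x}_t)}^2_{\bm{\hat{\eta}}_t}$ by Young's inequality with weight $(1-\beta_t)/4$, and bound $|\bm{A}_t|,|\bm{B}_t|$ using Lemma~\ref{lem1-001} together with $\sqrt{\bm{v}_t}\ge\sqrt{1-\theta_t}\,|\bm{g}_t|$, $\sqrt{\bm{\hat{v}}_t}\ge\sqrt{1-\theta_t}\,\bm{\sigma}_t$, and monotonicity of $\{\theta_t\}$. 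The only small imprecision is the accounting of the $\beta/(1-\beta)$ factor: in the paper it arises entirely inside the bound $\beta_t/(1-\beta_t)\le\beta/(1-\beta)$ used to factor $(1-\beta_t)$ out of $|\bm{A}_t|,|\bm{B}_t|$, while the Young-side $1/(1-\beta_t)$ merely cancels one of the two $(1-\beta_t)$ factors in the residual; this does not affect the correctness of the argument.
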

\begin{proof}
First, for $t \geq 2$ we have
\begin{equation}\label{1-013}
\begin{split}
\mathbb{E}\langle \bm{\nabla} f(\bm{x}_t), \bm{\Delta}_t \rangle
= \underbrace{\frac{\beta_t\alpha_t}{\sqrt{\theta_t}\alpha_{t-1}} \mathbb{E}\langle \bm{\nabla} f(\bm{x}_t), \bm{\Delta}_{t-1} \rangle}_{\text{(I)}} +
\underbrace{\mathbb{E}\left\langle \bm{\nabla} f(\bm{x}_t), \bm{\Delta}_t - \frac{\beta_t \alpha_t}{\sqrt{\theta_t}\alpha_{t-1}}\bm{\Delta}_{t-1}\right\rangle}_{\text{(II)}}.
\end{split}
\end{equation}
To estimate (I), by the Schwartz inequality and the Lipschitz continuity of the gradient of $f$, we have
\begin{equation}\label{1-015}
\begin{split}
\langle \bm{\nabla} f(\bm{x}_t), \bm{\Delta}_{t-1}\rangle
\leq~&  \langle\bm{\nabla} f(\bm{x}_{t-1}), \bm{\Delta}_{t-1}\rangle + \langle \bm{\nabla} f(\bm{x}_t) - \bm{\nabla} f(\bm{x}_{t-1}), \bm{\Delta}_{t-1}\rangle \\
\leq~&  \langle\bm{\nabla} f(\bm{x}_{t-1}), \bm{\Delta}_{t-1}\rangle +  L \norm{\bm{x}_t - \bm{x}_{t-1}}\norm{\bm{\Delta}_{t-1}} \\
=~&  \langle\bm{\nabla} f(\bm{x}_{t-1}), \bm{\Delta}_{t-1}\rangle + L \norm{\bm{\Delta}_{t-1}}^2. \\
\end{split}
\end{equation}
Hence, we have
\begin{equation}
\text{(I)} \leq \frac{\beta_t\alpha_t}{\sqrt{\theta_t}\alpha_{t-1}} \mathbb{E}\left[\langle\bm{\nabla} f(\bm{x}_{t-1}), \bm{\Delta}_{t-1}\rangle + L \norm{\bm{\Delta}_{t-1}}^2\right] = \frac{\beta_t\alpha_t}{\sqrt{\theta_t}\alpha_{t-1}} M_{t-1}.
\end{equation}
To estimate (II), by Lemma \ref{lem1-002}, we have
\begin{equation}\label{1-016}
\begin{split}
& \mathbb{E}\left\langle \bm{\nabla} f(\bm{x}_t), \bm{\Delta}_t - \frac{\beta_t\alpha_t}{\sqrt{\theta_t}\alpha_{t-1}}\bm{\Delta}_{t-1}\right\rangle \\
=& -(1-\beta_t)\mathbb{E}\langle \bm{\nabla} f(\bm{x}_t), \bm{\hat{\eta}}_t\bm{g}_t \rangle
    \underbrace{- \mathbb{E}\left\langle \bm{\nabla} f(\bm{x}_t), \bm{\hat{\eta}}_t\bm{g}_t\frac{(1-\theta_t)\bm{g}_t}{\sqrt{\bm{v}}_t}\bm{A}_t \right\rangle}_{\text{(III)}}
   \underbrace{- \mathbb{E}\left\langle \bm{\nabla} f(\bm{x}_t), \bm{\hat{\eta}}_t\bm{\sigma}_t\frac{(1-\theta_t)\bm{g}_t}{\sqrt{\bm{v}}_t}\bm{B}_t \right\rangle}_{\text{(IV)}}.
\end{split}
\end{equation}
Note that $\bm{\hat{\eta}}_t$ is independent of $\bm{g}_t$ and $\mathbb{E}_t[\bm{g}_t] = \bm{\nabla} f(\bm{x}_t)$. Hence, for the first term in the right hand side of Eq.~\eqref{1-016}, we have
\begin{equation}\label{1-017}
\begin{split}
-(1-\beta_t)\mathbb{E}\langle \bm{\nabla} f(\bm{x}_t), \bm{\hat{\eta}}_t\bm{g}_t \rangle
&= -(1-\beta_t)\mathbb{E}\left\langle \bm{\nabla} f(\bm{x}_t), \bm{\hat{\eta}}_t\mathbb{E}_t[\bm{g}_t] \right\rangle \\
&= -(1-\beta_t)\mathbb{E}\norm{\bm{\nabla} f(\bm{x}_t)}^2_{\bm{\hat{\eta}}_t}\\
&\leq -(1-\beta)\mathbb{E}\norm{\bm{\nabla} f(\bm{x}_t)}^2_{\bm{\hat{\eta}}_t}.
\end{split}
\end{equation}
To estimate (III), we have
\begin{equation}\label{1-018}
\begin{split}
\text{(III)}~\leq \mathbb{E}\left\langle \frac{\sqrt{\bm{\hat{\eta}}_t}|\bm{\nabla} f(\bm{x}_t)||\bm{g}_t|}{\bm{\sigma}_t}, \frac{\sqrt{\bm{\hat{\eta}}_t}\bm{\sigma}_t|\bm{A}_t|(1-\theta_t)|\bm{g}_t|}{\sqrt{\bm{v}_t}}\right\rangle.
\end{split}
\end{equation}
Note that $\bm{\sigma}_t \leq G$. Therefore,
\begin{equation}\label{1-023}
\sqrt{\bm{\hat{\eta}}_t} \bm{\sigma}_t = \sqrt{\bm{\hat{\eta}}_t\bm{\sigma}_t^2} = \sqrt{\frac{\alpha_t \bm{\sigma}_t^2}{\sqrt{\bm{\hat{v}}_t}}}
\leq \sqrt{\frac{\alpha_t \bm{\sigma}_t^2}{\sqrt{(1-\theta_t)\bm{\sigma}_t^2}}} \leq \sqrt{\frac{G\alpha_t}{\sqrt{1-\theta_t}}} = \sqrt{G\chi_t}.
\end{equation}
On the other hand,
\begin{equation}
\begin{split}
|\bm{A}_t|
= \left|\frac{\beta_t\bm{m}_{t-1}}{\sqrt{\bm{v}_t}+\sqrt{\theta_t\bm{v}_{t-1}}}+\frac{(1-\beta_t)\bm{g}_t}{\sqrt{\bm{v}}_t+\sqrt{\bm{\hat{v}}_t}}\right|
\leq \frac{\beta_t|\bm{m}_{t-1}|}{\sqrt{\theta_t \bm{v}_{t-1}}} + \frac{(1-\beta_t)|\bm{g}_t|}{\sqrt{\bm{v}_t}}.
\end{split}
\end{equation}
By Lemma \ref{lem1-001}, we have
\begin{equation}
\frac{|\bm{m}_{t-1}|}{\sqrt{\bm{v}_{t-1}}} \leq \frac{1}{\sqrt{C_1(1-\gamma)(1-\theta_t)}}.
\end{equation}
Meanwhile,
\begin{equation} \frac{|\bm{g}_t|}{\sqrt{\bm{v}_t}} \leq \frac{|\bm{g}_t|}{\sqrt{(1-\theta_t)\bm{g}_t^2}} = \frac{1}{\sqrt{1-\theta_t}}.
\end{equation}
Hence, we have
\begin{equation}
\begin{split}
|\bm{A}_t| &\leq \frac{\beta_t}{\sqrt{C_1(1-\gamma)(1-\theta_t)\theta_t}} + \frac{1-\beta_t}{\sqrt{1-\theta_t}}
\leq \left(\frac{\beta_t/(1-\beta_t)}{\sqrt{C_1(1-\gamma)\theta_t}}+1\right)\frac{1-\beta_t}{\sqrt{1-\theta_t}} \\
&\leq \left(\frac{\beta/(1-\beta)}{\sqrt{C_1(1-\gamma)\theta_1}} + 1\right)\frac{1-\beta_t}{\sqrt{1-\theta_t}} := \frac{C_2'(1-\beta_t)}{\sqrt{1-\theta_t}},
\end{split}
\end{equation}
where $C_2'= \left(\frac{\beta/(1-\beta)}{\sqrt{C_1(1-\gamma)\theta_1}} + 1\right)$. The last inequality holds due to $\beta_t/(1-\beta_t) \leq \beta/(1-\beta)$ as $\beta_t \leq \beta$.
Therefore, we have
\begin{equation}\label{1-021}
\begin{split}
&\left\langle \frac{\sqrt{\bm{\hat{\eta}}_t}|\bm{\nabla} f(\bm{x}_t)||\bm{g}_t|}{\bm{\sigma}_t}, \frac{\sqrt{\bm{\hat{\eta}}_t}\bm{\sigma}_t|\bm{A}_t|(1-\theta_t)|\bm{g}_t|}{\sqrt{\bm{v}_t}}\right\rangle \\
\leq~& \left\langle \frac{\sqrt{\bm{\hat{\eta}}_t}|\bm{\nabla} f(\bm{x}_t)||\bm{g}_t|}{\bm{\sigma}_t},
\sqrt{G\chi_t}C_2'(1-\beta_t)\frac{\sqrt{1-\theta_t}|\bm{g}_t|}{\sqrt{\bm{v}_t}}\right\rangle \\
\leq~& \frac{1-\beta_t}{4}\norm{\frac{\sqrt{\bm{\hat{\eta}}_t}|\bm{\nabla} f(\bm{x}_t)||\bm{g}_t|}{\bm{\sigma}_t}}^2 +
{C_2'^2G}{(1-\beta_t)}\chi_t\norm{\frac{\sqrt{1-\theta_t}\bm{g}_t}{\sqrt{\bm{v}_t}}}^2 \\
\leq~& \frac{1-\beta_t}{4}\norm{\frac{\bm{\hat{\eta}}_t|\bm{\nabla} f(\bm{x}_t)|^2|\bm{g}_t|^2}{\bm{\sigma}_t^2}}_1 +
{C_2'^2G}\chi_t\norm{\frac{\sqrt{1-\theta_t}\bm{g}_t}{\sqrt{\bm{v}_t}}}^2.
\end{split}
\end{equation}
Note that $\bm{\sigma}_t^2 = \mathbb{E}_t[\bm{g}_t^2]$. Hence,
\begin{equation}\label{1-022}
\mathbb{E}_t\norm{\frac{\bm{\hat{\eta}}_t|\bm{\nabla} f(\bm{x}_t)|^2|\bm{g}_t|^2}{\bm{\sigma}_t^2}}_1
= \norm{\bm{\hat{\eta}}_t|\bm{\nabla} f(\bm{x}_t)|^2}_1 = \norm{\bm{\nabla} f(\bm{x}_t)}_{\bm{\hat{\eta}}_t}^2.
\end{equation}
Combining Eq.~\eqref{1-018}, Eq.~\eqref{1-021}, and Eq.~\eqref{1-022}, we obtain
\begin{equation}\label{1-024}
\begin{split}
\text{(III)} 
\leq~& \frac{1-\beta_t}{4}\mathbb{E}\left[\norm{\bm{\nabla} f(\bm{x}_t)}_{\bm{\hat{\eta}}_t}^2\right] +
{C_2'^2G}\chi_t\mathbb{E}\norm{\frac{\sqrt{1-\theta_t}\bm{g}_t}{\sqrt{\bm{v}_t}}}^2.
\end{split}
\end{equation}
The term (IV) is estimated similarly as term (III). First, we have
\begin{equation}
\begin{split}
|\bm{B}_t| \leq~& \left(\frac{\beta_t|\bm{m}_{t-1}|}{\sqrt{\theta_t\bm{v}_{t-1}}}\frac{\sqrt{1-\theta_t}|\bm{g}_t|}{\sqrt{\bm{v}}_t + \sqrt{\theta_t\bm{v}_{t-1}}}\frac{\sqrt{1-\theta_t}\bm{\sigma}_t}{\sqrt{\bm{\hat{v}}_t}+\sqrt{\theta_t\bm{v}_{t-1}}}\right)
+ \frac{(1-\beta_t)\bm{\sigma}_t}{\sqrt{\bm{v}_t}+\sqrt{\bm{\hat{v}}_t}} \\
\leq~& \left(\frac{\beta/(1-\beta)}{\sqrt{C_1(1-\gamma)\theta_1}}+1\right)\frac{1-\beta_t}{\sqrt{1-\theta_t}}
= \frac{C_2'(1-\beta_t)}{\sqrt{1-\theta_t}},
\end{split}
\end{equation}
where $C_2'$ is the constant defined above. We have
\begin{equation}\label{1-026}
\begin{split}
\text{(IV)} \leq~& \mathbb{E}\left\langle \sqrt{\bm{\hat{\eta}}_t}|\bm{\nabla} f(\bm{x}_t)|, \frac{\sqrt{\bm{\hat{\eta}}_t}\bm{\sigma}_t|\bm{B}_t|(1-\theta_t)|\bm{g}_t|}{\sqrt{\bm{v}_t}}\right\rangle \\
\leq~& \mathbb{E}\left\langle \sqrt{\bm{\hat{\eta}}_t}|\bm{\nabla} f(\bm{x}_t)|,
{\sqrt{G\chi_t}C_2'(1-\beta_t)}\frac{\sqrt{1-\theta_t}|\bm{g}_t|}{\sqrt{\bm{v}_t}}\right\rangle \\
\leq~& \frac{1-\beta_t}{4}\mathbb{E}\left[\norm{\bm{\nabla} f(\bm{x}_t)}_{\bm{\hat{\eta}}_t}^2\right] + {C_2'^2G}{\chi_t}\mathbb{E}\norm{\frac{\sqrt{1-\theta_t}\bm{g}_t}{\sqrt{\bm{v}_t}}}^2. \\
\end{split}
\end{equation}
Combining Eq.~\eqref{1-013}, Eq.~\eqref{1-015}, Eq.~\eqref{1-016}, Eq.~\eqref{1-017}, Eq.~\eqref{1-024}, and Eq.~\eqref{1-026}, 
we obtain
\begin{equation}\label{1-030}
\begin{split}
\mathbb{E}\langle \bm{\nabla} f(\bm{x}_t), \bm{\Delta}_t \rangle
&~\leq \frac{\beta_t\alpha_t}{\sqrt{\theta_t}\alpha_{t-1}} M_{t-1}
+ {2C_2'^2G}{\chi_t}\mathbb{E}\norm{\frac{\sqrt{1-\theta_t}\bm{g}_t}{\sqrt{\bm{v}_t}}}^2
- \frac{1-\beta_t}{2}\mathbb{E}\left[\norm{\bm{\nabla} f(\bm{x}_t)}_{\bm{\hat{\eta}}_t}^2\right]\\
&~\leq \frac{\beta_t\alpha_t}{\sqrt{\theta_t}\alpha_{t-1}} M_{t-1}
+ {2C_2'^2G}{\chi_t}\mathbb{E}\norm{\frac{\sqrt{1-\theta_t}\bm{g}_t}{\sqrt{\bm{v}_t}}}^2
- \frac{1-\beta}{2}\mathbb{E}\left[\norm{\bm{\nabla} f(\bm{x}_t)}_{\bm{\hat{\eta}}_t}^2\right].
\end{split}
\end{equation}
Let $C_2$ denote the constant ${2(C_2')^2}$. Then
\[ C_2 = 2 \left(\frac{\beta/(1-\beta)}{\sqrt{C_1(1-\gamma)\theta_1}}+1\right)^2. \]
Thus, we obtain Eq.~\eqref{2-012} by adding the term $L\mathbb{E}\left[\norm{\bm{\Delta}_t}^2\right]$ to both sides of Eq.~\eqref{1-030}.

Next, we estimate Eq.~\eqref{2-013}. When $t=1$, we have
\begin{equation}\label{2-031}
\begin{split}
M_1 =~& \mathbb{E}\left[-\left\langle \bm{\nabla} f(\bm{x}_1), \frac{\alpha_1\bm{m}_1}{\sqrt{\bm{v}_1}}\right\rangle + L \norm{\bm{\Delta}_1}^2\right]
= \mathbb{E}\left[-\left\langle \bm{\nabla} f(\bm{x}_1), \frac{\alpha_1(1-\beta_1)\bm{g}_1}{\sqrt{\bm{v}_1}}\right\rangle + L \norm{\bm{\Delta}_1}^2\right].
\end{split}
\end{equation}
The same as what we did for term (I) in Lemma \ref{lem1-002}, we have
\begin{equation}
\frac{(1-\beta_1)\alpha_1\bm{g}_1}{\sqrt{\bm{v}_t}} = (1-\beta_1)\bm{\hat{\eta}}_1\bm{g}_1 + \bm{\hat{\eta}}_1\bm{\sigma}_1\frac{(1-\theta_1)\bm{g}_1}{\sqrt{\bm{v}_1}}\frac{(1-\beta_1)\bm{\sigma}_1}{\sqrt{\bm{v}_1}+\sqrt{\bm{\hat{v}}_1}} -
\bm{\hat{\eta}}_1\bm{g}_1\frac{(1-\theta_1)\bm{g}_1}{\sqrt{\bm{v}_1}}\frac{(1-\beta_1)\bm{g}_1}{\sqrt{\bm{v}_1}+\sqrt{\bm{\hat{v}}}_1}.
\end{equation}
Then the similar argument as Eq.~\eqref{1-021} implies that
\begin{equation}\label{2-033}
\begin{split}
\mathbb{E}\left[-\left\langle\bm{\nabla} f(\bm{x}_1), \frac{\alpha_1\bm{m}_1}{\sqrt{\bm{v}_1}}\right\rangle\right]
\leq~& C_2G \chi_1\mathbb{E}\left[\norm{\frac{\sqrt{1-\theta_t}\bm{g}_1}{\sqrt{\bm{v}_1}}}^2\right] -\frac{1-\beta_1}{2}\mathbb{E}\left[\norm{\bm{\nabla} f(\bm{x}_1)}_{\bm{\hat{\eta}}_1}^2\right]\\
\leq~& C_2G \chi_1\mathbb{E}\left[\norm{\frac{\sqrt{1-\theta_t}\bm{g}_1}{\sqrt{\bm{v}}_1}}^2\right].
\end{split}
\end{equation}
Combining Eq.~\eqref{2-031} and Eq.~\eqref{2-033}, and adding both sides by $L\mathbb{E}\left[\norm{\bm{\Delta}}_1^2\right]$, we obtain Eq.~\eqref{2-013}. This completes the proof.
\end{proof}

\medskip

\begin{lemma}\label{lem1-005}
The following estimate holds
\begin{equation}
\sum_{t=1}^T \norm{\bm{\Delta}_t}^2 \leq \frac{C_0^2\chi_1}{C_1(1-\sqrt{\gamma})^2}\sum_{t=1}^T\chi_t \norm{\frac{\sqrt{1-\theta_t}\bm{g}_t}{\sqrt{\bm{v}_t}}}^2.
\end{equation}
\end{lemma}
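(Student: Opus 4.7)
The plan is to expand $\bm{\Delta}_t$ in terms of the past stochastic gradients, rewrite the bound in terms of the target quantity $\bm w_i := \sqrt{1-\theta_i}\,\bm g_i/\sqrt{\bm v_i}$, apply Cauchy--Schwarz, then swap the order of summation and absorb all coefficients using Lemmas \ref{lem2-002} and \ref{lem2-001}.

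Concretely, first I would recall the closed form $\bm{m}_t = \sum_{i=1}^t B_{(t,i)}(1-\beta_i)\bm{g}_i$ derived in the proof of Lemma \ref{lem1-001}, together with the one-step inequality $\bm{v}_t \geq \theta_t\bm{v}_{t-1}$, which iterates to $v_{t,k} \geq \Theta_{(t,i)}\, v_{i,k}$ for every $i \leq t$. Combining these, the $k$-th component of $\bm{\Delta}_t = -\alpha_t\bm{m}_t/\sqrt{\bm v_t}$ satisfies
\begin{equation*}
\frac{|m_{t,k}|}{\sqrt{v_{t,k}}} \leq \sum_{i=1}^t \frac{B_{(t,i)}(1-\beta_i)}{\sqrt{\Theta_{(t,i)}(1-\theta_i)}}\,|w_{i,k}|
 =: \sum_{i=1}^t a_{t,i}\, |w_{i,k}|,
\end{equation*}
where I have introduced the factor $\sqrt{(1-\theta_i)/\Theta_{(t,i)}}$ inside and its reciprocal outside so that the natural variable $w_{i,k}$ appears.

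Next I would apply Cauchy--Schwarz in the form $(\sum_i a_{t,i}|w_{i,k}|)^2 \leq (\sum_i a_{t,i})(\sum_i a_{t,i} w_{i,k}^2)$ and sum over $k$ to obtain
\begin{equation*}
\|\bm{\Delta}_t\|^2 \;\leq\; \alpha_t^2\Bigl(\sum_{i=1}^t a_{t,i}\Bigr)\Bigl(\sum_{i=1}^t a_{t,i}\|\bm w_i\|^2\Bigr).
\end{equation*}
Using $B_{(t,i)}\leq \beta^{t-i}$, Lemma \ref{lem2-001}, and the monotonicity of $\theta_t$, one bounds $a_{t,i} \leq \sqrt{\gamma}^{\,t-i}/\sqrt{C_1(1-\theta_i)}$, which gives $\sum_{i=1}^t a_{t,i} \leq 1/\bigl[\sqrt{C_1(1-\theta_t)}\,(1-\sqrt{\gamma})\bigr]$ by summing the geometric series. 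Substituting this and using $\alpha_t^2/\sqrt{1-\theta_t} = \alpha_t\chi_t$ turns the per-step bound into
\begin{equation*}
\|\bm{\Delta}_t\|^2 \leq \frac{\alpha_t\chi_t}{\sqrt{C_1}\,(1-\sqrt{\gamma})}\sum_{i=1}^t a_{t,i}\|\bm w_i\|^2.
\end{equation*}

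The last step is to sum over $t$ and switch the order of summation, so that the inner sum becomes $\sum_{t=i}^T \alpha_t\chi_t\, a_{t,i}$. The main obstacle — and the place where the constants must be tracked carefully to end up with the exact prefactor $C_0^2\chi_1/[C_1(1-\sqrt{\gamma})^2]$ stated — is to bound this inner sum. I would use Lemma \ref{lem2-002} in two different ways at once: $\alpha_t\leq C_0\alpha_i$ (applied to the factor $\alpha_t$) and $\chi_t\leq C_0\chi_1$ (applied to the factor $\chi_t$), giving $\alpha_t\chi_t \leq C_0^2\alpha_i\chi_1$ for every $t\geq i$. With this, the geometric-series bound on $\sum_{t=i}^T a_{t,i}$ yields
\begin{equation*}
\sum_{t=i}^T \alpha_t\chi_t\, a_{t,i} \leq \frac{C_0^2\,\alpha_i\chi_1}{\sqrt{C_1(1-\theta_i)}\,(1-\sqrt{\gamma})} = \frac{C_0^2\chi_1\chi_i}{\sqrt{C_1}\,(1-\sqrt{\gamma})}.
\end{equation*}
Plugging this back in gives exactly the claimed inequality. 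The only subtlety is choosing the right target of each application of Lemma \ref{lem2-002} (one to index $i$ and one to index $1$); applying both to $i$ would yield $\chi_i$ in place of $\chi_1$ and the wrong power of $C_0$.
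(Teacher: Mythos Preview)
Your proposal is correct and follows essentially the same route as the paper's proof: expand $\bm m_t$, use $\bm v_t\geq\Theta_{(t,i)}\bm v_i$ together with Lemma~\ref{lem2-001}, apply Cauchy--Schwarz with geometric weights $\sqrt{\gamma}^{\,t-i}$, swap the order of summation, and invoke Lemma~\ref{lem2-002} once with target index $1$ and once with target index $i$ to produce the prefactor $C_0^2\chi_1$. The only cosmetic difference is that the paper simplifies $B_{(t,i)}(1-\beta_i)\leq\beta^{t-i}$ and introduces the factor $\sqrt{1-\theta_i}$ via $\alpha_t=\chi_t\sqrt{1-\theta_t}\leq\chi_t\sqrt{1-\theta_i}$ before Cauchy--Schwarz, whereas you carry the full coefficient $a_{t,i}$ and split off $\sqrt{1-\theta_i}$ directly in the definition of $\bm w_i$; the resulting estimates are identical.
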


\begin{proof}
Note that $\bm{v}_t \geq \theta_t \bm{v}_{t-1}$, so we have $\bm{v}_t \geq \left(\prod_{j=i+1}^t \theta_j\right) \bm{v}_{i} = \Theta_{(t,i)}\bm{v}_i$. By Lemma \ref{lem2-001}, this follows that $\bm{v}_t \geq C_1 (\theta')^{t-i}\bm{v}_{i}$ for all $i \leq t$. On the other hand,
\[ |\bm{m}_t| \leq \sum_{i=1}^t \left(\prod_{j=i+1}^t \beta_j\right)(1-\beta_i)|\bm{g}_i|
\leq \sum_{i=1}^t \beta^{t-i}|\bm{g}_i|. \]
It follows that
\begin{equation}
\begin{split}
\frac{|\bm{m}_t|}{\sqrt{\bm{v}_t}}
\leq \sum_{i=1}^t\frac{\beta^{t-i}|\bm{g}_i|}{\sqrt{\bm{v}_t}}
%
%
%
\leq \frac{1}{\sqrt{C_1}}\sum_{i=1}^t \left(\frac{\beta}{\sqrt{\theta'}}\right)^{t-i} \frac{|\bm{g}_i|}{\sqrt{\bm{v}_i}}
= \frac{1}{\sqrt{C_1}}\sum_{i=1}^t \sqrt{\gamma}^{t-i} \frac{|\bm{g}_i|}{\sqrt{\bm{v}_i}}.
\end{split}
\end{equation}
Since $\alpha_t = \chi_t\sqrt{1-\theta_t} \leq \chi_t \sqrt{1-\theta_i}$ for $i\leq t$, it follows that
\begin{equation}
\begin{split}
\norm{\bm{\Delta}_t}^2 = \norm{\frac{\alpha_t\bm{m}_t}{\sqrt{\bm{v}_t}}}^2
%
%
%
\leq~& \frac{\chi_t^2}{C_1}\norm{\sum_{i=1}^t \sqrt{\gamma}^{t-i}\frac{\sqrt{1-\theta_i}|\bm{g_i}|}{\sqrt{\bm{v}_i}}}^2
\leq \frac{\chi_t^2}{C_1}\left(\sum_{i=1}^t \sqrt{\gamma}^{t-i} \right)\sum_{i=1}^t \sqrt{\gamma}^{t-i}\norm{\frac{\sqrt{1-\theta_i}\bm{g}_i}{\sqrt{\bm{v}_i}}}^2 \\
\leq~& \frac{\chi_t^2}{C_1(1-\sqrt{\gamma})}\sum_{i=1}^t \sqrt{\gamma}^{t-i}\norm{\frac{\sqrt{1-\theta_i}\bm{g}_i}{\sqrt{\bm{v}_i}}}^2.
\end{split}
\end{equation}
By Lemma \ref{lem2-002},
\[ \chi_t \leq C_0\chi_i, \forall i \leq t. \]
Hence,
\begin{equation}
\begin{split}
\norm{\bm{\Delta}_t}^2 = \norm{\frac{\alpha_t\bm{m}_t}{\sqrt{\bm{v}_t}}}^2 \leq \frac{C_0^2\chi_1}{C_1(1-\sqrt{\gamma})}\sum_{i=1}^t \sqrt{\gamma}^{t-i}\chi_i\norm{\frac{\sqrt{1-\theta_i}\bm{g}_i}{\sqrt{\bm{v}_i}}}^2.
\end{split}
\end{equation}
It follows that
\begin{equation}
\begin{split}
\sum_{t=1}^T \norm{\bm{\Delta}_t}^2
\leq~& \frac{C_0^2\chi_1}{C_1(1-\sqrt{\gamma})}\sum_{t=1}^T\sum_{i=1}^t \sqrt{\gamma}^{t-i}\chi_i\norm{\frac{\sqrt{1-\theta_i}\bm{g}_i}{\sqrt{\bm{v}_i}}}^2 \\
=~& \frac{C_0^2\chi_1}{C_1(1-\sqrt{\gamma})}\sum_{i=1}^T\left(\sum_{t=i}^T \sqrt{\gamma}^{t-i}\right)\chi_i\norm{\frac{\sqrt{1-\theta_i}\bm{g}_i}{\sqrt{\bm{v}_i}}}^2 \\
\leq~& \frac{C_0^2\chi_1}{C_1(1-\sqrt{\gamma})^2}\sum_{i=1}^T\chi_i\norm{\frac{\sqrt{1-\theta_i}\bm{g}_i}{\sqrt{\bm{v}_i}}}^2.
\end{split}
\end{equation}
The proof is completed.
\end{proof}

\medskip

\begin{lemma}\label{lem1-006}
Let $M_t = \mathbb{E} \left[\langle \bm{\nabla} f(\bm{x}_{t}), \bm{\Delta}_{t}\rangle + L\norm{\bm{\Delta}_{t}}^2\right]$. For $T\geq 1$ we have
\begin{equation}\label{1-037}
\begin{split}
\sum_{t=1}^T M_t
%
\leq C_3\mathbb{E}\left[\sum_{t=1}^T\chi_t\norm{\frac{\sqrt{1-\theta_t}\bm{g}_t}{\sqrt{\bm{v}_t}}}^2\right]
- \frac{1-\beta}{2}\mathbb{E}\left[\sum_{t=1}^T \norm{\bm{\nabla} f(\bm{x}_t)}_{\bm{\hat{\eta}_t}}^2\right].
\end{split}
\end{equation}
where the constant $C_3$ is given by
\[ C_3= \frac{C_0}{\sqrt{C_1}(1-\sqrt{\gamma})}\left(\frac{C_0^2\chi_1L}{C_1(1-\sqrt{\gamma})^2} + 2\left(\frac{\beta/(1-\beta)}{\sqrt{C_1(1-\gamma)\theta_1}}+1\right)^2G\right).\]
\end{lemma}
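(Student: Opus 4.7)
\medskip

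\noindent\textbf{Proof proposal.} The plan is to iterate the one-step recursion established in Lemma~\ref{lem1-004}, bound the resulting geometric coefficients via Lemmas~\ref{lem2-001} and~\ref{lem2-002}, and then feed the aggregate bound for $\sum_t \mathbb{E}\|\bm{\Delta}_t\|^2$ from Lemma~\ref{lem1-005} into the inequality. Write $r_t := \beta_t\alpha_t/(\sqrt{\theta_t}\,\alpha_{t-1})$, $P_t := L\,\mathbb{E}[\|\bm{\Delta}_t\|^2] + C_2 G \chi_t\,\mathbb{E}\|\sqrt{1-\theta_t}\bm{g}_t/\sqrt{\bm{v}_t}\|^2$, and $N_t := \tfrac{1-\beta}{2}\,\mathbb{E}[\|\bm{\nabla} f(\bm{x}_t)\|^2_{\bm{\hat{\eta}}_t}]$, so that Lemma~\ref{lem1-004} (keeping the negative term produced inside (2-033) at $t=1$ rather than discarding it) reads $M_t \leq r_t M_{t-1} + P_t - N_t$ for every $t \geq 1$, with $r_1 M_0 := 0$.

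Unrolling this recursion yields $M_t \leq \sum_{i=1}^t c_t^i (P_i - N_i)$ with $c_t^i := \prod_{j=i+1}^t r_j$ and $c_t^t := 1$. First I would bound $c_t^i$: expanding the product, $c_t^i \leq \beta^{t-i}\,(\alpha_t/\alpha_i)/\sqrt{\Theta_{(t,i)}}$; then Lemma~\ref{lem2-001} gives $\Theta_{(t,i)} \geq C_1(\theta')^{t-i}$, and Lemma~\ref{lem2-002} gives $\alpha_t \leq C_0 \alpha_i$, yielding the clean estimate
\begin{equation*}
c_t^i \;\leq\; \frac{C_0}{\sqrt{C_1}}\,(\sqrt{\gamma})^{t-i}, \qquad \gamma = \beta^2/\theta'.
\end{equation*}
For the negative contribution I use the sign: since $c_t^i \geq 0$ and $c_t^t = 1$, we can drop all off-diagonal negative terms to obtain $-\sum_{i=1}^t c_t^i N_i \leq -N_t$. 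Hence $M_t \leq \tfrac{C_0}{\sqrt{C_1}}\sum_{i=1}^t (\sqrt{\gamma})^{t-i} P_i - N_t$.

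Next, I sum over $t \in \{1,\dots,T\}$ and swap the order of summation on the positive part; the geometric tail $\sum_{t=i}^T (\sqrt{\gamma})^{t-i} \leq 1/(1-\sqrt{\gamma})$ gives
\begin{equation*}
\sum_{t=1}^T M_t \;\leq\; \frac{C_0}{\sqrt{C_1}\,(1-\sqrt{\gamma})}\sum_{t=1}^T P_t \;-\; \sum_{t=1}^T N_t.
\end{equation*}
Finally, Lemma~\ref{lem1-005} converts $L\sum_t \mathbb{E}\|\bm{\Delta}_t\|^2$ into $\frac{C_0^2\chi_1 L}{C_1(1-\sqrt{\gamma})^2}\sum_t \chi_t\,\mathbb{E}\|\sqrt{1-\theta_t}\bm{g}_t/\sqrt{\bm{v}_t}\|^2$, and combining with the $C_2 G\chi_t$-term already in $P_t$ yields exactly the prefactor $C_3 = \tfrac{C_0}{\sqrt{C_1}(1-\sqrt{\gamma})}\bigl[\tfrac{C_0^2\chi_1 L}{C_1(1-\sqrt{\gamma})^2} + C_2 G\bigr]$, with $C_2 = 2\bigl(\tfrac{\beta/(1-\beta)}{\sqrt{C_1(1-\gamma)\theta_1}}+1\bigr)^2$ matching the claim.

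The one subtle step, and the place where I expect a careful reader to pause, is the \emph{asymmetric} treatment of the two signs in the unrolling: geometric amplification (factor $1/(1-\sqrt{\gamma})$) is only applied to the non-negative part $P_i$, while the negative part $N_i$ must be preserved with coefficient exactly $1$ so that the final $-\tfrac{1-\beta}{2}\sum_t\mathbb{E}\|\bm{\nabla} f(\bm{x}_t)\|^2_{\bm{\hat{\eta}}_t}$ retains the correct constant. This relies crucially on $c_t^t = 1$ and $c_t^i \geq 0$, which is why using the refined form of (2-033) (keeping the negative term) rather than the simplified (2-013) is needed to also cover $t=1$.
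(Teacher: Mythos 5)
Your proof is correct and essentially mirrors the paper's argument: unroll the recursion from Lemma~\ref{lem1-004}, bound the geometric coefficients $c_t^i = B_{(t,i)}(\alpha_t/\alpha_i)/\sqrt{\Theta_{(t,i)}}$ by $\frac{C_0}{\sqrt{C_1}}(\sqrt\gamma)^{t-i}$ via Lemmas~\ref{lem2-001} and~\ref{lem2-002}, swap the double sum, and absorb $L\sum_t\mathbb{E}\|\bm{\Delta}_t\|^2$ via Lemma~\ref{lem1-005}. The one place you add genuine value is the observation about $t=1$: the paper's stated Eq.~\eqref{2-013} drops the $-\frac{1-\beta_1}{2}\mathbb{E}\|\bm{\nabla} f(\bm{x}_1)\|^2_{\bm{\hat\eta}_1}$ term, yet the displayed chain in the paper's Lemma~\ref{lem1-006} proof writes $-\frac{1-\beta}{2}\mathbb{E}\sum_{t=1}^T\|\bm{\nabla} f(\bm{x}_t)\|^2_{\bm{\hat\eta}_t}$ including $t=1$, which actually requires the sharper bound inside Eq.~\eqref{2-033}; you make this explicit, which is a correct and welcome clarification rather than a deviation.
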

\begin{proof}
Let $N_t = L\mathbb{E}\left[\norm{\bm{\Delta}_t}^2\right] + C_2G \chi_t\mathbb{E}\left[\norm{\frac{\sqrt{1-\theta_t}\bm{g}_t}{\sqrt{\bm{v}_t}}}^2\right]$. 
By Lemma \ref{lem1-004}, we have
$M_1 \leq N_1$ 
and
\begin{equation}
M_t \leq \frac{\beta_t\alpha_t}{\sqrt{\theta_t}\alpha_{t-1}}M_{t-1} + N_t - \frac{1-\beta}{2}\mathbb{E}\left[\norm{\bm{\nabla} f(\bm{x}_t)}_{\bm{\hat{\eta}}_t}^2\right] \leq \frac{\beta_t\alpha_t}{\sqrt{\theta_t}\alpha_{t-1}}M_{t-1} + N_t.
\end{equation}
It is straightforward to acquire by induction that
\begin{equation}
\begin{split}
M_t &\leq \frac{\beta_t\alpha_t}{\sqrt{\theta_t}\alpha_{t-1}}\frac{\beta_{t-1}\alpha_{t-1}}{\sqrt{\theta_{t-1}}\alpha_{t-2}}M_{t-2} + \frac{\beta_t\alpha_t}{\sqrt{\theta_t}\alpha_{t-1}}N_{t-1} + N_t - \frac{1-\beta}{2}\mathbb{E}\left[\norm{\bm{\nabla} f(\bm{x}_t)}_{\bm{\hat{\eta}}_t}^2\right]\\
&~ \vdots \\
&\leq \frac{\alpha_tB_{(t, 1)}}{\alpha_1\sqrt{\Theta_{(t,1)}}}M_1 + \sum_{i=2}^t \frac{\alpha_tB_{(t,i)}}{\alpha_i\sqrt{\Theta_{(t,i)}}}N_i - \frac{1-\beta}{2}\mathbb{E}\left[\norm{\bm{\nabla} f(\bm{x}_t)}_{\bm{\hat{\eta}}_t}^2\right]\\
&\leq \sum_{i=1}^t \frac{\alpha_tB_{(t,i)}}{\alpha_i\sqrt{\Theta_{(t,i)}}}N_i - \frac{1-\beta}{2}\mathbb{E}\left[\norm{\bm{\nabla} f(\bm{x}_t)}_{\bm{\hat{\eta}}_t}^2\right].
\end{split}
\end{equation}
By Lemma \ref{lem2-002}, it holds $\alpha_t \leq C_0\alpha_i$ for any $i \leq t$. By Lemma \ref{lem2-001}, $\Theta_{(t,i)} \geq C_1(\theta')^{t-i}$. In addition, $B_{(t,i)}\leq \beta^{t-i}$. Hence,
\begin{equation}
\begin{split}
M_t \leq~& \frac{C_0}{\sqrt{C_1}}\sum_{i=1}^t \left(\frac{\beta}{\sqrt{\theta'}}\right)^{t-i}N_i - \frac{1-\beta}{2}\mathbb{E}\left[\norm{\bm{\nabla} f(\bm{x}_t)}^2_{\bm{\hat{\eta}}_t}\right] \\
=~& \frac{C_0}{\sqrt{C_1}}\sum_{i=1}^t \sqrt{\gamma}^{t-i}N_i - \frac{1-\beta}{2}\mathbb{E}\left[\norm{\bm{\nabla} f(\bm{x}_t)}^2_{\bm{\hat{\eta}}_t}\right].
\end{split}
\end{equation}
Hence,
\begin{equation}\label{1-042}
\begin{split}
\sum_{t=1}^T M_t
\leq~& \frac{C_0}{\sqrt{C_1}} \sum_{t=1}^T\sum_{i=1}^t\sqrt{\gamma}^{t-i}N_i - \frac{1-\beta}{2}\mathbb{E}\left[\sum_{t=1}^T\norm{\bm{\nabla}f(\bm{x}_t)}^2_{\bm{\hat{\eta}}_t}\right] \\
=~& \frac{C_0}{\sqrt{C_1}}\sum_{i=1}^T \left(\sum_{t=i}^T\sqrt{\gamma}^{t-i}\right)N_i - \frac{1-\beta}{2}\mathbb{E}\left[\sum_{t=1}^T\norm{\bm{\nabla}f(\bm{x}_t)}^2_{\bm{\hat{\eta}}_t}\right]\\
=~& \frac{C_0}{\sqrt{C_1}(1-\sqrt{\gamma})}\sum_{t=1}^T N_t - \frac{1-\beta}{2}\mathbb{E}\left[\sum_{t=1}^T\norm{\bm{\nabla}f(\bm{x}_t)}^2_{\bm{\hat{\eta}}_t}\right].
\end{split}
\end{equation}
Finally, by Lemma \ref{lem1-005}, we have
\begin{equation}\label{1-043}
\begin{split}
\sum_{t=1}^T N_i =~& \mathbb{E}\left[L\sum_{t=1}^T\norm{\bm{\Delta}_t}^2 + C_2G\sum_{t=1}^T\chi_t\norm{\frac{\sqrt{1-\theta_t}\bm{g}_t}{\sqrt{\bm{v}_t}}}^2\right]\\
\leq~& \left(\frac{C_0^2\chi_1L}{C_1(1-\sqrt{\gamma})^2}+C_2G\right)\mathbb{E}\left[\sum_{t=1}^T\chi_t\norm{\frac{\sqrt{1-\theta_t}\bm{g}_t}{\sqrt{\bm{v}_t}}}^2\right].
\end{split}
\end{equation}
Let
\[\begin{split}
C_3 =~& \frac{C_0}{\sqrt{C_1}(1-\sqrt{\gamma})}\left(\frac{C_0^2\chi_1L}{C_1(1-\sqrt{\gamma})^2} + C_2 G\right)\\
=~& \frac{C_0}{\sqrt{C_1}(1-\sqrt{\gamma})}\left(\frac{C_0^2\chi_1L}{C_1(1-\sqrt{\gamma})^2} + 2\left(\frac{\beta/(1-\beta)}{\sqrt{C_1(1-\gamma)\theta_1}}+1\right)^2G\right).
\end{split}\]
Combining Eq.~\eqref{1-042} and Eq.~\eqref{1-043}, we then obtain the desired estimate Eq.~\eqref{1-037}. The proof is completed.
\end{proof}

\medskip

\begin{lemma}\label{lem1-007}
The following estimate holds
\end{lemma}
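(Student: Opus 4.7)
Because the excerpt truncates at the lemma's hypothesis, the precise inequality is not visible, but the constants $C,C'$ in Theorem \ref{convergence_in_expectation} together with the residual sum left by Lemma \ref{lem1-006} pin down what must be proved: a bound of the form
\begin{equation*}
\mathbb{E}\sum_{t=1}^T \chi_t\norm{\frac{\sqrt{1-\theta_t}\,\bm{g}_t}{\sqrt{\bm{v}_t}}}^2 \leq C_0\chi_1 d\log\!\left(1+\frac{G^2}{\epsilon d}\right) + \frac{C_0 d}{\theta_1}\sum_{t=1}^T \chi_t(1-\theta_t),
\end{equation*}
since the growing piece $\sum_t\chi_t(1-\theta_t)=\sum_t\alpha_t\sqrt{1-\theta_t}$ supplies the $C'$ contribution in the theorem and the constant-in-$T$ logarithm supplies the $\log$ factor inside $C$.

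My plan starts from the exact identity $(1-\theta_t)\bm{g}_{t,k}^2=\bm{v}_{t,k}-\theta_t\bm{v}_{t-1,k}$, which yields the decomposition
\begin{equation*}
\chi_t\,\frac{(1-\theta_t)\bm{g}_{t,k}^2}{\bm{v}_{t,k}} = \chi_t\cdot\frac{\bm{v}_{t,k}-\bm{v}_{t-1,k}}{\bm{v}_{t,k}} + \chi_t(1-\theta_t)\,\frac{\bm{v}_{t-1,k}}{\bm{v}_{t,k}}.
\end{equation*}
The second summand is trivial: $\bm{v}_{t,k}\geq\theta_t\bm{v}_{t-1,k}$ together with $\theta_t\geq\theta_1$ bounds it pointwise by $\chi_t(1-\theta_t)/\theta_1$, whose sum over $t$ and the $d$ coordinates produces exactly the growing $(d/\theta_1)\sum_t\chi_t(1-\theta_t)$ term. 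For the first summand, invoke the elementary inequality $1-1/x\leq\log x$ for $x>0$ to write $(\bm{v}_{t,k}-\bm{v}_{t-1,k})/\bm{v}_{t,k}\leq\log(\bm{v}_{t,k}/\bm{v}_{t-1,k})$, then apply Abel summation (Lemma \ref{lem2-003}) to the deterministic sequence $\chi_t$. Lemma \ref{lem2-002} promotes (R3) into the almost-monotone domination $\chi_t\leq C_0 a_t$ with $a_t$ non-increasing, so the Abel rearrangement of $\sum_t\chi_t(\log\bm{v}_{t,k}-\log\bm{v}_{t-1,k})$ becomes a finite mixture of the $\log\bm{v}_{t,k}$'s against non-negative weights summing to $\chi_1$ (up to the $C_0$ slack), minus the deterministic boundary term $\chi_1\log\epsilon$. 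Taking expectation and pushing it inside each logarithm by Jensen, followed by a second application of Jensen across the $d$ coordinates using the uniform-in-$t$ bound $\sum_k\mathbb{E}\bm{v}_{t,k}\leq d\epsilon+G$ (which comes from the convex-combination representation $\bm{v}_{t,k}=\Theta_{(t,0)}\epsilon+\sum_{i\leq t}\Theta_{(t,i)}(1-\theta_i)\bm{g}_{i,k}^2$ and the assumption $\mathbb{E}\norm{\bm{g}_t}^2\leq G$), delivers the constant contribution $C_0\chi_1 d\log(1+G/(d\epsilon))$.

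The main obstacle is that, unlike the Weighted-AdaEMA view in which Lemma \ref{lem5} applied to the monotone cumulative sequence $\bm{V}_t$ would yield $\log(\bm{V}_T/\epsilon)$, but with $\bm{V}_T$ itself growing in $T$ through $\sum_t w_t$, the sequence $\bm{v}_{t,k}$ is not monotone, so the telescope $\sum_t\log(\bm{v}_{t,k}/\bm{v}_{t-1,k})$ mixes positive and negative terms and cannot simply be collapsed to $\log(\bm{v}_{T,k}/\epsilon)$. The linchpin of the argument is that it is $\mathbb{E}\log\bm{v}_{t,k}$ (and not, say, $\mathbb{E}\sup_t\log\bm{v}_{t,k}$, which would re-introduce a $T$-dependence) that is uniformly bounded by the convex-combination structure; pairing this uniform expectation with the non-negative Abel weights summing to $\chi_1$ is precisely what collapses the telescope into a single $T$-free logarithm and cleanly isolates the constant contribution from the growing $\sum_t\chi_t(1-\theta_t)$ contribution.
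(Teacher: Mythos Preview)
Your target is slightly off: Lemma~\ref{lem1-007} in the paper is the \emph{unweighted} bound
\[
\mathbb{E}\!\left[\sum_{i=1}^t\norm{\tfrac{\sqrt{1-\theta_i}\,\bm{g}_i}{\sqrt{\bm{v}_i}}}^2\right]\le d\!\left[\log\!\Big(1+\tfrac{G^2}{\epsilon d}\Big)+\sum_{i=1}^t\log(\theta_i^{-1})\right],
\]
and the $\chi_t$-weighted bound you wrote down is the \emph{next} (unnumbered) lemma. The paper obtains the unweighted bound by passing to the Weighted-AdaEMA variables $W_t\bm{v}_t=\bm{\epsilon}+\sum_{i\le t}w_i\bm{g}_i^2$, which \emph{are} monotone in $t$, so Lemma~\ref{lem5} applies directly and the $\sum_t\log\theta_t^{-1}$ term is simply $\log W_t$. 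Only afterwards does the paper do the Abel step, applied to the nonnegative summands $\omega_t=\norm{\sqrt{1-\theta_t}\,\bm{g}_t/\sqrt{\bm{v}_t}}^2$: one first replaces $\chi_t$ by $C_0a_t$ (valid because $\omega_t\ge 0$), takes expectation, and then invokes the partial-sum bound $\mathbb{E}\Omega_t\le dS_t$ just proved. Your route is genuinely different: you bypass the $W_t\bm{v}_t$ monotonisation entirely via the algebraic split $(1-\theta_t)\bm{g}_{t,k}^2=(\bm{v}_{t,k}-\bm{v}_{t-1,k})+(1-\theta_t)\bm{v}_{t-1,k}$, and you aim to prove the weighted bound in one shot. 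This is more elementary and avoids the equivalence with Algorithm~\ref{Weithed AdamEMA}, at the cost of producing the slightly coarser tail $\tfrac{d}{\theta_1}\sum_t(1-\theta_t)$ in place of $d\sum_t\log\theta_t^{-1}$; since the paper immediately degrades the latter to the former anyway, nothing is lost.

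There is, however, a real ordering gap in your argument. You write that after the bound $(\bm{v}_{t,k}-\bm{v}_{t-1,k})/\bm{v}_{t,k}\le\log(\bm{v}_{t,k}/\bm{v}_{t-1,k})$ you ``apply Abel summation to $\chi_t$'' and that, thanks to $\chi_t\le C_0a_t$, the Abel weights become nonnegative ``up to the $C_0$ slack''. This step is not valid as stated: the increments $\log(\bm{v}_{t,k}/\bm{v}_{t-1,k})$ can be negative (precisely because $\bm{v}_t$ is not monotone), so you cannot upper-bound $\sum_t\chi_t\log(\bm{v}_{t,k}/\bm{v}_{t-1,k})$ by $C_0\sum_t a_t\log(\bm{v}_{t,k}/\bm{v}_{t-1,k})$, nor are the raw Abel weights $\chi_t-\chi_{t+1}$ guaranteed nonnegative under (R3). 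The fix is simply to do the replacement $\chi_t\le C_0a_t$ \emph{before} the decomposition, on the manifestly nonnegative quantity $(1-\theta_t)\bm{g}_{t,k}^2/\bm{v}_{t,k}$; then carry $a_t$ (genuinely non-increasing) through the split, the log bound, and Abel, so that the Abel weights $a_t-a_{t+1}\ge 0$ really are a convex combination summing to $a_1\le\chi_1$. With that reordering your Jensen-in-expectation plus Jensen-across-coordinates argument goes through cleanly and yields exactly the bound you claim.
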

\begin{equation}\label{1-044}
\mathbb{E}\left[\sum_{i=1}^t\norm{\frac{\sqrt{1-\theta_i}\bm{g}_i}{\sqrt{\bm{v}_i}}}^2\right]
\leq d\left[\log\left(1 + \frac{G^2}{\epsilon d}\right) + \sum_{i=1}^t\log(\theta_i^{-1})\right].
\end{equation}
\begin{proof}
Let $W_0 = 1$ and $W_t = \prod_{i=1}^T \theta_i^{-1}$. Let $w_t = W_t - W_{t-1} = (1 - \theta_t)\prod_{i=1}^{t}\theta_i^{-1} = (1-\theta_t)W_t$. We therefore have
\[ \frac{w_t}{W_t} = 1-\theta_t, \quad \frac{W_{t-1}}{W_t} = \theta_t.\]
Note that $\bm{v}_0 = \bm{\epsilon}$ and $\bm{v}_t = \theta_t \bm{v}_{t-1} + (1-\theta_t)\bm{g}_t$, so it holds that $W_0\bm{v}_0 = \bm{\epsilon}$ and
$
W_t\bm{v}_t = W_{t-1}\bm{v}_{t-1} + w_t\bm{g}_t^2.
$
Then,
$
W_t\bm{v}_t = W_0\bm{v}_0 + \sum_{i=1}^t w_i\bm{g}_i^2 = \bm{\epsilon} + \sum_{i=1}^t w_i\bm{g}_i^2.
$
It follows that
\begin{equation}\label{1-049}
\begin{split}
\sum_{i=1}^t\norm{\frac{\sqrt{1-\theta_i}\bm{g}_i}{\sqrt{\bm{v}_i}}}^2
=~& \sum_{i=1}^t \norm{\frac{(1-\theta_i)\bm{g}_t^2}{\bm{v}_i}}_1
= \sum_{i=1}^t \norm{\frac{w_i\bm{g}_i^2}{W_i\bm{v}_i}}_1
= \sum_{i=1}^t \norm{\frac{w_i\bm{g}_i^2}{\bm{\epsilon} + \sum_{\ell=1}^i w_\ell\bm{g}_\ell^2}}_1.
\end{split}
\end{equation}
Writing the norm in terms of coordinates, we obtain
\begin{equation}
\sum_{i=1}^t \norm{\frac{\sqrt{1-\theta_i}\bm{g}_i}{\sqrt{\bm{v}_i}}}^2
= \sum_{i=1}^t \sum_{k=1}^d \frac{w_i g_{i,k}^2}{ {\epsilon} + \sum_{\ell=1}^i w_\ell g_{\ell,k}^2}
= \sum_{k=1}^d\sum_{i=1}^t \frac{w_i g_{i,k}^2}{ {\epsilon} + \sum_{\ell=1}^i w_\ell g_{\ell,k}^2}.
\end{equation}
By Lemma \ref{lem2-001}, for each $k = 1,2,\ldots,d$,
\begin{equation}
\sum_{i=1}^t \frac{w_i g_{i,k}^2}{\epsilon + \sum_{\ell=1}^i w_\ell g_{\ell,k}^2} \leq \log\left(\epsilon + \sum_{\ell=1}^t w_\ell g_{\ell,k}^2\right) - \log(\epsilon) = \log\left(1 + \frac{1}{\epsilon}\sum_{\ell=1}^t w_\ell g_{\ell,k}^2\right).
\end{equation}
Hence,
\begin{equation}\label{2-050}
\begin{split}
\sum_{i=1}^t\norm{\frac{\sqrt{1-\theta_i}\bm{g}_i}{\sqrt{\bm{v}_i}}}^2
\leq~& \sum_{k=1}^d \log\left(1 + \frac{1}{\epsilon}\sum_{i=1}^t w_i g_{i,k}^2\right) \\
\leq~& d\log\left(\frac{1}{d}\sum_{k=1}^d\left(1 + \frac{1}{\epsilon}\sum_{i=1}^t w_i  g_{i,k}^2\right)\right)
= d\log\left(1 + \frac{1}{\epsilon d}\sum_{i=1}^t w_i\norm{\bm{g}_i}^2\right).
\end{split}
\end{equation}
The second inequality is due to the convex inequality $\frac{1}{d}\sum_{k=1}^d\log\left(z_i\right) \leq \log\left(\frac{1}{d}\sum_{k=1}^d z_i\right)$. Indeed, we have the more general convex inequality that
\begin{equation}
\mathbb{E}[\log(X)] \leq \log{\mathbb{E}[X]},
\end{equation}
for any positive random variable $X$. Taking $X$ to be $1 + \frac{1}{\epsilon d}\sum_{i=1}^t w_i \norm{\bm{g}_i}^2$ in the right hand side of Eq.~\eqref{2-050}, we obtain that
\begin{equation}
\begin{split}
&\mathbb{E}\left[\sum_{i=1}^t\norm{\frac{\sqrt{1-\theta_i}\bm{g}_i}{\sqrt{\bm{v}_i}}}^2\right]
\leq d\ \mathbb{E}\left[\log\left(1 + \frac{1}{\epsilon d}\sum_{i=1}^t w_i \norm{\bm{g}_i}^2\right)\right]
\leq d\log\left(1 + \frac{1}{\epsilon d}\sum_{i=1}^t w_i \mathbb{E}\left[\norm{\bm{g}_i}^2\right]\right)\\
\leq~& d\log\left(1 + \frac{G^2}{\epsilon d}\sum_{i=1}^t w_i\right)
= d\log\left(1 + \frac{G^2}{\epsilon d}(W_t-W_0)\right)
= d\log\left(1 + \frac{G^2}{\epsilon d}\left(\prod_{i=1}^t\theta_i^{-1} -1\right)\right)\\
\leq~& d\left[\log\left(1+\frac{G^2}{\epsilon d}\right) + \log\left(\prod_{i=1}^t\theta_i^{-1}\right)\right].
\end{split}
\end{equation}
The last inequality is due to the following trivial inequality:
\[\log(1+ ab) \leq \log(1+a+b+ab) = \log(1+a) + \log(1+b) \]
for any non-negative parameters $a$ and $b$. It then follows that
\begin{equation}
\mathbb{E}\left[\sum_{i=1}^t\norm{\frac{\sqrt{1-\theta_i}\bm{g}_i}{\sqrt{\bm{v}_i}}}^2\right] \leq d\left[\log\left(1+\frac{G^2}{\epsilon d}\right) + \sum_{i=1}^t\log(\theta_i^{-1})\right].
\end{equation}
The proof is completed.
\end{proof}

\medskip

\begin{lemma}
We have the following estimate
\begin{equation}
\mathbb{E}\left[\sum_{t=1}^T \chi_t \norm{\frac{\sqrt{1-\theta_t}\bm{g}_t}{\sqrt{\bm{v}}_t}}^2\right] \leq C_0d\left[\chi_1\log\left(1 + \frac{G^2}{\epsilon d}\right) + \frac{1}{\theta_1}\sum_{t=1}^T \alpha_t\sqrt{1-\theta_t}\right].
\end{equation}
\end{lemma}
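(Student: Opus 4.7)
The plan is to reduce this weighted sum to the unweighted sum already controlled in Lemma~\ref{lem1-007} by exploiting the ``almost''-monotonicity of $\chi_t$ from restriction~(R3) and then applying Abel's summation by parts. By~(R3) there is a non-increasing sequence $\{a_t\}$ with $a_t \leq \chi_t \leq C_0 a_t$, so the weighted sum is dominated by $C_0\sum_{t=1}^T a_t s_t$, where I abbreviate $s_t := \norm{\sqrt{1-\theta_t}\bm{g}_t/\sqrt{\bm{v}_t}}^2$. Working with $a_t$ rather than $\chi_t$ is what will give Abel's lemma non-negative increments.

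Next I would apply Abel's lemma (Lemma~\ref{lem2-003}) to $\sum a_t s_t$, rewriting it as $\sum_{t=1}^{T-1}(a_t-a_{t+1})S_t + a_T S_T$ with $S_t := \sum_{i=1}^t s_i$; the coefficients $a_t-a_{t+1}$ are non-negative by construction. Taking expectation and substituting the partial-sum bound $\mathbb{E}[S_t]\leq d\bigl[\log(1+G^2/(\epsilon d)) + L_t\bigr]$ from Lemma~\ref{lem1-007}, where $L_t := \sum_{i=1}^t \log(\theta_i^{-1})$, splits the right-hand side into two clean pieces. The constant piece telescopes to $\bigl(\sum_{t=1}^{T-1}(a_t-a_{t+1}) + a_T\bigr) d\log(1+G^2/(\epsilon d)) = a_1 d\log(1+G^2/(\epsilon d))$, and then $a_1\leq\chi_1$ delivers the first term of the target bound.

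For the remaining piece I would undo the summation by parts via the identity $L_t - L_{t-1} = \log(\theta_t^{-1})$ (taking $L_0 = 0$), rewriting it as $d\sum_{t=1}^T a_t \log(\theta_t^{-1})$. Then I apply the elementary inequality $\log(\theta_t^{-1}) = \log\bigl(1 + (1-\theta_t)/\theta_t\bigr) \leq (1-\theta_t)/\theta_t \leq (1-\theta_t)/\theta_1$, where the last step uses that $\theta_t$ is non-decreasing by~(R2). Combined with $a_t \leq \chi_t = \alpha_t/\sqrt{1-\theta_t}$, each summand is at most $\alpha_t\sqrt{1-\theta_t}/\theta_1$, producing the second term. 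Multiplying through by the $C_0$ picked up at the very first step then yields the claimed inequality.

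The only genuine subtlety is that $\chi_t$ itself is not monotone, only ``almost'' so; passing to the majorant $a_t$ is what makes Abel's lemma yield non-negative weights, which is precisely what lets Lemma~\ref{lem1-007}'s partial-sum bound be inserted term by term without sign issues. Everything else is a routine telescoping plus the standard $\log(1+y)\leq y$ estimate.
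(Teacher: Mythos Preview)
Your proposal is correct and follows essentially the same approach as the paper: pass from $\chi_t$ to the non-increasing majorant $a_t$ via (R3), apply Abel's summation by parts, insert the partial-sum bound from Lemma~\ref{lem1-007}, telescope the constant piece back to $a_1 \leq \chi_1$, undo the summation by parts on the remaining piece, and finish with $\log(\theta_t^{-1}) \leq (1-\theta_t)/\theta_1$ together with $a_t \leq \chi_t = \alpha_t/\sqrt{1-\theta_t}$. The steps and their ordering match the paper's proof almost exactly.
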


\begin{proof}
For simplicity of notations, let $\omega_t := \norm{\frac{\sqrt{1-\theta_t}\bm{g}_t}{\sqrt{\bm{v}_t}}}^2$, and $\Omega_t := \sum_{i=1}^t \omega_i$. Note that $\chi_t \leq C_0 a_t$. Hence,
\begin{equation}\label{3-059}
\mathbb{E}\left[\sum_{t=1}^T \chi_t \norm{\frac{\sqrt{1-\theta_t}\bm{g}_t}{\sqrt{\bm{v}}_t}}^2\right]
\leq C_0\ \mathbb{E}\left[\sum_{t=1}^T a_t \omega_t\right].
\end{equation}
By Lemma \ref{lem2-003}, we have
\begin{equation}\label{3-060}
\mathbb{E}\left[\sum_{t=1}^T a_t\omega_t\right] = \mathbb{E}\left[\sum_{t=1}^{T-1} (a_t - a_{t+1}) \Omega_t + a_T\Omega_T\right].
\end{equation}
Let $S_t := \log\left(1 + \frac{G^2}{\epsilon d}\right) + \sum_{i=1}^t \log(\theta_i^{-1})$. By Lemma \ref{lem1-007}, we have
\begin{equation}
\mathbb{E}[\Omega_t] \leq d S_t.
\end{equation}
Since $\{a_t\}$ is a non-increasing sequence, we have $a_t - a_{t+1} \geq 0$.  By Eq.~\eqref{3-060}, we have
\begin{equation}\label{3-061}
\begin{split}
& \mathbb{E}\left[\sum_{t=1}^{T-1} (a_t - a_{t+1}) \Omega_t + a_T\Omega_T\right]
\leq d\left(\sum_{t=1}^{T-1}(a_t - a_{t+1})S_t + a_T S_T\right) \\
=& d\left(a_1S_0 + \sum_{t=1}^T a_t(S_t - S_{t-1})\right)
= d\left[a_1\log\left(1 + \frac{G^2}{\epsilon d}\right) + \sum_{t=1}^T a_t\log(\theta_t^{-1})\right].
\end{split}
\end{equation}
Note that $a_t \leq \chi_t$. Combining Eq.~\eqref{3-059}, Eq.~\eqref{3-060}, and Eq.~\eqref{3-061}, we have
\begin{equation}
\begin{split}
\mathbb{E}\left[\sum_{t=1}^T \chi_t \norm{\frac{\sqrt{1-\theta_t}\bm{g}_t}{\sqrt{\bm{v}}_t}}^2\right]
\leq~& C_0 d \left[\chi_1\log\left(1 + \frac{G^2}{\epsilon d}\right) + \sum_{t=1}^T \chi_t\log(\theta_t^{-1})\right] \\
=~& C_0 d \left[ \chi_1\log\left(1 + \frac{G^2}{\epsilon d}\right) + \sum_{t=1}^T \chi_t\log(\theta_t^{-1})\right].
\end{split}
\end{equation}
Note that $\log(1+x) \leq x$ for all $x > -1$. It follows that
$$\log(\theta_t^{-1}) = \log(1 + (\theta_t^{-1} - 1)) \leq \theta_t^{-1} - 1 \leq \frac{1-\theta_t}{\theta_1}.$$
Note that $\chi_t = \alpha_t/\sqrt{1-\theta_t}$. By Eq.~\eqref{3-059} and Eq.~\eqref{3-061}, we have
\begin{equation}
\mathbb{E}\left[\sum_{t=1}^T \chi_t \norm{\frac{\sqrt{1-\theta_t}\bm{g}_t}{\sqrt{\bm{v}}_t}}^2\right]
\leq C_0d\left[\chi_1\log\left(1+\frac{G^2}{\epsilon d}\right) - \frac{1}{\theta_1}\sum_{t=1}^T \alpha_t \sqrt{1-\theta_t}\right].
\end{equation}
The proof is completed.
\end{proof}

\medskip

\begin{lemma}\label{lem1-008}
Let $\tau$ be randomly chosen from $\{1, 2, \ldots, T\}$ with equal probabilities $p_\tau = 1/T$. We have the following estimate
\begin{equation}
\left(\mathbb{E}\left[\norm{\bm{\nabla} f(\bm{x}_\tau)}^{4/3}\right]\right)^{3/2} \leq \frac{C_0\sqrt{G^2 +\epsilon d}}{T\alpha_T}\ \mathbb{E}\left[\sum_{t=1}^T\norm{\bm{\nabla} f(\bm{x}_t)}_{\bm{\hat{\eta}}_t}^2\right].
\end{equation}
\end{lemma}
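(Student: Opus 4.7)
Since $\tau$ is uniform on $\{1,\ldots,T\}$, we have $\mathbb{E}[\norm{\bm{\nabla}f(\bm{x}_\tau)}^{4/3}] = \frac{1}{T}\sum_{t=1}^T\mathbb{E}[\norm{\bm{\nabla}f(\bm{x}_t)}^{4/3}]$, so it suffices to bound $\sum_{t=1}^T\mathbb{E}\norm{\bm{\nabla}f(\bm{x}_t)}^{4/3}$ by an appropriate power of $\mathbb{E}\sum_{t=1}^T\norm{\bm{\nabla}f(\bm{x}_t)}_{\bm{\hat{\eta}}_t}^2$ and then raise the result to the $3/2$ power. The plan is to first derive a pointwise inequality that converts the unweighted norm into the $\bm{\hat{\eta}}_t$-weighted norm (paying a factor of $\sqrt{\norm{\bm{\hat{v}}_t}_1}/\alpha_t$), then apply H\"{o}lder twice -- once inside the expectation, once in the $t$-sum -- and finally raise to $3/2$ to close.

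For the pointwise step I would write $v_k^2=\hat{\eta}_{t,k}^{1/2}v_k\cdot\hat{\eta}_{t,k}^{-1/2}v_k$ with $v=\bm{\nabla}f(\bm{x}_t)$ and apply Cauchy--Schwarz over $k$ to get $\norm{v}^2\le\norm{v}_{\bm{\hat{\eta}}_t}\norm{v}_{\bm{\hat{\eta}}_t^{-1}}$. Since $\hat{\eta}_{t,k}^{-1}=\sqrt{\hat{v}_{t,k}}/\alpha_t$, a second Cauchy--Schwarz (together with $\sum_k v_k^4\le\norm{v}^4$) gives $\norm{v}_{\bm{\hat{\eta}}_t^{-1}}^2\le\alpha_t^{-1}\norm{v}^2\sqrt{\norm{\bm{\hat{v}}_t}_1}$. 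Combining the two and cancelling one factor of $\norm{v}$ produces $\norm{v}^2\le\norm{v}_{\bm{\hat{\eta}}_t}^2\sqrt{\norm{\bm{\hat{v}}_t}_1}/\alpha_t$; raising to the $2/3$ power yields the pointwise bound
\[\norm{\bm{\nabla}f(\bm{x}_t)}^{4/3}\le\bigl(\norm{\bm{\nabla}f(\bm{x}_t)}_{\bm{\hat{\eta}}_t}^2\bigr)^{2/3}\norm{\bm{\hat{v}}_t}_1^{1/3}\alpha_t^{-2/3}.\]

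To control the $\norm{\bm{\hat{v}}_t}_1$ factor I would observe that the recursion $\bm{v}_t=\theta_t\bm{v}_{t-1}+(1-\theta_t)\bm{g}_t^2$ starting from $\bm{v}_0=\bm{\epsilon}$ is a convex combination; summing over coordinates, taking expectation, and using $\mathbb{E}\norm{\bm{g}_t}^2\le G^2$ (under the paper's convention for $G$) gives $\mathbb{E}\norm{\bm{v}_t}_1\le\max(\epsilon d,G^2)\le G^2+\epsilon d$ by a trivial induction. Since $\bm{\hat{v}}_t$ is itself a convex combination of $\bm{v}_{t-1}$ and $\bm{\sigma}_t^2$ with $\sum_k\sigma_{t,k}^2=\mathbb{E}_t\norm{\bm{g}_t}^2\le G^2$, the same bound transfers to $\mathbb{E}\norm{\bm{\hat{v}}_t}_1$.

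To finish, I take expectations in the pointwise bound and apply H\"{o}lder with exponents $(3/2,3)$ to obtain $\mathbb{E}\norm{\bm{\nabla}f(\bm{x}_t)}^{4/3}\le(\mathbb{E}\norm{\bm{\nabla}f(\bm{x}_t)}_{\bm{\hat{\eta}}_t}^2)^{2/3}(\mathbb{E}\norm{\bm{\hat{v}}_t}_1)^{1/3}\alpha_t^{-2/3}$, invoke $\alpha_t\ge\alpha_T/C_0$ from Lemma~\ref{lem2-002} to replace $\alpha_t^{-2/3}$ by $C_0^{2/3}\alpha_T^{-2/3}$, sum over $t$, and apply one more H\"{o}lder with the same exponents to $\sum_t a_t^{2/3}\cdot 1$ (with $a_t=\mathbb{E}\norm{\bm{\nabla}f(\bm{x}_t)}_{\bm{\hat{\eta}}_t}^2$) to get $\sum_t a_t^{2/3}\le T^{1/3}(\sum_t a_t)^{2/3}$. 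Dividing by $T$ and raising to the $3/2$ power then produces exactly the claimed inequality. I expect the main obstacle to be pure bookkeeping -- lining up the H\"{o}lder exponents so the final powers on $T$, $\alpha_T$, and $\norm{\bm{\hat{v}}_t}_1$ combine cleanly into $T\alpha_T$ in the denominator and $\sqrt{G^2+\epsilon d}$ in the numerator -- rather than any new analytic ingredient beyond Cauchy--Schwarz/H\"{o}lder and the first-moment bound on $\bm{\hat{v}}_t$.
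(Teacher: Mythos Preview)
Your proposal is correct and follows essentially the same approach as the paper: both derive the pointwise inequality $\norm{\bm{\nabla}f(\bm{x}_t)}^2 \le \alpha_t^{-1}\sqrt{\norm{\bm{\hat{v}}_t}_1}\,\norm{\bm{\nabla}f(\bm{x}_t)}_{\bm{\hat{\eta}}_t}^2$, apply H\"older with exponents $(3/2,3)$, bound $\mathbb{E}\norm{\bm{\hat{v}}_t}_1\le G^2+\epsilon d$ by induction on the convex recursion, and invoke Lemma~\ref{lem2-002} to replace $\alpha_t$ by $\alpha_T/C_0$. The only cosmetic differences are that the paper obtains the pointwise bound in one line from $\sqrt{\hat{v}_{t,k}}\le\sqrt{\norm{\bm{\hat{v}}_t}_1}$ (rather than your two Cauchy--Schwarz steps), and in the final aggregation the paper raises each term to the $3/2$ power first and then uses convexity of $x\mapsto x^{3/2}$, whereas you sum first and use H\"older over $t$ --- these are the same inequality read in two directions.
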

\begin{proof}
For any two random variables $X$ and $Y$, by the H\"{o}lder's inequality, we have
\begin{equation}\label{1-053}
\mathbb{E}[|XY|] \leq \mathbb{E}\left[|X|^p\right]^{1/p} \mathbb{E}\left[|Y|^q\right]^{1/q}.
\end{equation}
Let $X = \left(\frac{\norm{\bm{\nabla} f(\bm{x}_t)}^2}{\sqrt{\norm{\bm{\hat{v}}_t}_1}}\right)^{2/3}$, $Y = \norm{\bm{\hat{v}}_t}_1^{1/3}$, and let $p = 3/2$, $q = 3$. By Eq.~\eqref{1-053}, we have
\begin{equation}\label{1-054}
\mathbb{E}\left[\norm{\bm{\nabla} f(\bm{x}_t)}^{4/3}\right] \leq \mathbb{E}\left[\frac{\norm{\bm{\nabla} f(\bm{x}_t)}^2}{\sqrt{\norm{\bm{\hat{v}}_t}_1}}\right]^{2/3} \mathbb{E}\left[\norm{\bm{\hat{v}}_t}_1 \right]^{1/3}.
\end{equation}
On the one hand, we have
\begin{equation}\label{1-055}
\begin{split}
\frac{\norm{\bm{\nabla} f(\bm{x}_t)}^2}{\sqrt{\norm{\bm{\hat{v}}_t}_1}}
= \sum_{k=1}^d \frac{|\nabla_k f(\bm{x}_t)|^2}{\sqrt{\sum_{k=1}^d \hat{v}_{t,k}} }
\leq~& \alpha_t^{-1} \sum_{k=1}^d \frac{\alpha_t}{\sqrt{\hat{v}_{t,k}}}|\nabla_k f(\bm{x}_t)|^2 \\
=~& \alpha_t^{-1} \sum_{k=1}^d \hat{\eta}_{t,k}|\nabla_k f(\bm{x}_t)|^2
= \alpha_t^{-1}\norm{\bm{\nabla} f(\bm{x}_t)}_{\bm{\hat{\eta}_t}}^2.
\end{split}
\end{equation}
Since $\bm{\hat{v}}_t = \theta_t \bm{{v}}_{t-1} + (1-\theta_t) \bm{\sigma}_t^2$, and all entries are non-negative, we have \[\norm{\bm{\hat{v}}_t}_1 = \theta_t \norm{\bm{v}_{t-1}}_1 + (1-\theta_t)\norm{\bm{\sigma}_t}^2.\]
Notice that $\bm{v}_t = \theta_t \bm{v}_{t-1} + (1 - \theta_t)\bm{g}_t^2$, $\bm{v}_0 = \bm{\epsilon}$, and $\mathbb{E}_t\left[\bm{g}_t^2\right] \leq G^2$. It is straightforward to prove by induction that $\mathbb{E}[\norm{\bm{v}_t}_1] \leq G^2 + \epsilon d$. Hence,
\begin{equation}\label{1-056}
\mathbb{E}[\norm{\bm{\hat{v}}_t}_1] \leq G^2 + \epsilon d.
\end{equation}
By Eq.~\eqref{1-054}, Eq.~\eqref{1-055}, and Eq.~\eqref{1-056}, we obtain
\begin{equation}
\mathbb{E}\left[\norm{\bm{\nabla} f(\bm{x}_t)}^{4/3}\right] \leq \left(\alpha_t^{-1}\mathbb{E}\left[\norm{\bm{\nabla} f(\bm{x}_t)}_{\bm{\hat{\eta}_t}}^2\right]\right)^{2/3} (G^2 + \epsilon d)^{1/3}.
\end{equation}
By Lemma \ref{lem2-002}, $\alpha_T \leq C_0 \alpha_t$ for any $t\leq T$, so $\alpha_t^{-1} \leq C_0\alpha_T^{-1}$. Then, we obtain
\begin{equation}
\mathbb{E}\left[\norm{\bm{\nabla} f(\bm{x}_t)}^{4/3}\right]^{3/2} \leq \frac{C_0\sqrt{G^2+\epsilon d}}{\alpha_T}\mathbb{E}\left[\norm{\bm{\nabla} f(\bm{x}_t)}_{\bm{\hat{\eta}_t}}^2\right],~\forall t\leq T.
\end{equation}
The lemma is followed by
\begin{equation}\begin{split}
\left(\mathbb{E}\left[\norm{\bm{\nabla} f(\bm{x}_\tau)}^{4/3}\right]\right)^{3/2} =~& \left(\frac{1}{T}\sum_{t=1}^T \mathbb{E}\left[\norm{\bm{\nabla} f(\bm{x}_t)}^{4/3}\right]\right)^{3/2} \\
\leq~& \frac{1}{T}\sum_{t=1}^T\left(\mathbb{E}\left[\norm{\bm{\nabla} f(\bm{x}_t)}^{4/3}\right]\right)^{3/2}
\leq \frac{C_0\sqrt{G^2+\epsilon d}}{T\alpha_T}\ \mathbb{E}\left[\sum_{t=1}^T \norm{\bm{\nabla} f(\bm{x}_t)}_{\bm{\hat{\eta}_t}}^2\right].
\end{split}\end{equation}
The proof is completed.
\end{proof}

\medskip

\section{Proofs of the main results}\label{main-proof-all}
In this section, we provide the detailed proofs of the propositions, theorems, and corollaries in the main body.
\subsection{Proof of Proposition \ref{equivalence-theorem}}\label{main-proof-prop}
\begin{proposition*}
Algorithm \ref{Adam} and Algorithm \ref{Weithed AdamEMA} are equivalent.
\end{proposition*}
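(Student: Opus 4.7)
My plan is to establish the equivalence by defining an explicit correspondence between the parameters $\{\theta_t\}$ used in Generic Adam and the weights $\{w_t\}$ used in Weighted AdaEMA, and then showing by induction that the two schemes produce identical iterates $\bm{x}_t$.

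Specifically, I would set $W_t := \prod_{i=1}^t \theta_i^{-1}$ (with $W_0 = 1$), which forces $W_{t-1}/W_t = \theta_t$ and therefore $w_t := W_t - W_{t-1} = (1-\theta_t)W_t = (1-\theta_t)\prod_{i=1}^t \theta_i^{-1}$. The key claim is then the identity
\begin{equation*}
\bm{v}_{t,k} \;=\; \bm{V}_{t,k}/W_t \quad \text{for all } t\geq 0,
\end{equation*}
which I would prove by induction on $t$. The base case follows from $W_0 \bm{v}_0 = \bm{\epsilon} = \bm{V}_0$. For the inductive step, multiplying the Generic Adam update $\bm{v}_{t,k} = \theta_t \bm{v}_{t-1,k} + (1-\theta_t)\bm{g}_{t,k}^2$ by $W_t$ and substituting $\theta_t W_t = W_{t-1}$ and $(1-\theta_t)W_t = w_t$ yields exactly the Weighted AdaEMA recursion $W_t \bm{v}_{t,k} = W_{t-1}\bm{v}_{t-1,k} + w_t\bm{g}_{t,k}^2 = \bm{V}_{t-1,k} + w_t\bm{g}_{t,k}^2 = \bm{V}_{t,k}$, using the inductive hypothesis in the penultimate step.

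Given this identity, $\sqrt{\bm{v}_{t,k}} = \sqrt{\bm{V}_{t,k}/W_t}$, so the denominators in the two update rules $\bm{x}_{t+1,k} = \bm{x}_{t,k} - \alpha_t \bm{m}_{t,k}/\sqrt{\bm{v}_{t,k}}$ and $\bm{x}_{t+1,k} = \bm{x}_{t,k} - \alpha_t \bm{m}_{t,k}/\sqrt{\bm{V}_{t,k}/W_t}$ coincide. Since the momentum update for $\bm{m}_t$ and the base learning rates $\alpha_t$ are identical in both algorithms, the iterates $\bm{x}_t$ agree for every $t$. Conversely, starting from any weights $\{w_t\}$, one can recover $\theta_t = W_{t-1}/W_t \in (0,1)$, establishing the reverse direction.

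There is no real obstacle here; the proof is essentially bookkeeping. The only subtlety worth highlighting explicitly is that the initialization $\bm{V}_0 = \bm{\epsilon}$ in Weighted AdaEMA corresponds precisely to $W_0\bm{v}_0 = 1 \cdot \bm{\epsilon}$, so no additional bias-correction term appears, and that the relationship $w_t = (1-\theta_t)\prod_{i=1}^t \theta_i^{-1}$ makes the exponential decay structure of Adam visible as an exponentially growing weight sequence in the AdaGrad-like formulation, which is the perspective the paper wishes to emphasize.
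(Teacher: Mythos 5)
Your proof is correct and uses exactly the same parameter correspondence as the paper ($w_t = (1-\theta_t)\prod_{i=1}^t\theta_i^{-1}$, equivalently $\theta_t = W_{t-1}/W_t$); the paper verifies the resulting identity by comparing closed-form expansions of $\bm{v}_t$ and $\bm{V}_t/W_t$, whereas you establish it by a short induction, but these are presentational variants of the same argument.
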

\begin{proof}
It suffices to show that Algorithm \ref{Adam} can be realized as Algorithm \ref{Weithed AdamEMA} with a particular choice of parameters, and vice versa. Note that for Algorithm \ref{Adam}, it holds
\begin{equation}\label{2-008}
\bm{x}_{t+1} \!=\! \bm{x}_t \!-\! \frac{\alpha_t \bm{m}_t}{\sqrt{\big(\prod\limits_{i=1}^t \theta_i\big)\bm{\epsilon} + \sum\limits_{i=1}^t\!\big(\!\prod\limits_{j=i+1}^t \theta_j (1-\theta_i)\big)\bm{g}_i^2}}.
\end{equation}
While for Algorithm \ref{Weithed AdamEMA}, we have
\begin{equation}\label{2-009}
\bm{x}_{t+1} = \bm{x}_t - \frac{\alpha_t \bm{m}_t}{\sqrt{\frac{1}{W_t}\bm{\epsilon} + \sum_{i=1}^t \frac{w_i}{W_t} \bm{g}_t^2}}.
\end{equation}
Hence, given the parameters $\theta_t$ in Algorithm \ref{Adam}, we take $w_t = (1-\theta_t)\prod_{i=1}^t \theta_i^{-1}$. Then it holds
\[
W_t = 1 + \sum_{i=1}^t w_i = \prod_{i=1}^t \theta_i^{-1}.
\]
It follows that Eq.~\eqref{2-009} becomes Eq.~\eqref{2-008}. Conversely, given the parameters $w_t$ of Algorithm \ref{Weithed AdamEMA}, we take
$\theta_t = W_{t-1}/W_t$. Then Eq.~\eqref{2-008} becomes Eq.~\eqref{2-009}. The proof is completed.
\end{proof}

\subsection{Proof of Theorem \ref{convergence_in_expectation} }
\begin{theorem*}
Let $\{\bm{x}_t\}$ be a sequence generated by Generic Adam for initial values $\bm{x}_1$, $\bm{m}_0 =\bm{0}$, and $\bm{v}_0 =\bm{\epsilon}$. Assume that $f$ and stochastic gradients $\bm{g}_t$ satisfy assumptions (A1)-(A4). Let $\tau$ be randomly chosen from $\{1, 2, \ldots, T\}$ with equal probabilities $p_\tau = 1/T$. We have the following estimate
\begin{equation}\label{1-058}
\left(\mathbb{E}\left[\norm{\bm{\nabla} f(\bm{x}_\tau)}^{4/3} \right]\right)^{3/2}
\leq \frac{C + C'\sum_{t=1}^T\alpha_t\sqrt{1-\theta_t}}{T\alpha_T},
\end{equation}
where the constants $C$ and $C'$ are given by
\begin{equation*}
\begin{split}
C &= \frac{2C_0\sqrt{G^2+\epsilon d}}{1-\beta}\left(f(x_1) - f^* + C_3C_0d\ \chi_1\log\left(1+ \frac{G^2}{\epsilon d}\right)\right), \\
C' &= \frac{2C_0^2C_3d\sqrt{G^2 + \epsilon d}}{(1-\beta)\theta_1}.
\end{split}
\end{equation*}
\end{theorem*}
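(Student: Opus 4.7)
The plan is to assemble the bound by combining the descent lemma (from $L$-smoothness) with the key estimates already established in Lemmas \ref{lem1-004}--\ref{lem1-008}. The target is the left-hand side of Eq.~\eqref{1-058}, so Lemma \ref{lem1-008} is the natural last step, reducing everything to controlling $\mathbb{E}\bigl[\sum_{t=1}^T \norm{\bm{\nabla} f(\bm{x}_t)}_{\bm{\hat{\eta}}_t}^2\bigr]$. The route to that control will run through $M_t := \mathbb{E}[\langle \bm{\nabla} f(\bm{x}_t), \bm{\Delta}_t\rangle + L\norm{\bm{\Delta}_t}^2]$.

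First I would apply the standard descent lemma for $L$-smooth functions to obtain
\begin{equation*}
f(\bm{x}_{t+1}) - f(\bm{x}_t) \le \langle \bm{\nabla} f(\bm{x}_t), \bm{\Delta}_t\rangle + \tfrac{L}{2}\norm{\bm{\Delta}_t}^2,
\end{equation*}
take expectations, and telescope over $t=1,\ldots,T$. Using $L\norm{\bm{\Delta}_t}^2 \ge \tfrac{L}{2}\norm{\bm{\Delta}_t}^2$ together with (A1) gives the lower bound $\sum_{t=1}^T M_t \ge f^* - f(\bm{x}_1) = -C_4$.

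Next I would invoke Lemma \ref{lem1-006} as the matching upper bound for $\sum M_t$. Combining the two inequalities and solving for the weighted gradient norm yields
\begin{equation*}
\tfrac{1-\beta}{2}\,\mathbb{E}\Bigl[\sum_{t=1}^T \norm{\bm{\nabla} f(\bm{x}_t)}_{\bm{\hat{\eta}}_t}^2\Bigr] \le C_4 + C_3\,\mathbb{E}\Bigl[\sum_{t=1}^T \chi_t\Bigl\|\tfrac{\sqrt{1-\theta_t}\bm{g}_t}{\sqrt{\bm{v}_t}}\Bigr\|^2\Bigr].
\end{equation*}
Then the (unnumbered) logarithmic-growth lemma immediately following Lemma \ref{lem1-007} bounds the right-hand expectation by $C_0 d\bigl[\chi_1\log(1+G^2/(\epsilon d)) + \theta_1^{-1}\sum_{t=1}^T \alpha_t\sqrt{1-\theta_t}\bigr]$.

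Finally I would feed this into Lemma \ref{lem1-008} to convert into a bound on $(\mathbb{E}[\norm{\bm{\nabla} f(\bm{x}_\tau)}^{4/3}])^{3/2}$: the leading constant $\frac{C_0\sqrt{G^2+\epsilon d}}{T\alpha_T}$ multiplies $\frac{2}{1-\beta}$ times the sum of the constant term and the $\sum \alpha_t\sqrt{1-\theta_t}$ term, which precisely reproduces the stated $C$ and $C'$. No step is conceptually hard given the lemma machinery, but the main care-demanding point is verifying that the factors $C_3$, $C_0/\theta_1$, and $2/(1-\beta)$ match exactly the declared constants $C$ and $C'$; a miscount here would be the easiest place to slip, so I would book-keep each multiplicative factor explicitly as the three lemmas are chained.
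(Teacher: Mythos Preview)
Your proposal is correct and follows essentially the same route as the paper's proof: descent lemma plus telescoping to lower-bound $\sum_t M_t$ by $-C_4$, Lemma~\ref{lem1-006} to upper-bound it, the unnumbered lemma after Lemma~\ref{lem1-007} to control $\mathbb{E}\bigl[\sum_t \chi_t\|\sqrt{1-\theta_t}\,\bm{g}_t/\sqrt{\bm{v}_t}\|^2\bigr]$, and finally Lemma~\ref{lem1-008} to pass to $(\mathbb{E}[\|\bm{\nabla} f(\bm{x}_\tau)\|^{4/3}])^{3/2}$. The bookkeeping of the constants $C$ and $C'$ you outline matches the paper exactly.
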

\begin{proof}
By the $L$-Lipschitz continuity of the gradient of $f$ and the descent lemma, we have
\begin{equation}
f(\bm{x}_{t+1}) \leq f(\bm{x}_t) + \langle \bm{\nabla} f(\bm{x}_t), \bm{\Delta}_t \rangle + \frac{L}{2} \norm{\bm{\Delta}_t}^2.
\end{equation}
Let $M_t := \mathbb{E}\left[\langle \bm{\nabla} f(\bm{x}_t), \bm{\Delta}_t \rangle + L \norm{\bm{\Delta}_t}^2\right]$.
We have $\mathbb{E}[f(\bm{x}_{t+1})] \leq \mathbb{E}[f(\bm{x}_t)] + M_t$.
Taking a sum for $t=1,2,\ldots,T$, we obtain
\begin{equation}
\mathbb{E}\left[f(\bm{x}_{T+1})\right] \leq f(\bm{x}_1) + \sum_{t=1}^T M_t.
\end{equation}
Note that $f(x)$ is bounded from below by $f^*$, so $\mathbb{E}[f(\bm{x}_{T+1})] \geq f^*$. Applying the estimate of Lemma \ref{lem1-006}, we have
\begin{equation}\label{1-062}
f^* \leq f(\bm{x}_1) + C_3\mathbb{E}\left[\sum_{t=1}^T\chi_t\norm{\frac{\sqrt{1-\theta_t}\bm{g}_t}{\sqrt{\bm{v}_t}}}^2\right]
- \frac{1-\beta}{2}\mathbb{E}\left[\sum_{t=1}^T \norm{\bm{\nabla} f(\bm{x}_t)}_{\bm{\hat{\eta}_t}}^2\right],
\end{equation}
where $C_3$ is the constant given in Lemma \ref{lem1-006}.
By applying the estimates in Lemma \ref{lem1-007} and Lemma \ref{lem1-008} for the second and third terms in the right hand side of Eq.~\eqref{1-062}, and appropriately rearranging the terms, we obtain
\begin{equation}
\begin{split}
&\left(\mathbb{E}\left[\norm{\bm{\nabla} f(\bm{x}^{T}_\tau)}^{4/3}\right]\right)^{3/2}
\leq \frac{C_0\sqrt{G^2+\epsilon d}}{T\alpha_T}\mathbb{E}\left[\sum_{t=1}^T\norm{\bm{\nabla} f(\bm{x}_t)}_{\bm{\hat{\eta}}_t}^2\right]\\
\leq~& \frac{2C_0\sqrt{G^2+\epsilon d}}{(1-\beta)T\alpha_T}\left(f(\bm{x}_1) - f^* +
C_3 \mathbb{E}\left[\sum_{t=1}^T\chi_t\norm{\frac{\sqrt{1-\theta_t}\bm{g}_t}{\sqrt{\bm{v}_t}}}\right]\right)\\
\leq~& \frac{2C_0\sqrt{G^2+\epsilon d}}{(1-\beta)T\alpha_T}\left[f(\bm{x}_1) - f^* + C_3C_0d\ \chi_1\log\left(1+\frac{G^2}{\epsilon d}\right) - \frac{C_3C_0d}{\theta_1} \sum_{t=1}^T\alpha_t\sqrt{1-\theta_t}\right] \\
=~& \frac{C + C'\sum_{t=1}^T \alpha_t\sqrt{1-\theta_t}}{T\alpha_T},
\end{split}
\end{equation}
where
\begin{equation*}
\begin{split}
C &= \frac{2C_0\sqrt{G^2+\epsilon d}}{1-\beta}\left(f(x_1) - f^* + C_3C_0d\ \chi_1\log\left(1+ \frac{G^2}{\epsilon d}\right)\right), \\
C' &= \frac{2C_0^2C_3d\sqrt{G^2 + \epsilon d}}{(1-\beta)\theta_1}.
\end{split}
\end{equation*}
The proof is completed.
\end{proof}

\subsection{Proof of Theorem \ref{thm1-001}}\label{main-proof}
\begin{theorem*}
Let $\{\bm{x}_t\}$ be a sequence generated by Generic Adam for initial values $\bm{x}_1$, $\bm{m}_0 =\bm{0}$, and $\bm{v}_0 =\bm{\epsilon}$. Assume that $f$ and stochastic gradients $\bm{g}_t$ satisfy assumptions (A1)-(A4). Let $\tau$ be randomly chosen from $\{1, 2, \ldots, T\}$ with equal probabilities $p_\tau = 1/T$. Then for any $\delta > 0$, the following bound holds with probability at least $1-\delta^{2/3}$:
\begin{equation}
\begin{split}
\norm{\bm{\nabla} f(\bm{x}_\tau)}^2
\leq 
\frac{C + C' \sum_{t=1}^T \alpha_t\sqrt{1-\theta_t}}{\delta T\alpha_T} := Bound(T),
\end{split}
\end{equation}
where the constants $C$ and $C'$ are given by
\begin{equation*}
\begin{split}
C &= \frac{2C_0\sqrt{G^2+\epsilon d}}{1-\beta}\left(f(x_1) - f^* + C_3C_0d\ \chi_1\log\left(1+ \frac{G^2}{\epsilon d}\right)\right), \\
C' &= \frac{2C_0^2C_3d\sqrt{G^2 + \epsilon d}}{(1-\beta)\theta_1},
\end{split}
\end{equation*}
in which the constant $C_3$ is given by
\begin{equation*}
C_3 = \frac{C_0}{\sqrt{C_1}(1-\sqrt{\gamma})}\left(\frac{C_0^2\chi_1L}{C_1(1-\sqrt{\gamma})^2} + 2\left(\frac{\beta/(1-\beta)}{\sqrt{C_1(1-\gamma)\theta_1}}+1\right)^2G\right).
\end{equation*}
\end{theorem*}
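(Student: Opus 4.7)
My plan is to deduce Theorem \ref{thm1-001} directly from Theorem \ref{convergence_in_expectation} via a Markov-type inequality, since the in-expectation bound on $(\mathbb{E}[\norm{\bm{\nabla} f(\bm{x}_\tau)}^{4/3}])^{3/2}$ already contains all the hard analytic content (the Lipschitz descent step, the key lemmas on $\bm{m}_t,\bm{v}_t$, summation by parts, and the log-telescoping estimate). The remaining task is purely probabilistic: turn control of a $\tfrac{4}{3}$-moment into a tail bound on $\norm{\bm{\nabla} f(\bm{x}_\tau)}^2$.

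Concretely, write $B := (C + C'\sum_{t=1}^T \alpha_t\sqrt{1-\theta_t})/(T\alpha_T)$ for the bound appearing in Theorem \ref{convergence_in_expectation}, so that
\begin{equation*}
\mathbb{E}\!\left[\norm{\bm{\nabla} f(\bm{x}_\tau)}^{4/3}\right] \leq B^{2/3}.
\end{equation*}
Set $Y := \norm{\bm{\nabla} f(\bm{x}_\tau)}^{4/3}$, which is non-negative. By Markov's inequality applied to $Y$ with threshold $B^{2/3}/\delta^{2/3}$, I get
\begin{equation*}
\mathbb{P}\!\left(Y \geq B^{2/3}/\delta^{2/3}\right) \leq \frac{\mathbb{E}[Y]}{B^{2/3}/\delta^{2/3}} \leq \delta^{2/3}.
\end{equation*}
Raising both sides of the event $\{Y < B^{2/3}/\delta^{2/3}\}$ to the power $3/2$, which is monotone on $[0,\infty)$, converts it into $\{\norm{\bm{\nabla} f(\bm{x}_\tau)}^2 < B/\delta\}$. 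Hence this event holds with probability at least $1-\delta^{2/3}$, which is exactly $Bound(T)$ as stated.

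There is no real obstacle here beyond bookkeeping: the exponents $4/3$ and $3/2$ were tailored precisely so that a single Markov step converts the moment bound into a tail bound on the squared gradient norm. The remark following the theorem already flags that convergence in expectation of $(\mathbb{E}\norm{\nabla f(\bm{x}_\tau)}^{4/3})^{3/2}$ is the best one can extract here because the coordinate-wise adaptive learning rate prevents the standard trick of re-weighting $\tau$ by $\alpha_k/\sum_t \alpha_t$, and one must invoke Hölder's inequality (as in Lemma \ref{lem1-008}) to split $\norm{\nabla f(\bm{x}_t)}$ from $\norm{\nabla f(\bm{x}_t)}_{\bm{\hat\eta}_t}$. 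So the present probabilistic corollary is essentially a one-line Markov argument layered on top of Theorem \ref{convergence_in_expectation}, and the constants $C$, $C'$, $C_3$ are simply inherited unchanged.
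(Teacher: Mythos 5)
Your proof is correct and is essentially the paper's own argument: the paper sets $\zeta = \norm{\bm{\nabla} f(\bm{x}_\tau)}^2$, applies Chebyshev's (i.e.\ Markov's) inequality to $|\zeta|^{2/3}$ with threshold $C(T)^{2/3}/\delta^{2/3}$, and then raises to the $3/2$ power---exactly your $Y = \norm{\bm{\nabla} f(\bm{x}_\tau)}^{4/3}$ step with the same threshold. The constants $C$, $C'$, $C_3$ are inherited unchanged from Theorem~\ref{convergence_in_expectation} in both.
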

\begin{proof}
Denote the right hand side of Eq.~\eqref{1-058} as $C(T)$. Let $\zeta = \norm{\nabla f(x_\tau)}^2$. By Theorem \ref{convergence_in_expectation} we have $\mathbb{E}\left[|\zeta|^{2/3}\right] \leq C(T)^{2/3}$. Let $\mathcal{P}$ denote the probability measure. By Chebyshev's inequality, we have
\begin{equation}
    \mathcal{P}\left(|\zeta|^{2/3} > \frac{C(T)^{2/3}}{\delta^{2/3}} \right) \leq \frac{\mathbb{E}\left[|\zeta|^{2/3}\right]}{\frac{C(T)^{2/3}}{\delta^{2/3}}} \leq \delta^{2/3}.
\end{equation}
Namely, $\mathcal{P}\left(|\zeta| > \frac{C(T)}{\delta }\right) \leq \delta^{2/3}$. Therefore, $\mathcal{P}\left(|\zeta| \leq \frac{C(T)}{\delta }\right) \geq 1-\delta^{2/3}$. This completes the proof.
\end{proof}

\subsection{Proof of Corollary \ref{constant-theta}}
\begin{corollary*}
Take $\alpha_t = \eta/t^s$ with $0\leq s < 1$. Suppose $\lim_{t\to\infty} \theta_t = \theta < 1$.Then $Bound(T)$ in Theorem \ref{thm1-001} is bounded from below by constants
\begin{equation}
Bound(T) \geq \frac{C'\sqrt{1-\theta}}{\delta}.
\end{equation}
In particular, when $\theta_t = \theta < 1$, we have the following more subtle estimate on lower and upper-bounds for $Bound(T)$:
\begin{equation*}
\frac{C}{\delta \eta T^{1-s}} +\frac{C'\sqrt{1-\theta}}{\delta}\leq Bound(T) \!\leq\! \frac{C}{\delta \eta T^{1-s}} \!+\! \frac{C'\sqrt{1\!-\!\theta}}{\delta(1-s)}.
\end{equation*}
\end{corollary*}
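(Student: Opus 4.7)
The plan is to substitute the specific parameter choices directly into the explicit formula for $Bound(T)$ provided by Theorem \ref{thm1-001} and then rely on two standard partial-sum estimates for $\sum_{t=1}^T t^{-s}$.

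First, with $\alpha_t = \eta/t^s$ the denominator simplifies cleanly: $T\alpha_T = \eta T^{1-s}$, so that
\begin{equation*}
Bound(T) = \frac{C}{\delta \eta T^{1-s}} + \frac{C'}{\delta \eta T^{1-s}}\sum_{t=1}^T \frac{\sqrt{1-\theta_t}}{t^s}.
\end{equation*}
Since by restriction (R2) the sequence $\{\theta_t\}$ is non-decreasing with limit $\theta$, we have $\theta_t \leq \theta$ for every $t$, hence $\sqrt{1-\theta_t} \geq \sqrt{1-\theta}$. Combined with the elementary bound $\sum_{t=1}^T t^{-s} \geq T \cdot T^{-s} = T^{1-s}$ (since every term is at least $T^{-s}$), this gives
\begin{equation*}
\frac{1}{T^{1-s}}\sum_{t=1}^T \frac{\sqrt{1-\theta_t}}{t^s} \geq \sqrt{1-\theta},
\end{equation*}
and therefore $Bound(T) \geq \frac{C'\sqrt{1-\theta}}{\delta}$, as claimed in the first assertion.

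For the special case $\theta_t \equiv \theta$, the factor $\sqrt{1-\theta_t}$ pulls out of the sum, reducing the problem to two-sided control of $\frac{1}{T^{1-s}}\sum_{t=1}^T t^{-s}$. The lower bound is exactly the one used above. For the matching upper bound, I would compare with an integral: since $x^{-s}$ is decreasing on $(0,\infty)$,
\begin{equation*}
\sum_{t=1}^T t^{-s} \leq \int_0^T x^{-s}\, dx = \frac{T^{1-s}}{1-s},
\end{equation*}
so dividing by $T^{1-s}$ yields $\frac{1}{1-s}$. Plugging these two estimates back into the formula for $Bound(T)$ produces the displayed two-sided inequality.

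There is no real obstacle here; the argument is essentially bookkeeping plus the two partial-sum estimates. The only small subtlety is to notice that the monotonicity of $\theta_t$ (ensured by (R2)) is what allows the one-sided bound $\sqrt{1-\theta_t}\geq\sqrt{1-\theta}$ in the general case, so the first claim follows without assuming $\theta_t$ is constant.
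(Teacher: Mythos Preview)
Your proof is correct and follows the same approach as the paper, which likewise uses $\theta_t\leq\theta$ (from the monotonicity in (R2)) together with the sandwich $1\leq \frac{1}{T^{1-s}}\sum_{t=1}^T t^{-s}\leq \int_0^1 x^{-s}\,dx=\frac{1}{1-s}$. One small slip: in your first display the $\eta$ coming from $\alpha_t=\eta/t^s$ in the numerator cancels the $\eta$ in $T\alpha_T=\eta T^{1-s}$, so the second term should read $\frac{C'}{\delta T^{1-s}}\sum_{t=1}^T\frac{\sqrt{1-\theta_t}}{t^s}$; your subsequent conclusions are unaffected.
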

\begin{proof}
Since $\lim_{t\to\infty} \theta_t = \theta$, and $\theta_t$ is non-decreasing, we have $(1-\theta_t) \geq 1-\theta$. Hence, by Theorem \ref{thm1-001}, it holds
\begin{align}\label{3-010}
Bound(T)
& \geq \frac{C}{\delta\eta T^{1-s}} + \frac{C'\sqrt{1-\theta}}{\delta}\big(\frac{\sum_{t=1}^Tt^{-s}}{T^{1-s}}\big) \nonumber\\
& \geq \frac{C'\sqrt{1-\theta}}{\delta}.
\end{align}
If, in particular, $\theta_t = \theta < 1$, by Theorem \ref{thm1-001} we have
\begin{equation}\label{3-011}
Bound(T) = \frac{C}{\delta\eta T^{1-s}} + \frac{C'\sqrt{1-\theta}}{\delta}\big(\frac{\sum_{t=1}^Tt^{-s}}{T^{1-s}}\big).\end{equation}
Note that
\begin{equation} \label{3-012}
1\!\leq\! \frac{{\textstyle\sum_{t=1}^T}t^{-s}}{T^{1-s}}\!=\! {\textstyle\sum\limits_{t=1}^T} \big(\frac{t}{T}\big)^{-s}\frac{1}{T} \!\leq\! \int_0^1 x^{-s}dx \!=\! \frac{1}{1\!-\!s}.
\end{equation}
Combining Eqs.~\eqref{3-011}-\eqref{3-012}, we obtain the desired result.
\end{proof}

\subsection{Proof of Corollary \ref{poly-setting}}\label{proof-last}
\begin{corollary*}
Generic Adam with the above family of parameters converges as long as $0 < r \leq 2s < 2$, and its non-asymptotic convergence rate is given by
\begin{equation*}
\norm{\bm{\nabla} f(\bm{x}_\tau)}^2 \leq \left\{\begin{aligned}
& \mathcal{O}(T^{-r/2}), \quad &   r/2 + s < 1 \\
& \mathcal{O}(\log(T)/T^{1-s}), \quad &  r/2 + s = 1 \\
& \mathcal{O}(1/T^{1 - s}), \quad &  r/2 + s > 1
\end{aligned}\right..
\end{equation*}
\end{corollary*}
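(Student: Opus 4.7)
The plan is to apply Theorem \ref{thm1-001} directly to the specific family of parameters defined in Eq.~\eqref{convergence-rate-parameter}. First I would verify that these parameters satisfy restrictions (R1)--(R3). Restriction (R1) is immediate for any fixed $\beta < 1$. Restriction (R2) follows because $\theta_t$ is constant on $1 \leq t < K$, then non-decreasing for $t \geq K$, with $\lim_{t\to\infty} \theta_t = 1 > \beta^2$. For (R3), the key computation is
\[
\chi_t = \frac{\alpha_t}{\sqrt{1-\theta_t}} = \begin{cases} \eta K^{r/2}/(\sqrt{\alpha}\, t^s), & t < K, \\ (\eta/\sqrt{\alpha})\, t^{r/2 - s}, & t \geq K. \end{cases}
\]
Under the hypothesis $r \leq 2s$, the second branch is non-increasing in $t$, the first branch is already decreasing in $t$, and the two pieces agree at $t = K$. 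Hence $\chi_t$ is globally non-increasing, and one may take $a_t = \chi_t$, $C_0 = 1$ in (R3).

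With the restrictions verified, I would plug the parameters directly into the bound of Theorem \ref{thm1-001}. The denominator becomes $T\alpha_T = \eta\, T^{1-s}$, so the first constant already contributes $C/(T\alpha_T) = \Theta(T^{-(1-s)})$. The essential object to estimate is
\[
S_T \;:=\; \sum_{t=1}^T \alpha_t \sqrt{1 - \theta_t}.
\]
The contribution from $t < K$ is a bounded constant absorbable into $C$, so the behavior is governed by $\eta\sqrt{\alpha}\sum_{t=K}^T t^{-(s + r/2)}$. Setting $\rho := s + r/2$, a standard integral comparison for $p$-series yields $S_T = \Theta(T^{1-\rho})$ when $\rho < 1$, $S_T = \Theta(\log T)$ when $\rho = 1$, and $S_T = \Theta(1)$ when $\rho > 1$.

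The rest is bookkeeping. Dividing $S_T$ by $T\alpha_T = \eta\, T^{1-s}$ and combining with the $\Theta(T^{-(1-s)})$ contribution of the constant term gives the three claimed rates. When $\rho < 1$, i.e.\ $r/2 + s < 1$, one finds $S_T/(T\alpha_T) = \Theta(T^{-r/2})$, which dominates $\Theta(T^{-(1-s)})$ precisely because $r/2 < 1-s$ in this regime; hence the $\mathcal{O}(T^{-r/2})$ rate. When $\rho = 1$, $S_T/(T\alpha_T) = \Theta(\log(T)/T^{1-s})$ dominates the constant contribution, giving $\mathcal{O}(\log(T)/T^{1-s})$. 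When $\rho > 1$, $S_T$ is bounded and the entire bound collapses to $\Theta(T^{-(1-s)})$.

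Since Theorem \ref{thm1-001} has already absorbed all the probabilistic and algorithmic work, the remaining steps are routine $p$-series tail estimates. The only genuine subtlety is identifying the dominant term in each regime and confirming that the constraint $r \leq 2s$ is exactly what makes the separator between the first two regimes ($r/2 + s < 1$ vs. $= 1$) consistent with $r > 0$; beyond that, I do not anticipate any real obstacle.
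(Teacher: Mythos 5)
Your proposal is correct and follows essentially the same route as the paper: verify the restrictions, plug the parameter family into Theorem~\ref{thm1-001}, estimate $\sum_{t=1}^T \alpha_t\sqrt{1-\theta_t}$ by a $p$-series tail bound, and divide by $T\alpha_T = \eta T^{1-s}$. The only difference is that you explicitly verify (R1)--(R3) and check term dominance in each regime, whereas the paper relegates the former to the discussion around Eq.~\eqref{convergence-rate-parameter} and leaves the latter implicit; your last remark about the role of $r \leq 2s$ is slightly muddled (its job is to make $\chi_t$ non-increasing, not to govern the regime boundary), but this does not affect the argument.
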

\begin{proof}
It is not hard to verify that the following equalities hold:
\begin{align*}
\textstyle\sum_{t=K}^T \alpha_t\sqrt{1-\theta_t}
&= \eta\sqrt{\alpha}\textstyle\sum_{t=K}^T t^{-(r/2 + s)} \nonumber\\
&=\left\{
\begin{aligned}
& \mathcal{O}(T^{1-(r/2 +s)}),  & r/2 + s < 1 \\
& \mathcal{O}(\log(T)),  &  r/2 + s = 1 \\
& \mathcal{O}(1),  & r/2 + s > 1
\end{aligned}
\right..
\end{align*}
In this case, $T\alpha_T = \eta T^{1-s}$.
Therefore, by Theorem \ref{thm1-001} the non-asymptotic convergence rate is given by
\[
\norm{\bm{\nabla} f(\bm{x}_\tau)}^2 \leq \left\{
\begin{aligned}
& \mathcal{O}(T^{-r/2}),  &  r/2 + s < 1 \\
& \mathcal{O}(\log(T)/T^{1-s}),  & r/2 + s = 1 \\
& \mathcal{O}(1/T^{1-s}),  & r/2 + s > 1
\end{aligned}
\right..\]
To guarantee convergence, then $0 < r \leq 2s < 2$.
\end{proof}

\subsection{Proof of Corollary \ref{polyweights}}
\begin{corollary*}
Suppose that in Weighted AdaEMA the weights $w_t = t^r$ for $r\!\geq\! 0$, and $\alpha_t \!=\!\eta/\sqrt{t}$. Then Weighted AdaEMA has the $\mathcal{O}(\log(T)/\sqrt{T})$ non-asymptotic convergence rate.
\end{corollary*}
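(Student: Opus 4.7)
The plan is to route through the equivalence with Generic Adam established in Proposition \ref{equivalence-theorem} and then invoke the non-asymptotic bound of Theorem \ref{thm1-001} together with the sufficient condition in Corollary \ref{convergence}. Specifically, given the weights $w_t = t^r$ in Weighted AdaEMA, I would set $W_t = 1 + \sum_{i=1}^t i^r$ and translate back into Generic Adam by choosing $\theta_t = W_{t-1}/W_t$, so that $1 - \theta_t = w_t/W_t$. The base learning rate $\alpha_t = \eta/\sqrt{t}$ and the momentum sequence $\{\beta_t\}$ (satisfying (R1)) are kept the same.

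Next I would estimate $W_t$ by comparison with the integral $\int_1^t x^r\,dx$: there exist constants $c_1(r), c_2(r) > 0$ such that $c_1 t^{r+1} \leq W_t \leq c_2 t^{r+1}$ for all $t \geq 1$ (with the case $r=0$ giving $W_t = t+1$). This yields $1 - \theta_t = t^r/W_t = \Theta(1/t)$, in particular $1 - \theta_t$ is decreasing in $t$ (so $\theta_t$ is non-decreasing) and $\theta_t \to 1$, which automatically dominates $\beta^2$ for $t$ large. Conditions 1 and 2 of Corollary \ref{convergence} are thereby verified, and condition (R2) is satisfied. For condition 3, I compute
\begin{equation*}
\chi_t = \frac{\alpha_t}{\sqrt{1-\theta_t}} = \frac{\eta}{\sqrt{t}}\sqrt{\frac{W_t}{t^r}} = \eta\sqrt{\frac{W_t}{t^{r+1}}},
\end{equation*}
which by the two-sided estimate on $W_t$ lies between $\eta\sqrt{c_1}$ and $\eta\sqrt{c_2}$. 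So $\chi_t$ is bounded above and below by positive constants; taking $a_t \equiv \eta\sqrt{c_1}$ and $C_0 = \sqrt{c_2/c_1}$ shows that $\chi_t$ is ``almost'' non-increasing in the sense of (R3).

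It remains to bound the ratio $\big(\sum_{t=1}^T \alpha_t\sqrt{1-\theta_t}\big)/(T\alpha_T)$ appearing in Theorem \ref{thm1-001}. Using $1-\theta_t = \Theta(1/t)$, I get $\alpha_t\sqrt{1-\theta_t} = \Theta(1/t)$, hence $\sum_{t=1}^T \alpha_t\sqrt{1-\theta_t} = \Theta(\log T)$. Combined with $T\alpha_T = \eta\sqrt{T}$, this gives the bound $\mathcal{O}(\log(T)/\sqrt{T})$ for $Bound(T)$, and condition 4 of Corollary \ref{convergence} is satisfied as a byproduct. Plugging into Theorem \ref{thm1-001} then yields the claimed non-asymptotic rate $\norm{\bm{\nabla} f(\bm{x}_\tau)}^2 = \mathcal{O}(\log(T)/\sqrt{T})$ with high probability.

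The main obstacle, though essentially routine, is just verifying the ``almost'' non-increasing property of $\chi_t$ cleanly: because $\chi_t$ itself is not monotone in general (its value oscillates within a bounded band set by the integral comparison), one must produce an explicit non-increasing envelope $\{a_t\}$ with constants $c_1, c_2$ depending only on $r$. Once this bookkeeping is done, everything else is a direct application of the sufficient condition and the bound in Theorem \ref{thm1-001}.
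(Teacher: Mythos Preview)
Your proposal is correct and follows essentially the same route as the paper: translate to Generic Adam via $\theta_t = W_{t-1}/W_t$, observe $1-\theta_t = \Theta(1/t)$ so that $\chi_t$ is bounded (hence ``almost'' non-increasing, the paper simply takes $a_t \equiv 1$), and then apply Theorem \ref{thm1-001} to obtain $\mathcal{O}(\log(T)/\sqrt{T})$ from $\sum_{t=1}^T \alpha_t\sqrt{1-\theta_t} = \mathcal{O}(\log T)$ and $T\alpha_T = \eta\sqrt{T}$. Your version is slightly more explicit about the integral comparison and the envelope constants, but the argument is the same.
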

\begin{proof}
By the proof procedures of Theorem \ref{equivalence-theorem}, the equivalent Generic Adam has the parameters $\theta_t = W_{t-1}/W_t$, where $W_t = 1 + \sum_{i=1}^t w_i$.
Hence, it holds
\[ 1 - \theta_t = \frac{w_t}{W_t} = \frac{t^{r}}{1 + \sum_{i=1}^t i^r} = \mathcal{O}(1/t). \]
We have that $\lim_{t\to\infty} \theta_t = 1 > \beta$ and $\theta_t$ is increasing. In addition, we have that $\chi_t = \alpha_t/\sqrt{1-\theta_t}$ is bounded, and hence ``almost" non-increasing (by taking $a_t = 1$ in (R3)). The restrictions (R1)-(R3) are all satisfied. Hence, we can apply Theorem \ref{thm1-001} in this case. It follows that its convergence rate is given by
\[
\mathcal{O}\big(\frac{\sum_{i=1}^T \alpha_t\sqrt{1-\theta_t}}{{T}\alpha_T}\big)
= \mathcal{O} \big(\frac{\sum_{t=1}^T 1/t}{\sqrt{T}}\big)
= \mathcal{O}\big(\frac{\log(T)}{\sqrt{T}}\big).
\]
The proof is completed.
\end{proof}

\section{Experimental Implementations}\label{LeNet-and-ResNet}
In this section, we describe the statistics of the training and validation datasets of MNIST\footnote{http://yann.lecun.com/exdb/mnist/} and CIFAR-100\footnote{https://www.cs.toronto.edu/~kriz/cifar.html}, the architectures of LeNet and ResNet-18, and detailed implementations.

\subsection{Datasets}
MNIST \cite{lecun2010mnist} is composed of ten classes of digits among $\{0, 1, 2, \ldots, 9\}$, which includes 60,000 training examples and 10,000 validation examples. The dimension of each example is $28 \times 28$.

CIFAR-100 \cite{lecun2010mnist} is composed of 100 classes of $32\times 32$ color images. Each class includes 6,000 images.
In addition, these images are devided into 50,000 training examples and 10,000 validation examples.

\subsection{Architectures of Neural Networks}
LetNet \cite{lecun1998gradient} used in the experiments is a five-layer convolutional neural network with ReLU activation function whose detailed architecture is described in~\cite{lecun1998gradient}. The batch size is set as $64$. The training stage lasts for $100$ epochs in total. No $\ell_2$ regularization on the weights is used.

ResNet-18 \cite{he2016deep} is a ResNet model containing 18 convolutional layers  for CIFAR-100 classification~\cite{he2016deep}. Input images are down-scaled to $1/8$ of their original sizes after the 18 convolutional layers, and then fed into a fully-connected layer for the 100-class classification. The output channel numbers of 1-3 conv layers, 4-8 conv layers, 9-13 conv layers, and 14-18 conv layers are $64$, $128$, $256$, and $512$, respectively.  The batch size is $64$. The training stage lasts for $100$ epochs in total. No $\ell_2$ regularization on the weights is used.

\subsection{Additional Experiments of ResNet-18 on CIFAR-100}

We further illustrate Generic Adam with different $r=\{0, 0.25, 0.5, 0.75, 1\}$, RMSProp, and AMSGrad with an alternative base learning rate $\alpha = 0.01$ on ResNet-18.  We do cut-off by taking $\alpha_t = 0.001$ if $t < 2500$. Note that $\alpha_t$ is still non-increasing. The motivation is that at the very beginning the learning rate $\alpha_{t} = \frac{0.01}{\sqrt{t}}$ could be large which would deteriorate the performance. The performance profiles are also exactly in accordance with the analysis in theory, \textit{i.e.}, larger $r$ leads to a faster training process.

\begin{figure*}[htpb]
\centering
\subfigure[]{\includegraphics[width=0.32\linewidth]{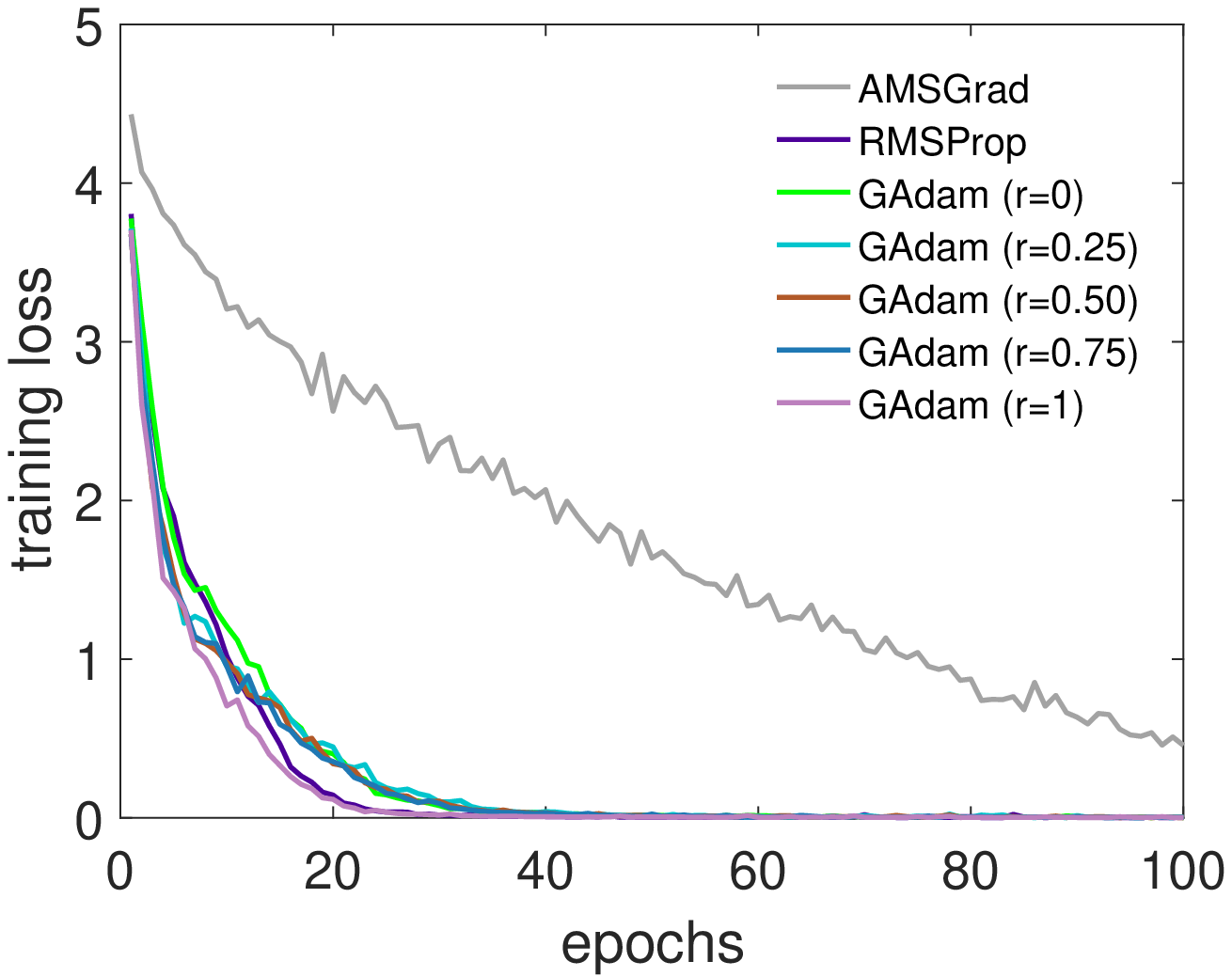}}\label{fig:resnet_a_01}
\subfigure[]{\includegraphics[width=0.32\linewidth]{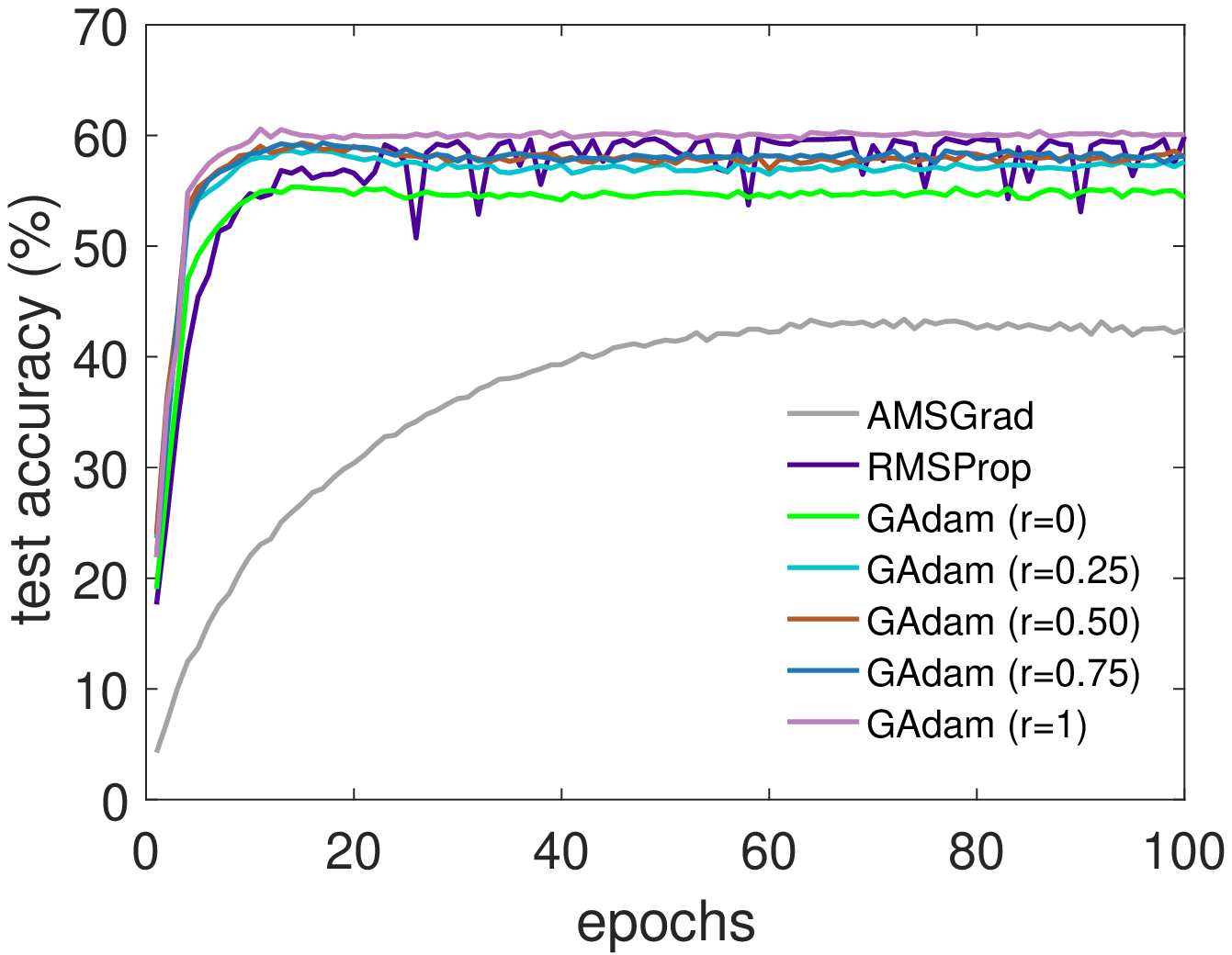}}\label{fig:resnet_b_01}
\subfigure[]{\includegraphics[width=0.32\linewidth]{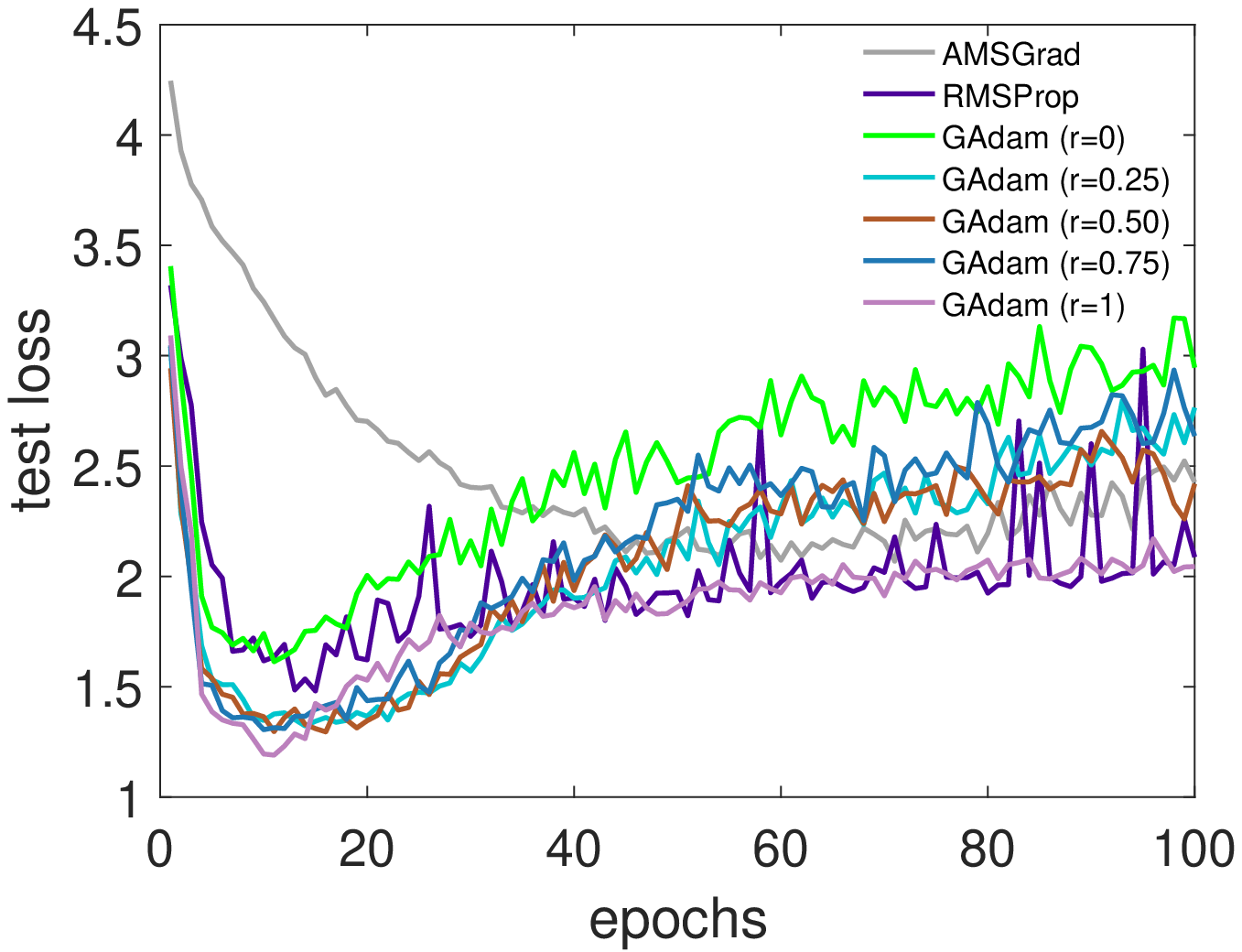}}\label{fig:resnet_c_01}
\caption{Performance profiles of Generic Adam with $r=\{0, 0.25, 0.5, 0.75, 1\}$, RMSProp, and AMSGrad for training ResNet on the CIFAR-100 dataset. Figures (a), (b), and (c) illustrate training loss vs. epochs, test accuracy vs. epochs, and test loss vs. epochs, respectively.}
\label{fig:ResNet-01}
\end{figure*}
\end{document}